\newcommand{\cG}{\mathcal{G}}
\newcommand{\cH}{\mathcal{H}}
\newcommand{\cI}{\mathcal{I}}
\newcommand{\cM}{\mathcal{M}}
\newcommand{\cP}{\mathcal{P}}
\newcommand{\cT}{\mathcal{T}}
\newcommand{\cS}{\mathcal{S}}
\newcommand{\cC}{\mathcal{C}}
\newcommand{\bU}{\mathbf{U}}
\newcommand{\bF}{\mathbf{F}}
\newcommand{\bX}{\mathbf{X}}
\newcommand{\bY}{\mathbf{Y}}
\newcommand{\bZ}{\mathbf{Z}}
\newcommand{\bS}{\mathbf{S}}
\newcommand{\bV}{\mathbf{V}}
\newcommand{\bD}{\mathbf{D}}
\newcommand{\bB}{\mathbf{B}}
\newcommand{\bE}{\mathbf{E}}
\newcommand{\bT}{\mathbf{T}}
\newcommand{\bN}{\mathbf{N}}
\newcommand{\bC}{\mathbf{C}}
\newcommand{\bI}{\mathbf{I}}
\newcommand{\bbT}{\mathbb{T}}
\newcommand{\rmT}{\mathrm{T}}
\newcommand{\cRB}{\mathcal{RB}}
\newcommand{\cRD}{\mathcal{RD}}
\newcommand{\cRA}{\mathcal{RA}}
\newcommand{\cperp}{\perp\!\!\!\perp}
\newcommand{\ncperp}{\not\!\perp\!\!\!\perp}
\begin{document}

\title{Design of Experiment for Discovering Directed Mixed Graph}

\author{\name Haijie Xu \email xu-hj22@mails.tsinghua.edu.cn \\
       \addr Department of Industrial Engineering\\
       Tsinghua University
       \AND
       \name Chen Zhang \email zhangchen01@tsinghua.edu.cn \\
       \addr Department of Industrial Engineering\\
        Tsinghua University}

\editor{My editor}

\maketitle

\begin{abstract}
We study the problem of experimental design for accurately identifying the causal graph structure of a simple structural causal model (SCM), where the underlying graph may include both cycles and bidirected edges induced by latent confounders. The presence of cycles renders it impossible to recover the graph skeleton using observational data alone, while confounding can further invalidate traditional conditional independence (CI) tests in certain scenarios.
To address these challenges, we establish lower bounds on both the maximum number of variables that can be intervened upon in a single experiment and the total number of experiments required to identify all directed edges and non-adjacent bidirected edges. Leveraging both CI tests and do-see tests, and accounting for $d$-separation and $\sigma$-separation, we develop two classes of algorithms—bounded and unbounded—that can recover all causal edges except for double-adjacent bidirected edges. We further show that, up to logarithmic factors, the proposed algorithms are tight with respect to the derived lower bounds.
\end{abstract}

\begin{keywords}
  causal structure learning, directed mixed graph, design of experiment, 
\end{keywords}

\section{Introduction}
\label{sec:introduction}
Learning causal relationships among multiple variables is a central problem in causal inference \citep{pearl2009causality}. These relationships are often represented by a graph, where nodes denote variables and edges encode causal dependencies. When the graph includes both directed edges and bidirected edges, it is referred to as a directed mixed graph (DMG). A directed edge from variable $X$ to variable $Y$ indicates that $X$ is a direct cause of $Y$, whereas a bidirected edge between $X$ and $Y$ signifies the existence of an unobserved confounder that directly influences both variables.
Most existing work on causal structure learning focuses on directed acyclic graphs (DAGs), which exclude both cycles and bidirected edges \citep{nMsystem2015,kocaoglu2017cost,squires2020active,choo2023subset}. However, such restrictions are often unrealistic in real-life systems, where feedback loops and unmeasured confounding are common.

For example, greenhouse gas emissions contribute to global warming, which in turn accelerates the melting of permafrost. The thawing permafrost then releases additional greenhouse gases, forming a causal feedback loop. Furthermore, if there exist unobserved confounders—such as ozone layer depletion—that directly cause both global warming and permafrost melt, then both cycles and confounders must be accounted for in the corresponding causal network \citep{hodgkins2014changes}.
Another example arises in studies involving HIV-positive individuals. Drug use can increase the risk of HIV infection, which may lead to emotional distress. This emotional distress can, in turn, elevate the likelihood of further drug use, forming a self-reinforcing causal cycle. Concurrently, unobserved factors such as family environment or socioeconomic status may confound the relationship by influencing both emotional well-being and substance use \citep{jain2021risk,brickman2017association}.
Such patterns of cycles and unmeasured confounding are prevalent across various domains, including the social sciences, electrical feedback systems, and gene regulatory networks \citep{mason2014feedback,goldberger1972structural,rohbeck2024bicycle,nilsson2022artificial}.

Incorporating cycles or confounders into a DMG poses significant challenges for causal structure learning. Even in the presence of cycles alone, recovering the skeleton of the network (i.e., the undirected version of the graph) from observational data becomes impossible—a task that remains tractable in DAGs \citep{mokhtarian2023unified,mokhtarian2021recursive,pearl2009causality}. Furthermore, the standard $d$-Markov property, which underpins many causal inference methods, no longer holds and must be replaced by its generalization—the $\sigma$-Markov property (see Section \ref{sec:formulation separation} for details).
When considering only latent confounders (in the absence of cycles), additional difficulties arise. In particular, adjacent bidirected edges—those connecting nodes that are also linked via directed paths—cannot be identified using CI tests alone \citep{NIPS2017_291d43c6}. Detecting such edges requires more powerful tools, such as do-see tests.
At the core of these challenges lies the existence of inducing paths introduced by cycles or confounders \citep{mooij2020constraint}. These paths can create conditional dependencies between pairs of variables that are not directly connected, regardless of the conditioning set. Consequently, observational data and standard CI-based methods are often insufficient for reliably recovering the underlying causal structure.


Uncovering the true causal structure of a system often necessitates intervention, whereby selected variables are deliberately manipulated to observe the resulting changes in system behavior. This task—commonly referred to as the experiment design problem—aims to identify a set of interventions sufficient to recover the underlying causal graph. Given that such experiments are typically resource-intensive and time-consuming, a central goal is to devise strategies that minimize the number of required interventions.

The experiment design problem has been studied extensively in the context of DAGs \citep{nMsystem2015,choo2023subset,squires2020active,agrawal2019abcd}. Some prior work has also addressed settings involving either latent confounders \citep{NIPS2017_291d43c6,addanki2020efficient} or cycles \citep{mokhtarian2023unified}, but not both simultaneously. When both cycles and confounders are present, however, the problem becomes substantially more complex. As we show in Section \ref{sec: lower bound}, the fundamental lower bounds on both the maximum allowable intervention size per experiment and the total number of experiments are strictly higher than in settings with only cycles or only confounders.



\begin{table}[htbp]
\centering

\begin{tabular}{l|l|c|c}

\hline
 & \textbf & \textbf{Max experiment size} & \textbf{Number of experiments} \\
\hline
\multirow{3}{*}{\textbf{Directed edges}}& Unbounded alg. &$n-1$ & Corollary \ref{coro: Upper unbounded D cI}\\
~ & Bounded alg. &$M$ in Remark \ref{remark: M setting} & Corollary \ref{coro: Upper bounded D cI}\\
~&Lower bound & Theorem \ref{thm:lower D bI} & Theorem \ref{thm:lower D cI}\\
\hline
\textbf{Non-adjacent}& Unbounded alg. &$n-1$ & Propositions \ref{prop: non-ad separating cc} and \ref{prop: non-ad separating random}\\
\multirow{2}{*}{\textbf{bidirected edges}} & Bounded alg. &$M$ in Remark \ref{remark: M setting} & Theorem \ref{thm: boun 2.1}\\
~&Lower bound & Theorem \ref{thm:lower B bI} & Theorem \ref{thm:lower B cI}\\
\hline

\textbf{Adjacent}& Unbounded alg. &$n-1$ & Propositions \ref{prop: step 2.2 min strong EC} and \ref{prop: step 2.2 strong EC}\\
\textbf{bidirected edges} & Bounded alg. &$M$ in Remark \ref{remark: M setting} & Theorem \ref{thm: boun 2.2}\\ 

\hline
\multirow{2}{*}{$\cRD(\cG)$}& Unbounded alg. &$n-1$ & Corollary \ref{coro: Upper unbounded DB cI}\\
~ & Bounded alg. &$M$ in Remark \ref{remark: M setting} & Corollary \ref{coro: Upper bounded DB cI}\\ 
\hline
\end{tabular}

\caption{Main contributions of this paper. It provides the lower bound and the upper bound of the bounded or unbounded algorithm of the maximum size experiment and number of experiments for discovering each part of the DMG $\cG$. Here $n$ is the number of nodes in $\cG$, $M$ is a certain upper bound of experiment size in Remark \ref{remark: M setting} and $\cRD(\cG)$ is the $\cG$ without the double adjacent bidirected edges which is defined in Section \ref{sec: problem description}. Since the CI test fails when identifying adjacent bidirected edges, the lower bound is not well-defined in this case and thus is not discussed further.}
\label{tab: main contribution}
\end{table}

To the best of our knowledge, this is the first work to propose a unified framework for discovering DMGs under the simultaneous presence of cycles and confounders. Our main contributions are as follows:
\begin{itemize}
    \item 
    We propose a novel experiment design algorithm for learning the structure of a DMG $\cG$, which serves as the causal graph of a simple SCM (Definition \ref{def: simple SCM}). The algorithm proceeds in three stages. In Step 0, we use observational data to obtain an initial estimate of the graph structure, denoted by $\cG_r^{obs}$. In Step 1, we exploit $\cG_r^{obs}$ to design interventions aimed at uncovering ancestor relationships, which are used to construct the SCC-Anc partition (Definition \ref{def: scc-anc partition}). Based on this, we develop a novel SCC-Anc separating system (Definition \ref{def: scc-anc separating system}) that enables the recovery of all directed edges in the DMG. In Step 2, we introduce two additional separating systems—the non-adjacent separating system and the adjacent separating system (Definitions \ref{def: non-adjacent separating system} and \ref{def: adjacent separating system})—and design corresponding interventions. By combining these interventions with CI tests and do-see tests, our method identifies all non-adjacent bidirected edges and the majority of adjacent bidirected edges.
    \item 
    We derive worst-case lower bounds on both the maximum experiment size per round and the total number of experiments required to identify all directed edges (Theorems \ref{thm:lower D bI} and \ref{thm:lower D cI}) and all non-adjacent bidirected edges (Theorems \ref{thm:lower B bI} and \ref{thm:lower B cI}) in the presence of cycles and confounders. By comparing these lower bounds with the number of experiments required by our proposed algorithm (Corollaries \ref{coro: Upper unbounded D cI}, \ref{coro: Upper unbounded DB cI}, \ref{coro: Upper bounded D cI}, and \ref{coro: Upper bounded DB cI}), we demonstrate that our results are tight up to logarithmic factors.
    \item 
    We extend our algorithm to a bounded intervention setting, where each experiment is constrained by a predefined upper limit on the number of simultaneously intervened nodes. This is accomplished by modifying the SCC-Anc separating system, the non-adjacent separating system, and the adjacent separating system into their bounded-intervention counterparts (Theorems \ref{thm: boun 1.2}, \ref{thm: boun 2.1} and \ref{thm: boun 2.2}).
\end{itemize}

The main contributions of this paper are summarized in Table \ref{tab: main contribution}. The remainder of the paper is organized as follows. Section \ref{sec:literature} reviews related work. In Section \ref{sec: formulation}, we introduce the necessary preliminaries, including notation, basic assumptions, and a formal statement of the problem. Section \ref{sec: lower bound} presents worst-case lower bounds on the maximum experiment size and the total number of experiments required to identify directed edges and non-adjacent bidirected edges. In Section \ref{sec: unb algorithm}, we propose our algorithm for learning the structure of DMGs, and in Section \ref{sec: boun algorithm}, we extend this algorithm to the bounded intervention setting. Finally, Section \ref{sec:conclusion} concludes the paper and outlines directions for future research.

\section{Related Work}
\label{sec:literature}
\textbf{Causal structure learning without a designed experiment} refers to the task of inferring the existence and direction of edges in a causal graph using either purely observational data or passively collected interventional data—that is, data obtained without actively designing specific interventions. Most existing approaches in this setting can identify only the Markov equivalence class (MEC) of the true causal graph, rather than recovering its exact structure.

Extensive research has been conducted in this area, particularly for DAGs that may involve latent confounders. Representative methods include constraint-based approaches, such as the PC algorithm \cite{spirtes2000causation} and Fast Causal Inference (FCI) algorithm \citep{spirtes2013causal}; score-based approaches, such as the Greedy Equivalence Search (GES) algorithm \citep{chickering2002optimal,huang2018generalized,ogarrio2016hybrid}; and functional causal models that assume a parameterized relationship between causal parents and children. These latter models often introduce additional assumptions, such as non-Gaussian noise \citep{shimizu2006linear,sanchez2019estimating} or nonlinearity \citep{hoyer2008nonlinear,zhang2012identifiability}, to achieve unique identifiability of the causal structure. 

More recently, increasing attention has been devoted to extending these frameworks to accommodate cyclic causal structures. Within the constraint-based paradigm, \citet{mooij2020constraint} showes that the FCI algorithm, originally developed for DAGs, can be directly applied to graphs with cycles to recover causal structures up to an MEC. In the score-based setting, \citet{ghassami2020characterizing} establishes necessary and sufficient conditions for distributional equivalence between two directed graphs under a linear Gaussian model with cycles, facilitating causal discovery under such assumptions. \citet{semnani2025causal} further proposes a generalized version of the GES algorithm for cyclic graphs, building on a refined characterization of the MEC for graphs with feedback loops \citep{claassen2023establishing}. In the realm of functional causal models, \citet{lacerda2012discovering} extends the linear non-Gaussian model from \citet{shimizu2006linear} to handle settings with cyclic dependencies.


\textbf{Causal structure learning with design of experiment} refers to approaches that actively construct interventions with the goal of uniquely identifying the underlying causal graph from interventional data. This line of work can be further categorized into two main problem settings: (i) minimizing the number of experiments required to fully recover the causal graph, and (ii) given a fixed experimental budget, designing interventions that minimize the remaining uncertainty within the MEC after experimentation.

Research on the first class of problems—minimizing the number of experiments required to uniquely identify the causal structure—has primarily focused on DAGs without cycles or latent confounders. \cite{eberhardt2012number} is among the first to establish a worst-case lower bound on the number of interventions required, under the constraint that each intervention may target at most half of the nodes. Subsequently, \cite{he2008active} investigates the setting where only one node can be intervened upon per experiment, proposing two types of algorithms: adaptive algorithms, which update the intervention strategy based on observed outcomes after each experiment, and non-adaptive algorithms, which predefine all interventions before observing any outcomes. \cite{nMsystem2015} further refines the lower bound proposed by \cite{eberhardt2012number} by introducing the notion of an $(n,M)$-separating system. \cite{greenewald2019sample} develops an adaptive algorithm tailored to tree-structured causal graphs, accommodating noisy interventional outcomes. This work is later extended by \cite{squires2020active} to general causal structures, and an instance-specific lower bound on the number of interventions was derived. More recently, \cite{choo2022verification,choo2023subset} distinguish between verification and adaptive search: the former verifies whether a given graph is compatible with experimental data, while the latter actively seeks to identify the true causal structure. They propose dedicated algorithms for both settings and extend the instance-specific lower bounds introduced by \cite{squires2020active}. Building on these insights, \cite{choo2023adaptivity} proposes an $r$-adaptive algorithm, which interpolates between non-adaptive and fully adaptive strategies by fixing the number of observation rounds $r$, and designing how many experiments to perform in each round to minimize the total number of experiments.
However, the aforementioned methods largely neglect the presence of confounders and cycles. \cite{kocaoglu2017experimental} proposes a stage-wise algorithm to address the bidirected edges induced by latent confounders, introducing the do-see test to determine whether a particular bidirected edge exists. Extending this line of work, \cite{addanki2020efficient} also considers confounders but focuses on minimizing the total intervention cost, where each node is associated with a fixed cost.
Finally, \cite{mokhtarian2023unified} proposes a two-step procedure for identifying directed graphs in the presence of cycles, without considering confounders. To date, this remains the only study that addresses experimental design for causal structure learning in cyclic graphs.

Research on the second class of problems—designing interventions under a fixed budget to minimize post-intervention uncertainty—has thus far focused exclusively on settings without confounders or cycles. One of the earliest contributions is by \cite{hauser2014two}, who proposes an optimal algorithm for the case where each experiment intervenes on a single node. Their algorithm minimizes the number of unresolved edges remaining in the causal graph after a fixed number of interventions.
Building on this, \citet{ghassami2018budgeted} formally defines the problem by treating the expected number of unresolved edges in the post-intervention MEC as the objective function. They showed that this objective is submodular, enabling efficient approximation through a greedy algorithm. Consequently, the focus shifted to estimating the number of unresolved edges remaining in the MEC after intervention, which they addressed via Monte Carlo simulation.
Subsequent work has sought to improve the efficiency of this estimation. \cite{ghassami2019counting} introduces a uniform sampling technique over clique trees, significantly accelerating the generation of DAGs within a given MEC, and thus enhancing the scalability of intervention planning. Later, \citet{ahmaditeshnizi2020lazyiter} proposes a systematic enumeration method that iterates through all DAGs in the post-intervention MEC, enabling exact evaluation of remaining uncertainty and yielding provably optimal solutions under fixed-budget constraints. More recently, \citet{wienobst2021polynomial, wienobst2023polynomial} demonstrate that the problem is solvable in polynomial time, thereby establishing its tractability from a computational complexity standpoint.

Causal structure learning under experimental design has also been explored from a Bayesian perspective. For instance, \citet{agrawal2019abcd} proposes an adaptive strategy for the fixed-budget setting, offering both computational efficiency and theoretical guarantees via approximate submodularity. In a related direction, \citet{tigas2022interventions} introduces a Bayesian framework that not only selects intervention targets but also determines the specific values to assign to intervened variables, enabling more fine-grained experimental control.

Despite this substantial body of work, all of the aforementioned studies are limited to simplified settings, assuming causal sufficiency and acyclicity. To the best of our knowledge, none address the most general and challenging scenario—causal graphs with both latent confounding and feedback cycles, as captured by DMGs.

\section{Preliminaries and Problem Description}
\label{sec: formulation}

In Section \ref{sec: notation}, we introduce the notation used throughout the paper. Section \ref{sec: SCM} describes the generative model underlying our analysis. In Section \ref{sec:formulation separation}, we present two types of separation rules relevant to our setting. The concept of intervention is formally defined in Section \ref{sec: intervention}. Finally, Section \ref{sec: problem description} provides a formal statement of the problem studied in this work.
\subsection{Preliminary Graph Definitions}
\label{sec: notation}
A \textit{directed mixed graph} (DMG) is a graph $\cG = (\bV, \bD,\bB)$, where $\bV$ is a set of variables, $\bD$ is a set of directed edges over $\bV$, and $\bB$ is a set of bidirected edges over $\bV$.  A directed edge from $X$ to $Y$ is denoted by $(X, Y)$, indicating that $X$ is a parent of $Y$, and $Y$ is a child of $X$. A bidirected edge between $X$ and $Y$ is denoted by $[X, Y]$, indicating that $X$ and $Y$ are siblings, which we interpret as sharing an unobserved confounder. We can divide $\bB$ into the following two disjoint sets: 1) The non-adjacent bidirected edges $\bB^{N} = \{[X,Y]\in \bB| (X,Y)\notin\bD, (Y,X) \notin \bD\}$; 2) The adjacent bidirected edges $\bB^{A} = \bB\setminus\bB^{N}$. 
We define the neighbors of a variable as the union of its parents and children. Throughout this paper, we assume that the DMG contains no self-loops; that is, $(X, X) \notin \bD$ and $[X, X] \notin \bB$ for all $X \in \bV$.
An undirected graph is a graph with undirected edges. We denote an undirected edge between two distinct variables $X$ and $Y$ by $[X,Y]^u$. The \textit{skeleton} of a DMG $\cG$ is an undirected graph $(\bV, \bE)$, where there is an undirected edge $[X,Y]^u$ in $\bE$ if and only if $X$ and $Y$ are neighbors or siblings. A \textit{directed graph} (DG) is a special case of a DMG with no bidirected edges,i.e., $\cG = (\bV, \bD)$. A \textit{directed acyclic graph} (DAG) is a DG that contains no directed cycles.

Let $\cG = (\bV, \bD, \bB)$ be a DMG. A sequence $(X_1, E_1, X_2, E_2, \dots, E_{k-1}, X_k)$ is called a \textit{path} in $\cG$ if $X_i \in \bV$ for all $1 \leq i \leq k$ and each edge $E_i$ is one of $(X_i, X_{i+1})$, $(X_{i+1}, X_i)$, or $[X_i, X_{i+1}]$ for $1 \leq i \leq k-1$. If all $E_i = (X_i, X_{i+1})$, the path is called a \textit{directed path}. A variable $X$ is said to be an ancestor of $Y$, and $Y$ a descendant of $X$, if there exists a directed path from $X$ to $Y$ in $\cG$. By convention, every variable is considered both an ancestor and a descendant of itself. A non-endpoint vertex $X$ on a path is called a collider if both edges incident to $X$ on the path have arrowheads pointing to $X$. A variable $Y$ is said to be strongly connected to $X$ if $Y$ is both an ancestor and a descendant of $X$. We denote the set of parents, children, neighbors, descendants, ancestors, siblings, and strongly connected variables of $X$ in $\cG$ by $Pa_{\cG}(X)$, $Ch_{\cG}(X)$, $Ne_{\cG}(X)$, $De_{\cG}(X)$, $Anc_{\cG}(X)$, $Sib_{\cG}(X)$ and $SCC_{\cG}(X)$, respectively. For notational convenience, these definitions naturally extend to sets of variables, e.g., $Ch_{\cG}(\bX) = \cup_{X\in \bX}Ch_{\cG}(X)$. We similarly extend the notation to edges, e.g. $Pa_\cG([X,Y]) = Pa_\cG(X)\cup Pa_\cG(Y)$.


\begin{definition}[SCC]
    Strongly connected variables of $\cG$ partition $\bV$ into so-called, strongly
connected components (SCCs); two variables are strongly connected if and only if they are
in the same SCC.
\end{definition}

\subsection{Generative Model}
\label{sec: SCM}
We use \textit{Structural causal models} (SCMs, \citet{pearl2009causality}) to describe the causal mechanisms of
a system.
\begin{definition} [SCM]
     An SCM is a tuple $\cM = (\bV,\bU,\bF,P(\bU))$, where $\bV$ is a set of endogenous variables, $\bU$ is a set of exogenous variables with the joint distribution $P(\bU)$ where the variables in $\bU$ are assumed to be jointly independent, and $\bF$ is a set of functions $\{f_X\}_{X\in\bV}$ such that $X = f_{X}(Pa(X),\bU^X)$, where $Pa(X)\subseteq \bV\setminus X$ and $\bU^{X} \subseteq \bU$.
\end{definition}
Let $\cM = (\bV,\bU,\bF,P(\bU))$ is a SCM. Note that we do not make the causal sufficiency assumption, i.e., $\bU^X\cap \bU^{Y}$ are not assumed to be $\varnothing$ for any $X,Y\in\bV$. The causal graph of $\cM$ is a DMG over
$\bV$ with directed edges from $Pa(X)$ to $X$ for each variable $X \in \bV$ and bidirected edges between $X$ and $Y$ for each pair $X,Y $ in $\bV$ with $\bU^X\cap\bU^Y\neq \varnothing$.

 An SCM is called acyclic if the corresponding causal graph does not contain a cycle. According to \citet{bongers2021foundations}, an acyclic SCM always induces unique observational, interventional, and counterfactual distributions, while this does not hold for an SCM with cycles. To solve this issue, \citet{bongers2021foundations} introduces simple SCMs, a subclass of SCMs (cyclic or acyclic). 

 \begin{definition}[Simple SCM]
 \label{def: simple SCM}
      An SCM is simple if any subset of its structural equations can be solved uniquely for its associated variables in terms of the other variables that appear in these equations.
 \end{definition}

 \begin{proposition}[\citet{bongers2021foundations}]
      Simple SCMs always have uniquely defined observational, interventional, and counterfactual distributions.
 \end{proposition}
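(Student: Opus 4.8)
The plan is to reduce all three claims to a single structural fact: that simplicity guarantees the existence of a unique \emph{measurable} solution map from the exogenous variables to the endogenous variables, both for the full system and for every subsystem obtained by intervention. Once such maps are in hand, each of the three distributions is defined as the pushforward of $P(\bU)$ under the appropriate map, and uniqueness of the distribution follows immediately from uniqueness of the map. So the whole argument hinges on building these maps and controlling their measurability.

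First I would treat the observational distribution. Applying the simplicity condition (Definition \ref{def: simple SCM}) to the full collection of structural equations $\{X = f_X(Pa(X), \bU^X)\}_{X \in \bV}$, i.e. taking the subset to be $\bS = \bV$, yields for each realization of $\bU$ a unique solution $\bV = g(\bU)$. The content here is not the existence of $g$ as a set map but its measurability: I would argue that $g$ is measurable, so that the pushforward $g_\ast P(\bU)$ is a well-defined probability measure on the state space of $\bV$. By construction this pushforward is the unique observational distribution consistent with $\cM$.

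Next I would handle interventions. A hard intervention on a set $\bI \subseteq \bV$ replaces the equations indexed by $\bI$ with constants and leaves the equations $\{X = f_X(Pa(X), \bU^X)\}_{X \in \bV \setminus \bI}$, now treating the intervened values as fixed inputs. The crucial observation is that simplicity is a statement about \emph{every} subset of structural equations; taking $\bS = \bV \setminus \bI$ shows that the post-intervention system is again uniquely solvable, hence itself simple. The same argument then produces a unique measurable solution map for the intervened model, whose pushforward of $P(\bU)$ is the unique interventional distribution. For counterfactuals I would use the parallel-worlds (twin-network) construction: couple the factual model and an intervened copy through the \emph{same} exogenous draw $\bU$, form the product system on the disjoint union of the two endogenous copies, and note that this enlarged system inherits unique solvability from its two components (which share no endogenous variables, only the common noise). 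The joint solution map, applied to $P(\bU)$, yields the unique counterfactual distribution.

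The main obstacle is the measurability step rather than the algebra of subsystems. Simplicity is phrased as pointwise unique solvability, which a priori only delivers a solution defined value-by-value; I expect the real work to lie in upgrading this to a genuinely measurable (and essentially unique) map $g$, for instance via a measurable selection theorem or by exploiting the recursive block structure that simplicity imposes across the strongly connected components of the equations. A secondary point needing care is verifying that the subset formulation in Definition \ref{def: simple SCM} is exactly strong enough to cover arbitrary interventions and the twin-network enlargement at once; once that bookkeeping is confirmed, all three distributions arise uniformly as pushforwards and their uniqueness is automatic.
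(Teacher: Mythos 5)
The paper does not prove this proposition; it is imported verbatim from \citet{bongers2021foundations} (as is the companion fact, stated separately in Section \ref{sec: intervention}, that $\cM_{do(\bI)}$ is again simple). So there is no in-paper proof to compare against, and your proposal should be judged against the argument in the cited source --- which it matches in strategy: unique solvability of the full system gives a solution map $g$, the observational distribution is the pushforward $g_\ast P(\bU)$, interventions reduce to the subsystem on $\bV\setminus\bI$, and counterfactuals are handled by the twin-network coupling through a shared $\bU$. Two remarks. First, the ``main obstacle'' you identify (upgrading pointwise unique solvability to a measurable, essentially unique map) is real only relative to the informal Definition \ref{def: simple SCM} given in this paper; in \citet{bongers2021foundations} the notion of (unique) solvability is \emph{defined} to require a measurable solution function that is essentially unique with respect to $P(\bU)$, so the measurability is packaged into the hypothesis rather than derived, and no selection theorem is needed. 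Second, your step ``taking $\bS=\bV\setminus\bI$ shows the post-intervention system is itself simple'' is slightly too quick as written: simplicity of $\cM_{do(\bI)}$ requires unique solvability of \emph{every} subset of its equations, not just the full set $\bV\setminus\bI$. The fix is routine --- any subset of the intervened equations splits into constant assignments (trivially solvable) and a subset of the original equations (solvable by simplicity of $\cM$) --- and this closure property is exactly the separate proposition the paper also quotes, but your sketch should acknowledge that the quantifier over subsets has to be re-verified rather than obtained from a single instance. With those two points tightened, the argument is the standard one and is correct.
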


 In the remainder of this paper, we only consider simple SCMs. The same assumption is also used in \citet{mokhtarian2023unified}.

\subsection{$d$-separation and $\sigma$-separation}
\label{sec:formulation separation}

We use $(\bX \cperp\bY|\bZ)_P$ to denote that $\bX$ and $\bY$ are independent conditioned on $\bZ$, where $\bX,\bY,\bZ$ are  three disjoint subsets of variables with the joint distribution $P$. Then we define $d$-separation and $\sigma$-separation for DMGs.

\begin{definition}[$d$-separation]
    Suppose $\cG = (\bV,\bD,\bB)$ is a DMG, $X$ and $Y$ are two distinct variables in $\bV$, and $\bS\subseteq \bV \setminus\{X,Y\}$. $\cP = (X = Z_0, E_0,Z_1,E_1,\dots,Z_k,E_k,Z_{k+1} = Y)$ is a path between $X$ and $Y$ in $\cG$, where $Z_i\in\bV$ for $0\leq i\leq k+1$ and $E_i\in \bD\cup\bB$ for $0\leq i\leq k$. We say $\cP$ is $d$-blocked by $\bS$ if there exists $1\leq i \leq k$ such that
    \begin{itemize}
        \item $Z_i$ is a collider on $\cP$ and $Z_i \notin Anc_{\cG}(\bS\cup\{X,Y\})$, or,
        \item $Z_i$ is not a collider on $\cP$ and $Z_i \in \bS$.
    \end{itemize}
    We say $\bS$ $d$-separation $X$ and $Y$ in $\cG$ if  all the paths in $\cG$ between $X$ and $Y$ are $d$-blocked by $\bS$ and denote it as $(X\cperp_{d} Y|\bS)_{\cG}$. Furthermore, for three disjoint subsets, $\bX,\bY,\bS$ in $\bV$, we say $\bS$ $d$-separation $\bX$ and $\bY$ in $\cG$ if for any $X\in\bX$ and $Y\in\bY$, $(X\cperp_{d} Y|\bS)_{\cG}$, and we denote it as $(\bX\cperp_{d} \bY|\bS)_{\cG}$.
\end{definition}

\begin{definition}[$\sigma$-separation]
    Suppose $\cG = (\bV,\bD,\bB)$ is a DMG, $X$ and $Y$ are two distinct variables in $\bV$, and $\bS\subseteq \bV \setminus\{X,Y\}$. $\cP = (X = Z_0, E_0,Z_1,E_1,\dots,Z_k,E_k,Z_{k+1} = Y)$ is a path between $X$ and $Y$ in $\cG$, where $Z_i\in\bV$ for $0\leq i\leq k+1$ and $E_i\in \bD\cup\bB$ for $0\leq i\leq k$. We say $\cP$ is $\sigma$-blocked by $\bS$ if there exists $1\leq i \leq k$ such that
    \begin{itemize}
        \item $Z_i$ is a collider on $\cP$ and $Z_i \notin Anc_{\cG}(\bS\cup\{X,Y\})$, or,
        \item $Z_i$ is not a collider on $\cP$, $Z_i \in \bS$ and either $E_i = (Z_i,Z_{i+1})$ and $Z_{i+1}\notin SCC_{\cG}(Z_i)$, or $E_{i-1} = (Z_i, Z_{i-1})$ and $Z_{i-1} \notin SCC_{\cG}(Z_i)$.
    \end{itemize}
    We say $\bS$ $\sigma$-separation $X$ and $Y$ in $\cG$ if  all the paths in $\cG$ between $X$ and $Y$ are $\sigma$-blocked by $\bS$ and denote it as $(X\cperp_{\sigma} Y|\bS)_{\cG}$. Furthermore, for three disjoint subsets, $\bX,\bY,\bS$ in $\bV$, we say $\bS$ $\sigma$-separation $\bX$ and $\bY$ in $\cG$ if for any $X\in\bX$ and $Y\in\bY$, $(X\cperp_{\sigma} Y|\bS)_{\cG}$, and we denote it as $(\bX\cperp_{\sigma} \bY|\bS)_{\cG}$.
\end{definition}

\begin{figure}[ht]
    \centering
    
        \includegraphics[width=0.65\textwidth]{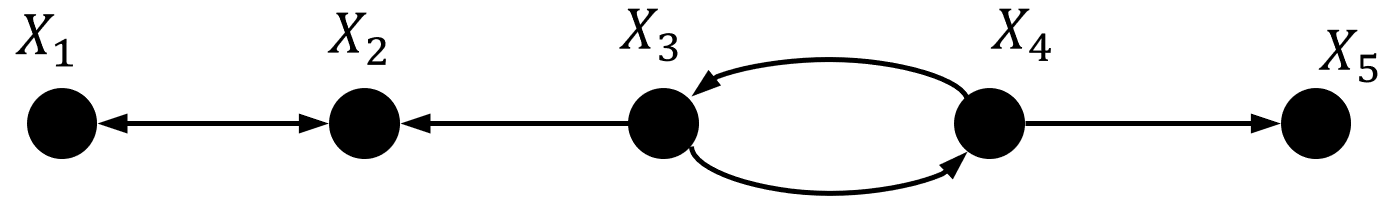}
        \caption{DMG of Example \ref{exam:example seperation}.}
        \label{fig:example seperation}
\end{figure}

\begin{example}[$d$-block and $\sigma$-block]
\label{exam:example seperation}
    In Figure \ref{fig:example seperation}, the path between $X_1$ and $X_5$ is $d$-blocked and $\sigma$-blocked by $\varnothing$ since $X_2$ is a collider and not in the condition set. The path between $X_1$ and $X_5$ is only $d$-blocked but not $\sigma$-blocked by $\{X_2,X_3\}$ since $X_3$ and $X_4$ are in a same SCC. 
\end{example}
It is easy to see that when there are no cycles in the DMG, $d$-separation and $\sigma$-separation are equivalent, since each SCC contains only one node. However, the two are not equivalent when there are cycles in the DMG. For convenience, we introduce letter $r$ to stand for either $d$ (as in $d$-separation) or $\sigma$
(as in $\sigma$-separation). Then, we formally define $r$-independence model, $r$-Markov equivalence
class, $r$-Markov property, and $r$-faithfulness. 

\begin{definition}[$IM_r(\cG)$]
For a DMG $\cG$, the $r$-independence model $IM_r(\cG)$ is defined as the set of $r$-separations of $\cG$. That is,
\begin{equation*}
    IM_r(\cG) = \{(X,Y,\bZ)|X,Y\in\bV, \bZ\subseteq \bV\setminus\{X,Y\}, (X\cperp_r Y|\bZ )_{\cG}\}
\end{equation*}
We can find that for a DMG without cycles, $IM_d(\cG) = IM_{\sigma}(\cG)$. In this case, for convenience, we omit the subscript $r$ and write it as $IM(\cG)$.
\end{definition}

\begin{definition}[$r$-MEC]
    Two DMGs with identical $r$-independence models are called $r$-Markov equivalent. We denote by $[\cG]^r$ the $r$-Markov equivalence class ($r$-MEC) of $\cG$, i.e., the set of $r$-Markov equivalent DMGs of $\cG$.
\end{definition}

\begin{definition}[$r$-Markov property, $r$-faithfulnes]
    A distribution $P$ satisfies $r$-Markov
property with respect to a DMG $\cG$ if for any $r$-separation $(X\cperp_r Y|\bZ )_{\cG}$ in $\cG$, the CI
$(X\cperp Y|\bZ )_{P}$ holds in $P$. Similarly, a distribution $P$ satisfies $r$-faithfulness with respect to
a DMG $\cG$ if for any CI $(X\cperp Y|\bZ )_{P}$ in $P$, the $r$-separation $(X\cperp_r Y|\bZ )_{\cG}$  holds in $\cG$.
\end{definition}

Suppose $\cM = (\bV,\bU,\bF,P(\bU))$ is a simple SCM with observational distribution $P^{\cM}(\bV)$ and causal graph $\cG$. For convenience, we usually drop the superscript $\cM$ when it is clear from the context.
It has been shown that $P$ always satisfies $\sigma$-Markov property with respect to $\cG$. However, the $d$-Markov property holds in specific settings, e.g., acyclic SCMs, SCMs with continuous variables and linear relations, or SCMs with discrete variables \citep{bongers2021foundations}.

\subsection{Intervention and Experiment}
\label{sec: intervention}
Suppose $\cM = (\bV,\bU,\bF,P(\bU))$ is an SCM. We use $do(\bI)$ to denote a full-support hard intervention on a subset $\bI\subseteq\bV$. It converts $\cM$ to a new SCM $\cM_{do(I)} = (\bV,\bU,\bF^{\prime}, P(\bU))$, where for each $X\in\bI$, the structural assignment of $X$ in $\bF$ is replaced by $X = \xi_X$ in $\bF^{\prime}$, and $\xi_X$ is a random variable whose support is the same as the support of $X$ and is independent of all other random variables in the system. We use $P_{do(\bI)}$ to denote the the distribution of $\cM_{do(\bI)}$.

\begin{proposition}(\citet{bongers2021foundations})
    If $\cM = (\bV,\bU,\bF,P(\bU))$ is a simple SCM, then for any $\bI\subseteq\bV$, SCM $\cM_{do(\bI)}$ is also a simple SCM.
\end{proposition}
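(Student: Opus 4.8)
The plan is to verify the defining property of a simple SCM directly for $\cM_{do(\bI)}$. Fix an arbitrary subset $\mathbf{O}\subseteq\bV$; by Definition~\ref{def: simple SCM} it suffices to show that the subset of structural equations of $\cM_{do(\bI)}$ indexed by $\mathbf{O}$ can be solved uniquely for the variables in $\mathbf{O}$ in terms of the remaining variables appearing in them. The whole argument rests on one structural observation: the intervention $do(\bI)$ replaces the equation of each $X\in\bI$ by the decoupled assignment $X=\xi_X$, while leaving every other structural equation identical to its counterpart in $\cM$. Consequently, any subsystem of $\cM_{do(\bI)}$ reduces, once the trivial intervened equations are resolved, to an honest subsystem of $\cM$, at which point I can invoke the simplicity of $\cM$.

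Concretely, I would first partition $\mathbf{O}=(\mathbf{O}\cap\bI)\sqcup(\mathbf{O}\setminus\bI)$. For each $X\in\mathbf{O}\cap\bI$ the equation in $\bF$'s intervened version is $X=\xi_X$, whose right-hand side references no endogenous variable; hence this block determines the values $\{\xi_X\}_{X\in\mathbf{O}\cap\bI}$ uniquely and independently of the rest of the system. These values can then be substituted into the remaining equations as if the corresponding variables were externally supplied inputs.

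Next I would treat the block indexed by $\mathbf{O}\setminus\bI$. By the definition of $do(\bI)$, the equation $X=f_X(Pa(X),\bU^X)$ for each $X\in\mathbf{O}\setminus\bI$ is unchanged from $\cM$, so after substituting the already-determined values of $\mathbf{O}\cap\bI$ this block is exactly the subset of the structural equations of $\cM$ indexed by $\mathbf{O}\setminus\bI$. Since $\cM$ is simple, this subset solves uniquely for the variables $\mathbf{O}\setminus\bI$ in terms of the other variables appearing in it. Composing the two solutions yields a unique solution of the full $\mathbf{O}$-subsystem for all of $\mathbf{O}$, which is precisely the simplicity condition for $\cM_{do(\bI)}$.

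The point that needs care is the bookkeeping of what counts as an ``other variable'' in each block, together with the order of resolution: one must check that feeding the interventional values $\{\xi_X\}_{X\in\mathbf{O}\cap\bI}$ into the $(\mathbf{O}\setminus\bI)$-block does not disturb its unique solvability. This holds precisely because those values enter only as constants on the right-hand sides, so the block remains an unmodified subsystem of $\cM$ with some of its right-hand-side arguments fixed. A secondary check is that the argument degenerates correctly in the boundary cases $\mathbf{O}\subseteq\bI$ and $\mathbf{O}\cap\bI=\varnothing$. Finally, no new cyclic dependency can be introduced, since the intervention only deletes dependencies among endogenous variables rather than adding them.
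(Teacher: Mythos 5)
The paper does not prove this proposition itself; it simply cites \citet{bongers2021foundations}. Your direct verification is correct and is essentially the standard argument behind the cited result: for any subset $\mathbf{O}\subseteq\bV$, the intervened equations for $\mathbf{O}\cap\bI$ are trivially and uniquely solved by the exogenous $\xi_X$, and the remaining equations for $\mathbf{O}\setminus\bI$ form an unmodified subsystem of $\cM$, whose unique solvability (with the values of $\mathbf{O}\cap\bI$ entering only as fixed right-hand-side arguments) follows from the simplicity of $\cM$. Your attention to the two boundary cases and to the fact that the substituted values act only as constants is exactly the bookkeeping needed, so no gap remains relative to the level of rigor at which Definition~\ref{def: simple SCM} is stated.
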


After intervening on $\bI$, because the variables in $\bI$ are no longer functions of other variables in $\bV$, the corresponding causal graph of $\cM_{do(\bI)}$ can be obtained from graph $\cG$ by removing
the incoming edges (directed edges or bidirected edges) of the variables in $\bI$. We denote the corresponding graph by $\cG_{\Bar{\bI}}$. An \textit{experiment} on a target set $\bI$ is the act of conducting a full-support hard intervention on $\bI$ and obtaining the interventional distribution $P_{do(\bI)}$.

\begin{definition}[$\cI$-$r$-MEC]
Suppose $\cI$ is a collection of subsets of $\bV$ (can include the
empty set). Two DMGs $\cG$ and $\cH$ are $\cI$-$r$-Markov equivalent if $IM_r(\cG_{\Bar{\bI}}) = IM_r(\cH_{\Bar{\bI}})$ for each $\bI\in\cI$. We denote by $[\cG]^r_{\cI}$ the $\cI$-$r$-Markov equivalent class of G, i.e., the set of $\cI$-$r$-Markov equivalent DMGs of $\cG$.
    
\end{definition}

From the definition, we can see that through experiments on each element in $\cI$, we cannot distinguish two DMGs belonging to the same $\cI$-$r$-MEC using $r$-separation.

\subsection{Problem Description}
\label{sec: problem description}
Consider a simple SCM  $\cM = (\bV,\bU,\bF,P(\bU))$ with observational  distribution $P^{\cM}(V)$ and
causal graph $\cG = (\bB,\bD,\bB^N\cup\bB^A)$. Since we do not assume causal sufficiency and acyclity, $\cG$ is a DMG. In this paper, we consider the
following two scenarios.

\begin{itemize}
    \item \textbf{Scenarios 1}: $P^{\cM}$ satisfies $d$-Markov property and $d$-faithfulness w.r.t. $\cG$. In this case, CI relations are equivalent to $d$-separations. That is,$(X\cperp Y|\bZ )_{P}\Longleftrightarrow (X\cperp_d Y|\bZ )_{\cG}$.
    \item \textbf{Scenarios 2}: $P^{\cM}$ satisfies $\sigma$-Markov property and $\sigma$-faithfulness w.r.t. $\cG$. In this case, CI relations are equivalent to $d$-separations. That is,$(X\cperp Y|\bZ )_{P}\Longleftrightarrow (X\cperp_\sigma Y|\bZ )_{\cG}$.
\end{itemize}

Our goal in this paper is to design a set of experiments for learning $\cG$ under Scenario 1 or Scenario 2. However, According to \citet{NIPS2017_291d43c6}, for any collection of experiments $\cI$, $\bB^A$ cannot be identified solely through $IM_r(\cG_{\Bar{\bI}}), \bI\in\cI$. In other words, there always exists an SCM $\cM$ with corresponding DMG $\cG = (\bV,\bD,\bB^N\cup\bB^A)$ where $\bB^A \neq \varnothing$, for any collection of experiments $\cI$, $|[\cG]_\cI^r|>1$.
To identify $\bB^A$, we can not only use $r$-separation relations, but also use \textit{do-see} test proposed by \citet{NIPS2017_291d43c6} (with detail shown in Section \ref{sec:unb 2.2}). To describe our problem more clearly, we first define the following operator and corresponding examples are shown in Figure \ref{fig: operators}.

\begin{figure}[ht]
    \centering
    
    
    \begin{subfigure}[b]{0.2\textwidth}
        \includegraphics[width=\textwidth]{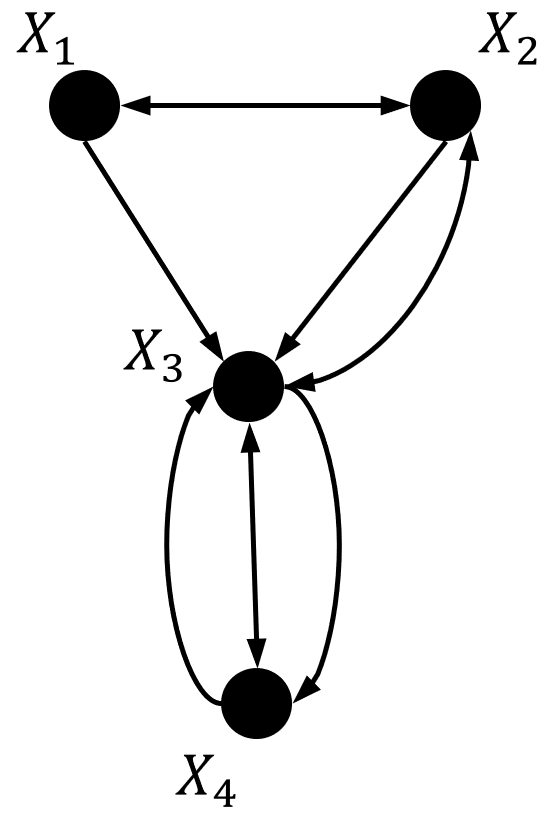}
        \caption{$\cG$}
        \label{fig:sub_2}
    \end{subfigure}
    \hfill
    \begin{subfigure}[b]{0.2\textwidth}
        \includegraphics[width=\textwidth]{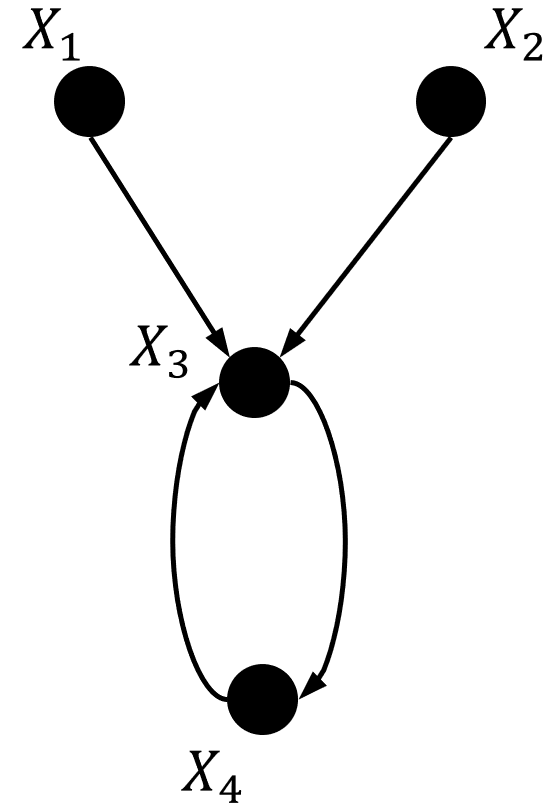}
        \caption{$\cRB(\cG)$}
        \label{fig:sub_3}
    \end{subfigure}
    \hfill
    \begin{subfigure}[b]{0.2\textwidth}
        \includegraphics[width=\textwidth]{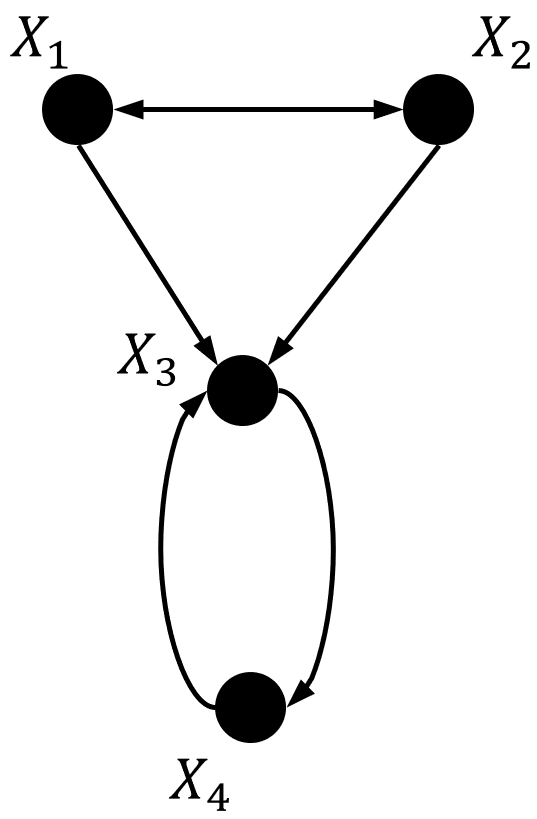}
        \caption{$\cRA(\cG)$}
        \label{fig:sub_4}
    \end{subfigure}
    \hfill
    \begin{subfigure}[b]{0.2\textwidth}
        \includegraphics[width=\textwidth]{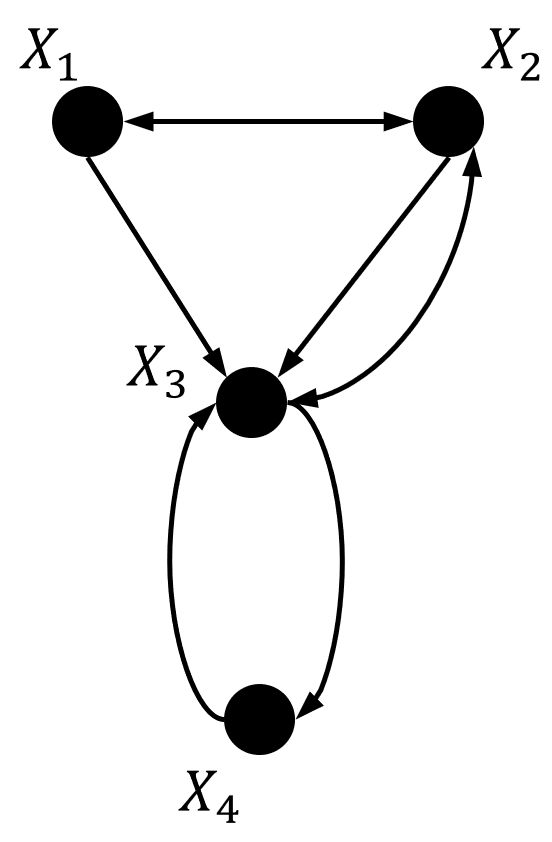}
        \caption{$\cRD(\cG)$}
        \label{fig:sub_5}
    \end{subfigure}
    
    \caption{
Examples of DMG $\cG$ and the corresponding $\cRB(\cG)$, $\cRA(\cG)$, $\cRD(\cG)$.
}
    \label{fig: operators}
\end{figure}

\begin{definition}[Removing bidirecred edge operator]
     A removing bidirecred edge operator $\cRB$ projects the DMG into the DG as follows: for $\cG = (\bV, \bD, \bB)$, $\cRB(\cG) = (\bV,\bD)$. We can also apply the operator to the collection of DMG as $\cRB([\cG]^r) = \{\cRB(\cG) |\cG\in[\cG]^r\}$
\end{definition}

\begin{definition}[Removing adjacent-bidirecred rdge operator]
    A removing adjacent-bidirecred edge operator $\cRA$ projects the DMG into the the DMG as follows: for $\cG = (\bV, \bD, \bB^N\cup\bB^A)$, $\cRA(\cG) = (\bV,\bD,\bB^N)$. We can also apply the operator to the collection of DMG as $\cRA([\cG]^r) = \{\cRA(\cG) |\cG\in[\cG]^r\}$
\end{definition}

 For a DMG $\cG = (\bV,\bD,\bB^N\cup\bB^A)$, we further partition the set $\bB^A$ into two disjoint subsets: the double adjacent bidirected edge set $\bB^{AD}\{[X,Y]\in\bB^A|(X,Y)\in\bD,(Y,X)\in\bD\}$ and the single adjacent bidirected edge set $\bB^{AS} = \bB^A\backslash\bB^{AD}$. 
 \begin{definition}[Removing double adjacent-bidirecred edge operator]
    A removing double adjacent-bidirecred edge operator $\cRD$ projects the DMG into the the DMG as follows: for $\cG = (\bV, \bD, \bB^N\cup\bB^{AS}\cup\bB^{AD})$, $\cRD(\cG) = (\bV,\bD,\bB^N\cup\bB^{AS})$. We can also apply the operator to the collection of DMG as $\cRD([\cG]^r) = \{\cRD(\cG) |\cG\in[\cG]^r\}$
\end{definition}

Due to the difficulty mentioned in \citet{NIPS2017_291d43c6}, our goal is to design a collection of subsets $\cI$ such that $\cRA([\cG]^r_{\cI}) = \{\cRA(\cG)\}$ and use do-see test to identify $\bB^A$ of $\cG$ as much as possible. In fact, the method we propose can identify almost all the bidirected edges $[X,Y] \in \bB^A$, except for one case, which is when $[X,Y] \in \bB^A$, while $(X,Y)\in \bD$ and $(Y,X)\in\bD$.
Additionally, as performing experiments can be costly, we aim to minimize the number of
necessary experiments.

\section{Lower Bound of $\max_{\bI\in\cI}|\bI|$ and $|\cI|$}
\label{sec: lower bound}
When confounders are not considered, as discussed in \citet{mokhtarian2023unified}, it is generally impossible to accurately recover the skeleton of a directed graph (DG) $\cG = (\bV, \bD)$ using only observational data. Moreover, even performing all size-one interventions is insufficient for full identifiability. To identify all directed edges $\bD$, they show that the intervention set $\cI$ must satisfy the following conditions: for all $\bI \in \cI$, the intervention size must be at least $\zeta_{\max} - 1$, and the total number of experiments must be at least $\zeta_{\max}$, where $\zeta_{\max}$ denotes the size of the largest SCC in $\cG$.

When confounders are present, i.e., when bidirected edges are introduced and the causal graph becomes a DMG $\cG = (\bV, \bD, \bB)$, the task of identification via $r$-separation relations becomes even more challenging. In Section \ref{sec:lower D}, we demonstrate that even if the goal is limited to recovering only the directed part of $\cG$ (denoted $\mathcal{RB}(\cG)$), the lower bounds on both $\max_{\bI \in \cI} |\bI|$ and $|\cI|$ exceed those in the DG setting.
In Section \ref{sec:lower NB}, we further establish the lower bounds on $\max_{\bI \in \cI} |\bI|$ and $|\cI|$ required to identify the non-adjacent bidirected edges $\bB^N$. As for adjacent bidirected edges $\bB^A$, since they cannot be identified through $r$-separation relations alone and instead require the use of the do-see test \citep{NIPS2017_291d43c6}, the corresponding identifiability lower bounds are not well defined. We therefore leave the study of this case for future work and do not discuss it further in this paper.



\subsection{Lower Bound of Identifying Directed Edges $\bD$}
\label{sec:lower D}
To discuss the lower bound of $\max_{\bI\in\cI}|\bI|$ and $|\cI|$ to recognize the directed edges $\bD$, we start from the following lemma.

\begin{lemma}
\label{lem:lower D}
    Consider a DMG $\cG = (\bV,\bD,\bB)$ with a node $S\in\bV$ which satisfies the following constrains,
    \begin{itemize}
        \item $Pa_{\cG}(S) = Anc_\cG(S)$ and there exists a bidirected edge between $S$ and each of its parents.
        \item There exists a node $F\in\bV$ that is a parent of all other nodes in $\cG$, and bidirected edges exist between $F$ and every other node.
    \end{itemize}
   For a set of experiment $\cI$, if there do not exist a $\bI\in\cI$ such that $S\notin\bI$ and $Pa_\cG(S)\subseteq\bI$, then we have 
   \begin{equation*}
       \cG^{\prime} = (\bV, \bD\backslash(F,S),\bB) \in [\cG]^r_{\cI}.
   \end{equation*}
\end{lemma}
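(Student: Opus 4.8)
The plan is to unfold the definition of the $\cI$-$r$-MEC and verify $IM_r(\cG_{\Bar{\bI}}) = IM_r(\cG'_{\Bar{\bI}})$ for every $\bI \in \cI$, noting that $\cG$ and $\cG' = (\bV,\bD\backslash(F,S),\bB)$ differ only in the single directed edge $(F,S)$. Since $(F,S)$ is outgoing from $F$, the intervention removes it exactly when $S \in \bI$; thus for $S \in \bI$ we have $\cG_{\Bar{\bI}} = \cG'_{\Bar{\bI}}$ and the independence models coincide trivially. The real case is $S \notin \bI$, where the hypothesis guarantees $Pa_\cG(S) \not\subseteq \bI$, so I fix a parent $P^* \in Pa_\cG(S)\setminus\bI$. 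I then record which edges survive the intervention: by the first condition the sibling edge $[P^*,S]$ survives (both endpoints avoid $\bI$); by the second condition the edge $(F,P^*)$ survives when $P^* \neq F$; and $[F,S]$ survives when $F \notin \bI$. The only difference between $\cG_{\Bar{\bI}}$ and $\cG'_{\Bar{\bI}}$ remains the edge $(F,S)$.

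The inclusion $IM_r(\cG_{\Bar{\bI}}) \subseteq IM_r(\cG'_{\Bar{\bI}})$ is the easy, monotone direction. Every path of $\cG'_{\Bar{\bI}}$ is a path of $\cG_{\Bar{\bI}}$ with the same collider pattern, and restoring the edge $(F,S)$ can only enlarge ancestor sets and merge SCCs. Enlarging ancestor sets preserves the collider-activation condition $Z_i \in Anc_\cG(\bZ\cup\{X,Y\})$, and merging SCCs preserves the non-blocking side-condition at a conditioned non-collider in the definition of $\sigma$-separation (a neighbor lying in a smaller SCC still lies in the larger one). Hence any $r$-active path in $\cG'_{\Bar{\bI}}$ is $r$-active in $\cG_{\Bar{\bI}}$, so $r$-separation in $\cG_{\Bar{\bI}}$ forces $r$-separation in $\cG'_{\Bar{\bI}}$; this argument is uniform in $r \in \{d,\sigma\}$.

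For the reverse inclusion I argue contrapositively, turning an $r$-active path $\cP$ between $X$ and $Y$ in $\cG_{\Bar{\bI}}$ into an $r$-active path in $\cG'_{\Bar{\bI}}$. If $\cP$ traverses $(F,S)$, I splice out that edge with a detour that preserves the edge marks at its endpoints, namely a tail at $F$ and an arrowhead at $S$, so that the collider status at $F$ and at $S$ is untouched: when $P^* \neq F$, I use the chain $F \to P^* \to S$ if $P^* \notin \bZ$ and the segment $F \to P^* \leftrightarrow S$ (edges $(F,P^*)$ and $[P^*,S]$, so that $P^*$ becomes a conditioned, hence active, collider) if $P^* \in \bZ$; when $P^* = F$, I replace $(F,S)$ by the surviving bidirected edge $[F,S]$. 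If instead $\cP$ avoids $(F,S)$, it already lives in $\cG'_{\Bar{\bI}}$ and only its activation must be re-checked. The structural heart of this step is that deleting $(F,S)$ perturbs reachability in a single controlled way: because the first condition forces $Anc_\cG(S) = Pa_\cG(S)$ and the second makes $F$ a parent of every node, the only ancestor relation that can disappear is the membership of $F$ in $Anc_\cG(S)$, and only in the sub-case $P^* = F$; all other ancestor sets are identical in $\cG_{\Bar{\bI}}$ and $\cG'_{\Bar{\bI}}$. Consequently any collider on $\cP$ whose activation depended on a directed path from $F$ to $S$ can be re-activated through $F$'s universal out-edges $F \to w$, which survive for every $w \notin \bI$; I then pass from the resulting active walk to an active path by the standard walk-to-path reduction for $r$-separation.

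I expect the reverse inclusion, and within it the degenerate sub-case $P^* = F$, to be the main obstacle. There every non-$F$ parent of $S$ lies in $\bI$, no intermediate parent is available, and the $F$--$S$ connection must be carried by the bidirected edge $[F,S]$; two effects must then be tracked at once: the loss of $F \in Anc_\cG(S)$ when re-verifying collider activations, and, under $\sigma$-separation, the possible splitting of an SCC containing both $F$ and $S$, which can flip the side-condition at a conditioned non-collider. I anticipate that the condition $Anc_\cG(S) = Pa_\cG(S)$ together with $F$ being a universal parent and sibling is exactly what confines these perturbations to the pair $\{F,S\}$ and keeps every required reroute available, and that carefully verifying this containment for both $r = d$ and $r = \sigma$ is where the bulk of the effort lies.
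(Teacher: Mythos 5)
Your overall strategy is the same as the paper's: reduce to the case $S \notin \bI$, obtain the inclusion $IM_r(\cG_{\Bar{\bI}}) \subseteq IM_r(\cG'_{\Bar{\bI}})$ from monotonicity, and prove the converse by surgery on an $r$-active path that traverses $(F,S)$, detouring through an unintervened parent $P^*$ of $S$. Your non-degenerate case ($P^* \neq F$, splicing in $F \to P^* \to S$ or $F \to P^* \leftrightarrow S$ according to whether $P^* \in \bZ$) coincides with the paper's first scenario and is sound: it preserves the tail at $F$ and the arrowhead at $S$, and in this case neither ancestor sets nor SCCs differ between $\cG_{\Bar{\bI}}$ and $\cG'_{\Bar{\bI}}$, since $F \to P^* \to S$ keeps $F$ an ancestor of $S$.

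The gap is in the degenerate sub-case $P^* = F$. You claim your detour ``preserves the edge marks at its endpoints, namely a tail at $F$,'' but the substitute you propose there, $[F,S]$, carries an arrowhead at $F$. When $F$ is an interior vertex of $\cP$ preceded by an edge with an arrowhead at $F$, this flips $F$ from a non-collider into a collider; for $d$-separation the activity of the original path forces $F \notin \bZ$ in that configuration, so the new collider is active only if $F \in Anc_{\cG'_{\Bar{\bI}}}(\bZ \cup \{X,Y\})$, which can fail precisely in this sub-case (every node of $\bZ\cup\{X,Y\}$ other than $S$ may lie in $\bI$, and $F$'s only directed route to $S$ was the deleted edge). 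Your remark about re-activating colliders through $F$'s universal out-edges addresses collider activation elsewhere on the path, not the change of $F$'s own collider status. The paper's fix is the missing idea: it discards the entire prefix $X \cdots F$ of the path and reconnects $X$ to $F$ by a single universal edge chosen according to the conditioning set --- $X \leftarrow F$ if $F \notin \bZ$ (keeping $F$ a non-collider) and $X \leftrightarrow F$ if $F \in \bZ$ (making $F$ a conditioned, hence active, collider) --- before appending $F \leftrightarrow S$. Relatedly, you flag but do not resolve the $\sigma$-separation complication that deleting $(F,S)$ can split an SCC containing $F$ and $S$ and thereby block a conditioned non-collider on the rerouted path; the paper resolves it by further replacing a traversed edge $(S,F)$ with $[S,F]$. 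Since you yourself defer ``the bulk of the effort'' to exactly this sub-case, the proposal as written does not yet constitute a proof.
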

\begin{proof}
    To show $\cG^{\prime}  \in [\cG]^r_{\cI}$, we only need to show $IM_r(\cG_{\Bar{\bI}}) = IM_r(\cG_{\Bar{\bI}})^\prime$ for any $\bI\in\cI$. If $S \in\bI$, then $\cG_{\Bar{\bI}} = \cG_{\Bar{\bI}}^\prime$ and we get $IM_r(\cG_{\Bar{\bI}}) = IM_r(\cG_{\Bar{\bI}})^\prime$. In the following, we only consider $\bI$, which does not contain $S$.
    Since $\cG^\prime \subseteq \cG$, we have $IM_r(\cG_{\Bar{\bI}}) \subseteq IM_r(\cG_{\Bar{\bI}}^\prime)$. To complete the proof, we need to show $IM_r(\cG_{\Bar{\bI}}^\prime) \subseteq IM_r(\cG_{\Bar{\bI}})$. Let $(X,Y,\bZ) \in IM_r(\cG_{\Bar{\bI}}^\prime)$, we next prove that $(X,Y,\bZ) \in IM_r(\cG_{\Bar{\bI}})$ holds by considering two separate scenarios. 

    \textbf{Scenario 1:} $F\in\bI$. As $Pa_\cG(X)\nsubseteq\bI$, we know that there exists a parent of $X$ that does not belong to $\bI$, denote it by $Q$, and $Q \neq F$. Since $(X,Y,\bZ) \in IM_r(\cG_{\Bar{\bI}}^\prime)$, we have every path between $X$ and $Y$ in $\cG_{\Bar{\bI}}^\prime$ is $r$-blocked by $\bZ$, we will show that these path is also $r$-blocked by $\bZ$ in $\cG_{\Bar{\bI}}$. Otherwise, by contradiction, suppose that there exists a path $\cP$ between $X$ and $Y$ is $r$-blocked by $\bZ$ in $\cG_{\Bar{\bI}}^\prime$ but not $r$-blocked in $\cG_{\Bar{\bI}}$. If $Q \in\bZ$, we use $F\rightarrow Q \leftrightarrow S$ to replace $F\rightarrow S$ to get a new path $\Tilde{\cP}$. If $Q \notin \bZ$, we use $F\rightarrow Q \rightarrow S$ to replace $F\rightarrow S$ to get a new path $\Tilde{\cP}$. We can find the new path $\Tilde{\cP}$ is still not $r$-blocked by $\bZ$ in $\cG_{\Bar{\bI}}$ and it does not contain the eadge $(F,S)$. Then we have $\Tilde{\cP}$ is still not $r$-blocked by $\bZ$ in $\cG_{\Bar{\bI}}^\prime$, because it do not contain $(F,S)$ and the absence of $(F,S)$ in $\cG_{\Bar{\bI}}^\prime$ does not alter the structure of its strongly connected components since $F$ is still an ancestor of $X$ in $\cG_{\Bar{\bI}}^\prime$ as $Q\notin \bI$. Now we get a path $\Tilde{\cP}$ between $X$ and $Y$, and it is not $r$-blocked by $\bZ$ in  $\cG_{\Bar{\bI}}^\prime$. Contradiction reached. 

    \textbf{Scenario 2:} $F\notin \bI$. Similar to the proof in scenario 1, we proceed by contradiction and assume that there exists a path $\cP$ between $X$ and $Y$ is $r$-blocked by $\bZ$ in $\cG_{\Bar{\bI}}^\prime$ but not $r$-blocked in $\cG_{\Bar{\bI}}$. If the path $\cP$ contain $(F,S)$, i.e, $\cP = (X\cdots  F\rightarrow S \cdots Y)$ (without loss of generality, assume that the right side of $S$ no longer contains the edge $(F, S)$.), we make the following replacement: 1) if $F\in\bZ$, $\Tilde{\cP}  = (X\leftrightarrow F \leftrightarrow S \cdots Y)$ and maintain the path between $S$ and $Y$ same as it in $\cP$,; 2) if $F\notin\bZ$, $\Tilde{\cP}  = (X\leftarrow F \leftrightarrow S \cdots Y)$ and maintain the path between $S$ and $Y$ same as it in $\cP$. We can find $\Tilde{\cP}$ is still not $r$-blocked by $\bZ$ in $\cG_{\Bar{\bI}}$ and it dose not contain edge $(F,S)$. It is also evident that $\Tilde{\cP}$ is still not $r$-blocked by $\bZ$ in $\cG_{\Bar{\bI}}^\prime$, except in the specific case outlined next. The specific case need to satisfies the following three conditions: 1) $Pa_\cG(S)\backslash\{F\} \subseteq\bI$ and $S\in SCC_{\cG_{\Bar{\bI}}}(F)$; 2) $\Tilde{\cP}$ contains edge $(S, F)$ and $S\in \bZ$ 3) only consider the $\sigma$-blocked. The core reason lies in the fact that $S$ and $F$ are the only nodes in their strongly connected component in $\cG_{\Bar{\bI}}$, since all other parents of $S$ are subject to intervention. The removal of edge $(F, S)$ modifies the SCC structure and consequently affects the $\sigma$-blocked property. If this specific case occurs, we can simply replace $(S,F)$ in $\Tilde{\cP}$ with $[S,F]$ to get $\Tilde{\cP}^\prime$  and we have $\Tilde{\cP}^\prime$ is not $\sigma$-blocked in $\cG_{\Bar{\bI}}^\prime$, since the $SCC_{\cG_{\Bar{\bI}}}()$ contains only two nodes and incoming edges on the left side of $S$ must point to $S$. Otherwise, $\Tilde{\cP}$ is already not $r$-blocked in $\cG_{\Bar{\bI}}^\prime$. Contradiction reached. 
    
\end{proof}

Lemma \ref{lem:lower D} tells us that, in the worst-case, identifying all parents of a node at least requires intervening on all of its parents.


We now introduce the following definitions to describe ancestor relationships at the level of SCCs in a DMG. Consider a DMG $\cG = (\bV, \bD, \bB)$ with $s$ SCCs, denoted by ${\bS_1, \dots, \bS_s}$. These SCCs form a partition of the node set $\bV$.
We say that an SCC $\bS_i$ is an SCC-ancestor of another SCC $\bS_j$ if there exists a node $X \in \bS_i$ and a node $Y \in \bS_j$ such that $X$ is an ancestor of $Y$ in $\cG$.
More generally, if there exists a directed path from $X \in \bS_i$ to $Y \in \bS_j$ that passes through at most $k - 1$ other SCCs (excluding $\bS_i$ and $\bS_j$), we say that $\bS_i$ is a $k$-order SCC-ancestor of $\bS_j$.
If every SCC in $\cG$ has at most $l$-order SCC-ancestors, we define the SCC-Anc length of the graph to be $l$.

\begin{definition}[SCC-Anc partition]
\label{def: scc-anc partition}
For a DMG $\cG$ with SCC-Anc length $l$, we define the SCC-Anc partition as $\bbT^\cG = {\cT_1^\cG, \dots, \cT_{l+1}^\cG}$, where each $\cT_k^\cG$ is a collection of SCCs at a particular ancestor level.
\begin{itemize}
    \item The first layer $\cT_1^\cG$ is defined as:
    \begin{equation*}
        \cT_1^\cG = \{\bS| \bS \text{ is a SCC of $\cG$ and it has no SCC-ancstor}\}.
    \end{equation*}
    \item For $2 \leq k \leq l+1$, the $k$-th layer is defined as:
    \begin{equation*}
    \begin{aligned}
         \cT_k^\cG = \{\bS|& \text{$\bS$ is a SCC of $\cG$. There exist a $\bS^\prime\in\cT_1^\cG$ is a $k-1$-order SCC-ancestor of $\bS$ }\\  &\text{and there are no $\Tilde{\bS}\in\cT_1^\cG$ is the $r$-order SCC-ancestor of $\bS$ for $r > k-1$}\}.
    \end{aligned}
    \end{equation*}
\end{itemize}
We use $\zeta^{k,\cG}_{\max}$ to denote the size of the largest SCC in the $k$-th layer, i.e., in $\cT_k^\cG$.
\end{definition}

Consider  a DMG $\cG = (\bV,\bD,\bB)$ with its SCC-Anc partition $\bbT^\cG = \{\cT_1^\cG,\dots,\cT_{l+1}^\cG\}$, we define $\rmT_k^\cG = \cup_{j=1}^{k-1}\cT_j^\cG$ for $2\leq k \leq l+1$ and $\rmT_1^\cG = \varnothing$. We use $|\rmT_k^\cG|$ to denote the number of SCCs in $\rmT_k^\cG$ and use  $|\rmT_k^\cG|_n$ to denote the number of nodes in $\rmT_k^\cG$. It is easy to find that for any node $X\in\bS\in\cT_k^\cG$, $Pa_{\cG}(X) \subseteq \bS\cup\rmT_k^{\cG}$.

\begin{figure}[ht]
    \centering
        \includegraphics[width=0.65\textwidth]{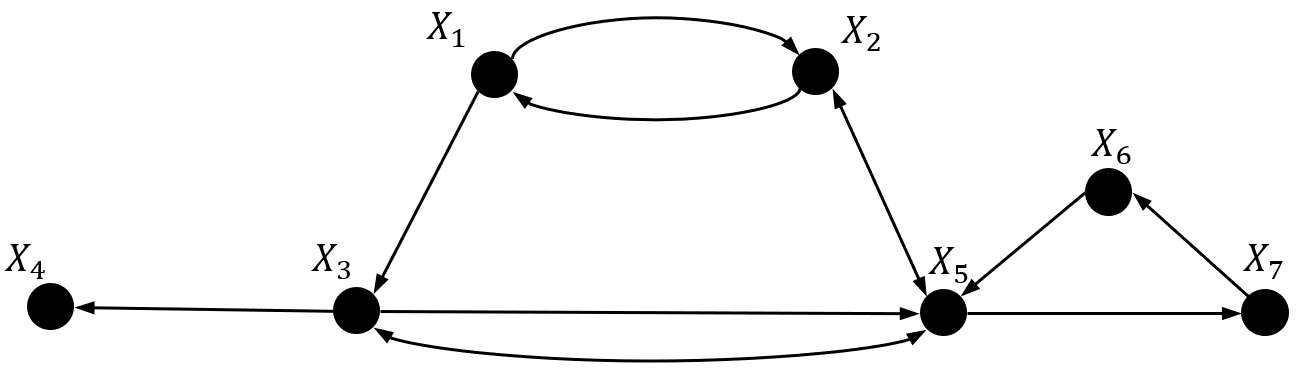}
        \caption{DMG of Examples \ref{exam:SCC-Anc partition} and \ref{exam: SCC-Anc separating system}.}
        \label{fig:example SCC-Anc patition}
\end{figure}

\begin{example}[SCC-Anc partition]
\label{exam:SCC-Anc partition}
    Consider the DMG $\cG$ shown in Figure \ref{fig:example SCC-Anc patition}, there are four SCCs in $\cG$, i.e., $\{\{X_1,X_2\},\{X_3\},\{X_4\},\{X_5,X_6,X_7\}\}$.  The SCC-Anc length is $2$ and the corresponding SCC-Anc partition is $\cT_1^\cG = \{\{X_1,X_2\}\}$, $\cT_2^\cG = \{\{X_3\}\}$ and $\cT_3^\cG = \{\{X_4\},\{X_5,X_6,X_7\} \}$. Then $\rmT_1^\cG = \varnothing$, $\rmT_2^\cG = \cT_1^\cG$ and $\rmT_3^\cG = \cT_1^\cG\cup\cT_2^\cG$. $|\rmT_3^\cG| = 2,|\rmT_3^\cG|_n = 3$.
\end{example}

We now present two important consequences of Lemma \ref{lem:lower D}.
\begin{theorem}
    \label{thm:lower D bI}
    Consider a set of $n$ vertices denoted by $\bV$ and reasonable parameters $l, |\rmT_{l+1}|_n, \zeta_{max}^{l+1}$, i.e., $|\rmT_{l+1}|_n \leq n-1, l\leq n-1,|\rmT_{l+1}|+ \zeta_{max}^{l+1} \leq n $. There exists a DMG $\cG$ over $\bV$ with corresponding SCC-Anc partition as $\bbT^\cG = \{\cT_1^\cG,\dots,\cT_{l+1}^\cG\}$ which satisfies that SCC-Anc length of $\cG$ is $l$ and $|\rmT_{l+1}^{\cG}|_n=|\rmT_{l+1}|_n, \zeta^{l+1,\cG}_{max} = \zeta^{l+1}_{max}$. If 
    \begin{equation}
    \label{eq:thm lower D bI}
        \max_{\bI\in\cI}|\bI| < |\rmT_{l+1}^{\cG}|_n + \zeta^{l+1,\cG}_{max} -1,
    \end{equation}
    then $|\cRB([\cG]^r_{\cI})| > 1$.
\end{theorem}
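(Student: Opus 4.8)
The plan is to reduce the statement to Lemma~\ref{lem:lower D} by exhibiting a single witness DMG $\cG$ on $\bV$ in which the node $S$ supplied by that lemma has \emph{exactly} $|\rmT_{l+1}^\cG|_n + \zeta^{l+1,\cG}_{\max} - 1$ parents, all of which are its proper ancestors. Once such a $\cG$ is in hand, the hypothesis \eqref{eq:thm lower D bI} says that every experiment is strictly smaller than $|Pa_\cG(S)|$, so no $\bI\in\cI$ can satisfy $Pa_\cG(S)\subseteq\bI$ (let alone also $S\notin\bI$). Lemma~\ref{lem:lower D} then yields an $r$-indistinguishable graph $\cG'=(\bV,\bD\setminus\{(F,S)\},\bB)\in[\cG]^r_\cI$, and since $\cRB(\cG')$ and $\cRB(\cG)$ differ precisely in the directed edge $(F,S)$, we obtain $|\cRB([\cG]^r_\cI)|>1$.

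The core of the argument is therefore the construction. First I would lay down a backbone of strongly connected components $\bS_1\to\bS_2\to\cdots\to\bS_{l+1}$ in the SCC-DAG, making $\bS_1=\{F\}$ the unique source and choosing $S\in\bS_{l+1}$ with $|\bS_{l+1}|=\zeta^{l+1}_{\max}$. I distribute the $|\rmT_{l+1}|_n$ nodes of layers $1,\dots,l$ among $\bS_1,\dots,\bS_l$ (at least one per layer, which is consistent since $l\le|\rmT_{l+1}|_n$), and place the remaining nodes as singleton SCCs hung off $\bS_l$ so that they too land in layer $l+1$ but never become ancestors of $S$. I then make $F$ a parent of every other node and add the bidirected edges required by Lemma~\ref{lem:lower D}, namely $[F,X]$ for all $X\neq F$ and $[S,P]$ for every parent $P$ of $S$. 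Finally I make \emph{all} proper ancestors of $S$ into direct parents of $S$; this forces $Pa_\cG(S)=Anc_\cG(S)\setminus\{S\}$ to consist of every node in layers $1,\dots,l$ together with $\bS_{l+1}\setminus\{S\}$, whence $|Pa_\cG(S)|=|\rmT_{l+1}^\cG|_n+\zeta^{l+1}_{\max}-1$ exactly, as needed.

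I would then verify that this $\cG$ realizes the prescribed SCC-Anc partition. The key point is that all the extra edges added---$F$'s outgoing edges and the direct edges from lower layers into $S$---are \emph{forward} edges in the SCC-DAG, so they create no new cycles and leave the SCC decomposition equal to $\{\bS_1,\dots,\bS_{l+1}\}$ together with the leftover singletons. Because every such edge only shortcuts the backbone, the \emph{longest} ancestor chain into each $\bS_j$ is still the backbone itself; reading the layering off longest chains then gives SCC-Anc length exactly $l$, $|\rmT_{l+1}^\cG|_n=|\rmT_{l+1}|_n$, and $\zeta^{l+1,\cG}_{\max}=\zeta^{l+1}_{\max}$. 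Both hypotheses of Lemma~\ref{lem:lower D} hold by construction: $Pa_\cG(S)=Anc_\cG(S)$ with a bidirected edge to each parent, and $F$ is a universal parent joined by bidirected edges to all other nodes.

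The hard part will be exactly this verification of the SCC-Anc statistics: one must confirm that $F$'s universal parenthood does not collapse any layers (it would if shortest rather than longest chains governed the layering) and that the padding singletons sit in layer $l+1$ without entering $Anc_\cG(S)$, so that the identity $|Pa_\cG(S)|=|\rmT_{l+1}|_n+\zeta^{l+1}_{\max}-1$ is tight rather than merely an upper bound. The remaining parameter bookkeeping---placing exactly $n$ nodes under the feasibility constraints on $l$, $|\rmT_{l+1}|_n$, and $\zeta^{l+1}_{\max}$---is routine, and I would only sketch it.
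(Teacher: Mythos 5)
Your proposal is correct and follows essentially the same route as the paper: both construct an explicit witness DMG realizing the prescribed SCC-Anc statistics in which some node $S$ in the largest SCC of layer $l+1$ has exactly $|\rmT_{l+1}^{\cG}|_n + \zeta^{l+1,\cG}_{max} - 1$ parents (all of them ancestors, each joined to $S$ by a bidirected edge, with a universal parent $F$), and then invoke Lemma \ref{lem:lower D} to conclude that no undersized experiment can certify the edge $(F,S)$. The only difference is cosmetic: the paper wires every earlier-layer node to every later-layer node (with bidirected companions throughout), whereas you use a leaner backbone-plus-shortcuts construction, but both satisfy the hypotheses of the same lemma and yield the same conclusion.
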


\begin{proof}
    We contract a DMG $\cG = (\bV,\bD,\bB)$ with $n$ nodes by the following steps. 
    \begin{itemize}
        \item  Partition the $n$ nodes into $\{\bS_1,\dots,\bS_s\}$, and then assign each of these $s$ subsets to one of the sets  $\{\cT_1^\cG,\dots,\cT_{l+1}^\cG\}$, such that each  $\cT_k^\cG$ for $2\leq k\leq l+1$ contains at least one subset, $\cT_{1}^{\cG}$ contains exactly one subset, $\cup_{k=1}^l\cT_k^\cG$ contain $|\rmT_{l+1}|_n$ nodes and the size of largest subset in $\cT_{l+1}^{\cG}$ is $\zeta_{max}^{l+1}$.  
        \item For any $\bS\in\cT_k^\cG, 1\leq k \leq l+1 $ and $X,Y\in\bS$, let $(X,Y) \in\bD, (Y,X)\in\bD$ and $[X,Y]\in\bB$.
        \item For any $\bS\in\cT_k^\cG$, $X\in\bS$ and $\bS^\prime \in \cup_{j=1}^{k-1}\cT_k^\cG$, $Y\in\bS^\prime$, $1\leq k \leq l+1$, let $(Y,X)\in\bD$ and $[Y,X]\in\bB$.
    \end{itemize}
    It is easy to check that $\bbT^\cG = \{\cT_1^\cG,\dots,\cT_{l+1}^\cG\}$ is an SCC-Anc partition of $\cG$ and $|\rmT_{l+1}^{\cG}|_n=|\rmT_{l+1}|_n, \zeta^{l+1,\cG}_{max} = \zeta^{l+1}_{max}$. 

    Consider the largest SCC in $\cT_{l+1}^{\cG}$, denoted by $\bS$, we can verify that is satisfies the condition mentioned in Lemma \ref{lem:lower D} if we regard any node $X$ in $\bS$ as $S$ and regard any node in $\cT_1^{\cG}$ as $F$. Using Lemma \ref{lem:lower D}, we know that if there does not exist a $\bI\in\cI$ such that $X\notin \bI$ and $Pa_\cG(X) \subseteq \bI$, there will be another $\cG^\prime \in [\cG]^r_{\cI}$ and $\cRB(\cG)\neq \cRB(\cG^\prime)$.  Howerver, by our contraction, $|Pa_{\cG}(X)| = |\rmT_{l+1}^{\cG}|_n + \zeta^{l+1,\cG}_{max} -1$. Combining Equation (\ref{eq:thm lower D bI}), we get that the condition of Lemma \ref{lem:lower D} holds, so we complete the proof.
\end{proof}

\begin{corollary}
    \label{coro: lower D bI}
    In the worst case, the directed edges of DMG $\cG$ cannot be learned by any algorithm that performs experiments with the maximum size less than $|\rmT_{l+1}^{\cG}|_n + \zeta^{l+1,\cG}_{max} -1$.
\end{corollary}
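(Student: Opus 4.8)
The plan is to obtain this corollary as an immediate consequence of Theorem \ref{thm:lower D bI}, which already does the heavy lifting by exhibiting a worst-case instance. First I would fix an arbitrary (possibly adaptive) learning algorithm whose every experiment $\bI\in\cI$ satisfies $|\bI| < |\rmT_{l+1}^{\cG}|_n + \zeta^{l+1,\cG}_{max}-1$, and take $\cG$ to be the DMG constructed in the proof of Theorem \ref{thm:lower D bI}. The central observation is that the only information an experiment on $\bI$ supplies is the interventional distribution $P_{do(\bI)}$, which under Scenario 1 or Scenario 2 is fully characterized by the $r$-independence model $IM_r(\cG_{\Bar{\bI}})$, since CI relations coincide with $r$-separations by faithfulness.

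Next I would invoke Theorem \ref{thm:lower D bI}: because the maximum-experiment-size hypothesis of the corollary is exactly its assumption (\ref{eq:thm lower D bI}), we obtain $|\cRB([\cG]^r_{\cI})|>1$. Hence there exists a second DMG $\cG'\in[\cG]^r_{\cI}$ with $\cRB(\cG')\neq\cRB(\cG)$, that is, a DMG whose directed edges differ from those of $\cG$ and yet which satisfies $IM_r(\cG_{\Bar{\bI}}) = IM_r(\cG'_{\Bar{\bI}})$ for every $\bI\in\cI$. Thus the two instances are information-theoretically indistinguishable under the experiment collection $\cI$.

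The final step is to conclude that no algorithm can be correct on both instances. Since $\cG$ and $\cG'$ produce identical observable data on every experiment in $\cI$, any algorithm---including an adaptive one, whose choice of the next intervention is a deterministic function of the outputs seen so far---follows the same execution trace and returns the same estimate of the directed part on the two inputs. That common output can equal at most one of $\cRB(\cG)$ and $\cRB(\cG')$, so the algorithm fails to recover $\bD$ on at least one of the two instances, which is precisely the worst-case failure asserted.

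I expect the only genuine subtlety to lie in the adaptive case, where I must argue that round-by-round indistinguishability forces the whole interaction to coincide. This follows by induction on the experiment rounds: if the first $t$ interventions and their observed distributions agree on $\cG$ and $\cG'$, then the algorithm's deterministic rule selects the same $(t{+}1)$-st intervention $\bI$, and by the equality $IM_r(\cG_{\Bar{\bI}}) = IM_r(\cG'_{\Bar{\bI}})$ the resulting distributions again agree, so the collection $\cI$ of interventions actually performed is itself identical on both inputs.
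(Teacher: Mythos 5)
Your proposal is correct and matches the paper's (implicit) reasoning: the corollary is stated as an immediate consequence of Theorem \ref{thm:lower D bI}, and you correctly fill in the standard indistinguishability argument, including the induction over rounds needed to cover adaptive algorithms. The only minor imprecision is the claim that $P_{do(\bI)}$ is ``fully characterized'' by $IM_r(\cG_{\Bar{\bI}})$ --- strictly, two equivalent graphs need not induce identical distributions, but since the paper's identification criterion for directed edges is based solely on CI relations (equivalently $r$-separations under Scenarios 1 and 2), your indistinguishability conclusion holds within that framework.
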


\begin{theorem}
    \label{thm:lower D cI}
     Consider a set of $n$ vertices denoted by $\bV$ and reasonable positive parameters $l,  \{\zeta_{max}^{k}\}_{k=1}^{l+1}$, i.e., $ l\leq n-1,\sum_{k=1}^{l+1} \zeta_{max}^{k} \leq n $. There exists a DMG $\cG$ over $\bV$ with corresponding SCC-Anc partition as $\bbT^\cG = \{\cT_1^\cG,\dots,\cT_{l+1}^\cG\}$ which satisfies that SCC-Anc length of $\cG$ is $l$ and $\zeta^{k,\cG}_{max} = \zeta^{k}_{max}$ for $1\leq k\leq l+1$. If 
    \begin{equation}
    \label{eq:thm lower D cI}
        |\cI| < \sum_{k=1}^{l+1} \zeta_{max}^{k,\cG},
    \end{equation}
    then $|\cRB([\cG]^r_{\cI})| > 1$. 
\end{theorem}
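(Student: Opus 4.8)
The plan is to mirror the construction used in the proof of Theorem~\ref{thm:lower D bI}, but now arranging the graph so that the forced experiments accumulate across all $l+1$ layers of the SCC-Anc partition rather than concentrating on a single node. Concretely, I would build $\cG = (\bV,\bD,\bB)$ by placing in each layer $k$ one complete SCC $\bS_k$ of size $\zeta_{max}^k$ (every pair of nodes in $\bS_k$ joined by directed edges in both directions and by a bidirected edge), wiring full ancestral connectivity so that every node of $\cup_{j<k}\bS_j$ is a parent of, and shares a bidirected edge with, every node of $\bS_k$, and placing the remaining $n-\sum_k\zeta_{max}^k$ nodes as children of $\bS_1$ so that they enlarge no layer's largest SCC and lengthen no ancestor chain. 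As in Theorem~\ref{thm:lower D bI}, one checks directly that $\bbT^\cG=\{\cT_1^\cG,\dots,\cT_{l+1}^\cG\}$ is a valid SCC-Anc partition of length $l$ with $\zeta^{k,\cG}_{max}=\zeta^{k}_{max}$, that for each $X\in\bS_k$ every ancestor of $X$ is a parent of $X$ with $Pa_\cG(X)=(\bS_k\setminus\{X\})\cup\rmT_k^\cG$ and a bidirected edge to each parent, and that every node of $\bS_1$ is a parent of all other nodes, supplying the global node $F$ demanded by Lemma~\ref{lem:lower D}.

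With this graph in hand I would apply Lemma~\ref{lem:lower D} node by node. For each $X$, taking the role of $S$ and choosing any universal parent $F\neq X$ in $\bS_1$, the lemma says that unless $\cI$ contains an experiment $\bI$ with $X\notin\bI$ and $Pa_\cG(X)\subseteq\bI$, the edge $(F,X)$ can be deleted without leaving $[\cG]^r_\cI$, so that $|\cRB([\cG]^r_\cI)|>1$; call such an $\bI$ a \emph{witness} for $X$. The heart of the proof is a counting argument showing that distinct nodes of the largest SCCs demand distinct witnesses. Any witness for $X\in\bS_k$ satisfies $\rmT_k^\cG\subseteq\bI$ and $\bI\cap\bS_k=\bS_k\setminus\{X\}$. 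Hence a single $\bI$ cannot witness two nodes $X\in\bS_k$, $X'\in\bS_{k'}$ with $k\le k'$: if $k=k'$ then $\bS_k\setminus\{X\}=\bI\cap\bS_k=\bS_k\setminus\{X'\}$ forces $X=X'$; if $k<k'$ then witnessing $X'$ forces $\bS_k\subseteq\rmT_{k'}^\cG\subseteq\bI$, contradicting $\bI\cap\bS_k=\bS_k\setminus\{X\}$. Thus the witnesses for the nodes of $\bS_1,\dots,\bS_{l+1}$ are pairwise distinct, so identifiability requires at least $\sum_{k=1}^{l+1}|\bS_k|=\sum_{k=1}^{l+1}\zeta_{max}^{k,\cG}$ experiments; contrapositively, $|\cI|<\sum_{k}\zeta_{max}^{k,\cG}$ forces $|\cRB([\cG]^r_\cI)|>1$.

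I expect the main obstacle to be the cross-layer disjointness, which hinges on $Pa_\cG(X)$ containing \emph{all} earlier layers rather than merely the immediately preceding one; this is exactly why full ancestral connectivity, not a bare chain $\bS_1\to\cdots\to\bS_{l+1}$, is required, since under consecutive-only connectivity a single experiment could witness nodes in non-adjacent layers and the sum would collapse. A secondary point is verifying that Lemma~\ref{lem:lower D} truly applies to every node of each $\bS_k$, which needs the global universal parent $F$ to exist and to differ from the node under consideration; the complete SCC $\bS_1$ together with full connectivity to all descendants guarantees this whenever $\zeta_{max}^1\ge 2$. The only delicate case is the degenerate $\zeta_{max}^1=1$, where the unique source carries no incoming edge and hence forces no witness of its own; here one checks that the surplus nodes or a minor adjustment of the construction recover the stated count, and otherwise the argument above yields the bound verbatim.
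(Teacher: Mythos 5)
Your proposal is correct and follows essentially the same route as the paper: the same layered construction with one complete SCC of size $\zeta_{max}^{k}$ per layer and full ancestral connectivity, the same node-by-node application of Lemma~\ref{lem:lower D}, and the same conclusion that the forced witness experiments are pairwise distinct, summing to $\sum_{k=1}^{l+1}\zeta_{max}^{k,\cG}$. Your explicit within-layer/cross-layer disjointness argument and your flagging of the degenerate $\zeta_{max}^{1}=1$ case (where the unique source has no parents and hence forces no witness) are in fact more careful than the paper's one-line assertion that the interventions are ``mutually disjoint.''
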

\begin{proof}
    We contract a DMG $\cG = (\bV,\bD,\bB)$ with $n$ nodes by the following steps. 
    \begin{itemize}
        \item  Partition the $n$ nodes into $\{\bS_1,\dots,\bS_s\}$, and then assign each of these $s$ subsets to one of the sets  $\{\cT_1^\cG,\dots,\cT_{l+1}^\cG\}$, such that each  $\cT_k^\cG$ for $2\leq k\leq l+1$ contains at least one subset, $\cT_{1}^{\cG}$ contains exactly one subset and the size of largest subset in $\cT_{k}^{\cG}$ is $\zeta_{max}^{k}$ for $1\leq k\leq l+1$.  
        \item For any $\bS\in\cT_k^\cG, 1\leq k \leq l+1 $ and $X,Y\in\bS$, let $(X,Y) \in\bD, (Y,X)\in\bD$ and $[X,Y]\in\bB$.
        \item For any $\bS\in\cT_k^\cG$, $X\in\bS$ and $\bS^\prime \in \cup_{j=1}^{k-1}\cT_k^\cG$, $Y\in\bS^\prime$, $1\leq k \leq l+1$, let $(Y,X)\in\bD$ and $[Y,X]\in\bB$.
    \end{itemize}
    It is easy to check that $\bbT^\cG = \{\cT_1^\cG,\dots,\cT_{l+1}^\cG\}$ is an SCC-Anc partition of $\cG$ and $\zeta^{k,\cG}_{max} = \zeta^{k}_{max}$  for $1\leq k\leq l+1$. 

    Now, fiex $k \in \{1,\dots,l+1\}$ we denote the largest SCC in $\cT_{k}^{\cG}$ as $\bS_k$. Fixed a node $X\in\bS_k$, using Lemma \ref{lem:lower D}, to recognize its parents node, we need to have $\bI\in\cI$ such that $X\notin\bI$ and $\rmT_{k}^\cH\cup\bS\backslash\{X\}\subseteq \bI$. By enumerating all nodes in $\bS_k$, we obtain $\zeta_{max}^{k,\cG}$ distinct interventions. Then, by enumerating $k = 1,\dots,l+1$,  we obtain in a total of $\sum_{k=1}^{l+1} \zeta_{max}^{k,\cG}$ distinct interventions. Since these interventions are mutually disjoint and cannot be applied simultaneously, at least $\sum_{k=1}^{l+1} \zeta_{max}^{k,\cG}$ rounds of intervention are necessary. 
\end{proof}

Theorems \ref{thm:lower D bI} and \ref{thm:lower D cI} establish worst-case lower bounds on the maximum number of nodes that must be intervened on per experiment, as well as on the total number of experiments required to identify the directed structure of a DMG in the presence of bidirected edges. 

\begin{corollary}
    \label{coro: lower D cI}
    At least $\sum_{k=1}^{l+1} \zeta_{max}^{k,\cG}$ experiments are required to learn the directed edges of DMG $\cG$ in the worst case.
\end{corollary}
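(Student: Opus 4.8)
The plan is to read the corollary as the worst-case, contrapositive reformulation of Theorem \ref{thm:lower D cI}; since that theorem already supplies the explicit hard instance, the only remaining work is to translate the statement ``$|\cRB([\cG]^r_{\cI})| > 1$'' into the operational statement ``the directed edges cannot be learned by any algorithm.'' First I would fix arbitrary reasonable parameters $l, \{\zeta_{max}^{k}\}_{k=1}^{l+1}$ and invoke Theorem \ref{thm:lower D cI} to obtain the DMG $\cG$ it constructs, whose SCC-Anc partition realizes exactly these parameters and for which $\zeta_{max}^{k,\cG} = \zeta_{max}^{k}$. This $\cG$ plays the role of the worst-case instance.

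Next I would argue that whenever $|\cI| < \sum_{k=1}^{l+1}\zeta_{max}^{k,\cG}$, no algorithm can recover $\bD$ on this instance. The key link is the meaning of $\cI$-$r$-Markov equivalence together with Scenarios 1 and 2: the only information an experiment on $\bI$ provides is the set of conditional independences of $P_{do(\bI)}$, which under these scenarios coincides with the $r$-separations $IM_r(\cG_{\Bar{\bI}})$. Hence the entire output of any experimental procedure run over $\cI$ is a function of $\{IM_r(\cG_{\Bar{\bI}})\}_{\bI\in\cI}$ alone. By Theorem \ref{thm:lower D cI} we have $|\cRB([\cG]^r_{\cI})| > 1$, so there exists a second DMG $\cG^{\prime} \in [\cG]^r_{\cI}$ with $\cRB(\cG^{\prime}) \neq \cRB(\cG)$, that is, with a different directed-edge set, yet satisfying $IM_r(\cG_{\Bar{\bI}}) = IM_r(\cG^{\prime}_{\Bar{\bI}})$ for every $\bI\in\cI$. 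The two instances are therefore indistinguishable across all experiments in $\cI$, so no algorithm can return the correct directed part for both.

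Taking the contrapositive gives the claim: to guarantee correct identification of the directed edges of every DMG with the prescribed SCC-Anc parameters, at least $\sum_{k=1}^{l+1}\zeta_{max}^{k,\cG}$ experiments are required, which is exactly the worst-case assertion of the corollary. I do not expect a genuine obstacle here; the only point requiring care is the bookkeeping step that an algorithm's knowledge is fully captured by the interventional $r$-independence models, and this is immediate from the assumed equivalence between CI tests and $r$-separation in Scenarios 1 and 2.
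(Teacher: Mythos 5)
Your proposal is correct and follows essentially the same route as the paper, which states Corollary \ref{coro: lower D cI} as an immediate consequence of Theorem \ref{thm:lower D cI}: the hard instance constructed there admits, for any $\cI$ with $|\cI| < \sum_{k=1}^{l+1}\zeta_{max}^{k,\cG}$, a second DMG in $[\cG]^r_{\cI}$ with a different directed part, and since under Scenarios 1 and 2 the interventional CI information is exactly $\{IM_r(\cG_{\Bar{\bI}})\}_{\bI\in\cI}$, no procedure can distinguish the two. Your explicit bookkeeping of why $\cI$-$r$-Markov equivalence implies algorithmic indistinguishability is exactly the (implicit) content of the paper's derivation.
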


\begin{remark}
    Comparing our results with \cite{mokhtarian2023unified}, when there are no bidirected edges, to identify the directed structure, only $|\cI|\geq \max_{1\leq k\leq l+1}\zeta_{max}^{k,\cG}$. The presence of bidirected edges inherently increases the intervention complexity, making it more difficult to identify the directed structure of the graph. Moreover, whereas \cite{mokhtarian2023unified} establishes a lower bound for each individual intervention set $\bI$, our result provides a lower bound on $\max_{\bI\in\cI}|\bI|$. This difference has no impact on the design of algorithms with bounded intervention sizes, as these algorithms impose constraints on the maximum number of nodes allowed per intervention. Consequently, any valid upper bound must be no smaller than the established lower bound on $\max_{\bI\in\cI}|\bI|$.
\end{remark}

\subsection{Lower Bound of Identifying Non-adjacent Bidirected Edges $\bB^N$}
\label{sec:lower NB}

In this section, we provide the lower bound of $\max_{\bI\in\cI}|\bI|$ and $|\cI|$ to recognize the non-adjacent bidirected edges $\bB^N$. We start from the following lemma.

\begin{lemma}
    \label{lem:lower B}
    Consider a DMG $\cG = (\bV,\bD,\bB)$ with two nodes $S,R\in \bV$ which satisfies the following constrains,
    \begin{itemize}
        \item $(S,R)\notin\bD$, $(R,S)\notin\bD$ and $[S,R]\in \bB$, i.e., $[S,R]\in\bB^N$.
        \item There exist bidirected edges between $S$ and each of its parents, as well as between $R$ and each of its parents.
    \end{itemize}
    For a set of experiment $\cI$, if there do not exist a $\bI\in\cI$ such that $S\notin\bI,R\notin\bI$ and $Pa_{\cG}(S\cup R) \subseteq \bI$, then we have 
    \begin{equation*}
        \cG^\prime = (\bV, \bD,\bB\backslash [S,R]) \in [\cG]_{\cI}^r.
    \end{equation*}
\end{lemma}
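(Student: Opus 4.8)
The plan is to follow the template of the proof of Lemma~\ref{lem:lower D}. To establish $\cG^\prime \in [\cG]^r_\cI$ it suffices to show $IM_r(\cG_{\Bar{\bI}}) = IM_r(\cG^\prime_{\Bar{\bI}})$ for every $\bI\in\cI$. Because $\cG^\prime$ is obtained from $\cG$ by deleting the single edge $[S,R]$, we have $\cG^\prime_{\Bar{\bI}}\subseteq\cG_{\Bar{\bI}}$ and therefore $IM_r(\cG_{\Bar{\bI}})\subseteq IM_r(\cG^\prime_{\Bar{\bI}})$ automatically; all the work is in the reverse inclusion. I would first dispose of the case $S\in\bI$ or $R\in\bI$: since $[S,R]$ carries an arrowhead into each of $S$ and $R$, it is an incoming edge of an intervened node and is thus already absent in $\cG_{\Bar{\bI}}$, so $\cG_{\Bar{\bI}}=\cG^\prime_{\Bar{\bI}}$ and the two models coincide. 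This leaves the case $S\notin\bI$ and $R\notin\bI$, where the failure of the hypothesis yields a vertex $Q\in Pa_{\cG}(S\cup R)\setminus\bI$.

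For the reverse inclusion I would reason at the level of paths. Fix $(X,Y,\bZ)\in IM_r(\cG^\prime_{\Bar{\bI}})$ and suppose, toward a contradiction, that some path $\cP$ between $X$ and $Y$ is not $r$-blocked by $\bZ$ in $\cG_{\Bar{\bI}}$. If $\cP$ does not use $[S,R]$ it is already a path of $\cG^\prime_{\Bar{\bI}}$, contradicting $(X,Y,\bZ)\in IM_r(\cG^\prime_{\Bar{\bI}})$; hence $\cP$ traverses $[S,R]$, meeting it as $\cdots S\leftrightarrow R\cdots$ with an arrowhead into each endpoint. The goal is to splice out this single step, replacing it by a two-edge detour through $Q$ that does not use $[S,R]$, and to show the resulting walk $\Tilde{\cP}$ is still not $r$-blocked by $\bZ$; as $\Tilde{\cP}$ avoids $[S,R]$, it is then a walk of $\cG^\prime_{\Bar{\bI}}$, giving the contradiction. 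A useful simplification over Lemma~\ref{lem:lower D} is that deleting a bidirected edge creates no new and destroys no existing directed path, so $Anc_{\cG_{\Bar{\bI}}}(\cdot)$ and the SCC decomposition are identical in $\cG_{\Bar{\bI}}$ and $\cG^\prime_{\Bar{\bI}}$. Consequently the collider condition and the $\sigma$-non-collider (same-SCC) condition transfer verbatim between the two graphs, and no analogue of the delicate ``specific case'' of Lemma~\ref{lem:lower D} arises.

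The crux is constructing the detour and certifying it stays $r$-active. Here I would invoke the standing assumption that $S$ is joined by a bidirected edge to each of its parents and $R$ to each of its parents, and take $Q$ adjacent to both $S$ and $R$. Concretely, if $Q\notin\bZ$ I would replace $S\leftrightarrow R$ by the fork $S\leftarrow Q\to R$, and if $Q\in\bZ$ by the collider $S\leftrightarrow Q\leftrightarrow R$; these detour edges survive the intervention because none of $Q,S,R$ lies in $\bI$. In either replacement the arrowheads into $S$ and into $R$ are unchanged, so the collider/non-collider status of $S$ and of $R$ along $\Tilde{\cP}$ — and hence their blocking behaviour — is exactly as on $\cP$, and every other vertex of $\Tilde{\cP}$ is inherited from $\cP$ with the same ancestry and SCC, so it cannot block either. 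It then remains only to check the new interior vertex $Q$: in the fork case $Q$ is a non-collider with $Q\notin\bZ$, which blocks under neither $d$- nor $\sigma$-separation; in the collider case $Q\in\bZ\subseteq Anc_{\cG_{\Bar{\bI}}}(\bZ\cup\{X,Y\})$, so the collider is open. Thus $\Tilde{\cP}$ is not $r$-blocked, completing the contradiction. I expect the main obstacle to be precisely this step — guaranteeing that the available vertex $Q$ is adjacent to \emph{both} $S$ and $R$ so that the detour closes, since the fork needs $Q\to S,\,Q\to R$ and the collider needs $Q\leftrightarrow S,\,Q\leftrightarrow R$. This is where the two-sided hypothesis on bidirected edges to parents is essential, and where one must use that $Q$ can be chosen as a common parent of $S$ and $R$; verifying that the hypothesis guarantees such a common $Q$ is the delicate point I would scrutinize most carefully.
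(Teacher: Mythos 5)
Your overall strategy is the paper's: reduce the claim to $IM_r(\cG^\prime_{\Bar{\bI}})\subseteq IM_r(\cG_{\Bar{\bI}})$, dispose of the case $S\in\bI$ or $R\in\bI$ by noting $\cG_{\Bar{\bI}}=\cG^\prime_{\Bar{\bI}}$, and otherwise splice the offending occurrence of $S\leftrightarrow R$ out of an unblocked path by a two-edge detour through a non-intervened parent $Q$, using that ancestor sets and SCCs are unchanged by deleting a bidirected edge. All of that matches the paper, and your observation that no analogue of the delicate ``specific case'' of Lemma~\ref{lem:lower D} arises is also how the paper argues.

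The gap is exactly where you said you would scrutinize most carefully, and it is not closed. Your detour requires $Q$ to be joined to \emph{both} endpoints in the right way: the fork $S\leftarrow Q\to R$ needs $Q\in Pa_{\cG}(S)\cap Pa_{\cG}(R)$, and the collider $S\leftrightarrow Q\leftrightarrow R$ needs bidirected edges from $Q$ to both $S$ and $R$, which under the stated hypothesis again forces $Q$ to be a common parent. But the negated condition only yields some $Q\in Pa_{\cG}(S\cup R)\setminus\bI=(Pa_{\cG}(S)\cup Pa_{\cG}(R))\setminus\bI$; nothing guarantees a common parent outside $\bI$, so the detour may fail to exist and the contradiction is never reached. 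The paper routes the detour asymmetrically instead: it takes $Q$ to be (WLOG) a non-intervened parent of $S$ alone and replaces $R\leftrightarrow S$ by $R\leftrightarrow Q\leftrightarrow S$ when $Q\in\bZ$ and by $R\leftrightarrow Q\rightarrow S$ when $Q\notin\bZ$, so the only edge incident to $S$ it needs is $[Q,S]$ (guaranteed by the hypothesis on $S$'s parents) or $(Q,S)$, and the only edge incident to $R$ it needs is the single bidirected edge $[R,Q]$ --- an adjacency that is available in the dense worst-case constructions of Theorems~\ref{thm:lower B bI} and~\ref{thm:lower B cI} where the lemma is applied, since there every pair of nodes shares a bidirected edge. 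To repair your argument you would either have to adopt this asymmetric routing (and justify the $[R,Q]$ adjacency in the setting where the lemma is used) or strengthen the hypothesis to furnish a common parent; as written, the crucial step does not go through.
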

\begin{proof}
    The proof is similar to that of Lemma \ref{lem:lower D}. To show $\cG^{\prime}  \in [\cG]^r_{\cI}$, we only need to show $IM_r(\cG_{\Bar{\bI}}) = IM_r(\cG_{\Bar{\bI}})^\prime$ for any $\bI\in\cI$. If $S \in\bI$ or $R\in\bI$, then $\cG_{\Bar{\bI}} = \cG_{\Bar{\bI}}^\prime$ and we get $IM_r(\cG_{\Bar{\bI}}) = IM_r(\cG_{\Bar{\bI}})^\prime$. In the following, we only consider $\bI$, which does not contain $S$ and $R$.
    Since $\cG^\prime \subseteq \cG$, we have $IM_r(\cG_{\Bar{\bI}}) \subseteq IM_r(\cG_{\Bar{\bI}}^\prime)$. To complete the proof, we need to show $IM_r(\cG_{\Bar{\bI}}^\prime) \subseteq IM_r(\cG_{\Bar{\bI}})$. Let $(X,Y,\bZ) \in IM_r(\cG_{\Bar{\bI}}^\prime)$, we next prove that $(X,Y,\bZ) \in IM_r(\cG_{\Bar{\bI}})$ holds. 

    Since $Pa_{\cG}(S\cup R)\nsubseteq \bI$, we may, without loss of generality, assume that $S$ has a parent $Q$ such that $Q\notin\bI$. Since $(X,Y,\bZ) \in IM_r(\cG_{\Bar{\bI}}^\prime)$, we have every path between $X$ and $Y$ in $\cG_{\Bar{\bI}}^\prime$ is $r$-blocked by $\bZ$, we will show that these path is also $r$-blocked by $\bZ$ in $\cG_{\Bar{\bI}}$. Otherwise, by contradiction, suppose that there exists a path $\cP$ between $X$ and $Y$ is $r$-blocked by $\bZ$ in $\cG_{\Bar{\bI}}^\prime$ but not $r$-blocked in $\cG_{\Bar{\bI}}$.  
    If $Q \in\bZ$, we use $R\leftrightarrow Q \leftrightarrow S$ to replace $R\leftrightarrow S$ to get a new path $\Tilde{\cP}$. If $Q \notin \bZ$, we use $R\leftrightarrow Q \rightarrow S$ to replace $R\leftrightarrow S$ to get a new path $\Tilde{\cP}$. We can find the new path $\Tilde{\cP}$ is still not $r$-blocked by $\bZ$ in $\cG_{\Bar{\bI}}$ and it does not contain the edge $[S,R]$. Then we have $\Tilde{\cP}$ is still not $r$-blocked by $\bZ$ in $\cG_{\Bar{\bI}}^\prime$, because it do not contain $[S,R]$ and the absence of $[S,R]$ in $\cG_{\Bar{\bI}}^\prime$ does not alter the structure of its strongly connected components. Now we get a path $\Tilde{\cP}$ between $X$ and $Y$, and it is not $r$-blocked by $\bZ$ in  $\cG_{\Bar{\bI}}^\prime$. Contradiction reached. 
\end{proof}

Lemma \ref{lem:lower B} establishes that, in the worst case, identifying a non-adjacent bidirectional edge with certainty necessitates intervening on all parents of both of its endpoints. We now present two important consequences of Lemma \ref{lem:lower B}.

\begin{theorem}
    \label{thm:lower B bI}
    Consider a set of $n$ nodes denoted by $\bV$ and a constant $1 \leq c \leq n$. There exists a DMG $\cG = (\bV,\bD,\bB^N\cup\bB^A)$ over $\bV$ with $max_{[X,Y]\in\bB^N }|Pa_{\cG}(X\cup Y) | = c $. If 
    \begin{equation}
    \label{eq:thm lower B bI}
        \max_{\bI\in\cI}|\bI| < \max_{[X,Y]\in\bB^N }|Pa_{\cG}(X\cup Y) |, 
    \end{equation}
    then $|\cRA([\cG]_\cI^r)| > 1$.
\end{theorem}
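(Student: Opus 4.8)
The plan is to exhibit an explicit witness DMG whose single ``expensive'' non-adjacent bidirected edge forces the stated lower bound, and then to invoke Lemma \ref{lem:lower B} to produce a second graph in the same $\cI$-$r$-MEC that disagrees with $\cG$ on that edge even after applying $\cRA$.

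First I would construct $\cG$ on $\bV$ as follows. Fix two distinct nodes $S,R\in\bV$ and choose $c$ further nodes $P_1,\dots,P_c\in\bV\setminus\{S,R\}$ (this is where the implicit constraint $c\leq n-2$ enters, since the parents of $S$ and $R$ must lie outside $\{S,R\}$). Declare each $P_i$ a parent of $S$, i.e.\ $(P_i,S)\in\bD$, and add a bidirected edge $[S,P_i]\in\bB$; give $R$ no parents; add the bidirected edge $[S,R]\in\bB$; and leave every remaining node isolated. By construction $(S,R)\notin\bD$ and $(R,S)\notin\bD$, so $[S,R]\in\bB^N$, whereas each $[S,P_i]$ is adjacent (because $P_i\to S$) and hence lies in $\bB^A$. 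Consequently $[S,R]$ is the \emph{only} non-adjacent bidirected edge, and $Pa_\cG(S\cup R)=\{P_1,\dots,P_c\}$ has size $c$, so $\max_{[X,Y]\in\bB^N}|Pa_\cG(X\cup Y)| = c$ exactly, as required by the statement.

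Next I would verify the hypotheses of Lemma \ref{lem:lower B} for this $S$ and $R$: the edge $[S,R]$ is non-adjacent, $S$ carries a bidirected edge to each of its parents by construction, and the parent condition on $R$ is vacuous. It then remains to activate the lemma's triggering clause. The hypothesis $\max_{\bI\in\cI}|\bI| < c$ means every $\bI\in\cI$ satisfies $|\bI| < c = |Pa_\cG(S\cup R)|$, so no $\bI\in\cI$ can contain the $c$-element set $\{P_1,\dots,P_c\}$; in particular there is no $\bI\in\cI$ with $S,R\notin\bI$ and $Pa_\cG(S\cup R)\subseteq\bI$. Lemma \ref{lem:lower B} therefore gives $\cG'=(\bV,\bD,\bB\setminus[S,R])\in[\cG]_\cI^r$.

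Finally I would separate $\cG$ from $\cG'$ under $\cRA$. Because $[S,R]\in\bB^N$, the operator $\cRA$ retains it, so $\cRA(\cG)$ contains $[S,R]$ while $\cRA(\cG')$ does not; hence $\cRA(\cG)\neq\cRA(\cG')$ and $|\cRA([\cG]_\cI^r)|>1$. The argument is short precisely because Lemma \ref{lem:lower B} carries the technical weight; the only genuine care needed is in the construction, namely ensuring that $[S,R]$ is truly non-adjacent, that no other non-adjacent bidirected edge inflates the maximum beyond $c$ (here guaranteed by making $[S,R]$ the unique element of $\bB^N$), and that the edge removed to form $\cG'$ survives the $\cRA$ projection.
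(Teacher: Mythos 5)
Your overall strategy (build a witness graph, feed it to Lemma \ref{lem:lower B}, and note that $\cRA$ preserves the deleted non-adjacent edge) matches the paper's, but your witness graph breaks the argument. The paper's construction places a bidirected edge between \emph{every} pair of nodes, and this is not decorative: the proof of Lemma \ref{lem:lower B} masks the deletion of $[S,R]$ by rerouting any path that uses $R\leftrightarrow S$ through a detour $R\leftrightarrow Q\leftrightarrow S$ (or $R\leftrightarrow Q\rightarrow S$), where $Q$ is an un-intervened parent of $S$. That detour requires a bidirected edge between $R$ and $Q$, i.e.\ between $R$ and a \emph{parent of $S$} --- a condition the lemma's hypotheses do not literally state but which its proof uses, and which the paper's complete-bidirected construction supplies. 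In your graph $R$ has exactly one edge, namely $[S,R]$, so no detour exists: in $\cG'=(\bV,\bD,\bB\setminus[S,R])$ the node $R$ is isolated. Consequently, for any $\bI\in\cI$ with $S,R\notin\bI$ (e.g.\ $\bI=\varnothing$ or $\bI=\{P_1\}$, both of size $<c$), we have $(S\ncperp_r R\,|\,\varnothing)_{\cG_{\Bar{\bI}}}$ via the edge $[S,R]$ itself, but $(S\cperp_r R\,|\,\varnothing)_{\cG'_{\Bar{\bI}}}$, so $IM_r(\cG_{\Bar{\bI}})\neq IM_r(\cG'_{\Bar{\bI}})$ and $\cG'\notin[\cG]^r_{\cI}$. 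Your construction is thus a counterexample to the lemma as you are reading it, and the chain ``the lemma gives $\cG'\in[\cG]^r_{\cI}$, hence $|\cRA([\cG]^r_{\cI})|>1$'' collapses at its first link.

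The repair is exactly the paper's move: make the graph dense in bidirected edges --- in particular give $R$ a bidirected edge to each $P_i$ in addition to $[S,P_i]$ --- so that whenever some parent $P_j$ escapes the intervention, the path $R\leftrightarrow P_j\rightarrow S$ (or $R\leftrightarrow P_j\leftrightarrow S$ when $P_j$ is in the conditioning set) keeps $S$ and $R$ $r$-connected even after $[S,R]$ is removed, which is what renders the edge unidentifiable. Your side remarks --- that the stated range $1\leq c\leq n$ implicitly forces $c\leq n-2$, and that $\cRA$ retains $[S,R]$ because it is non-adjacent so the two graphs remain distinct after projection --- are correct and apply equally to the paper's construction.
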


\begin{proof}
    The proof is straightforward. We first contract a DMG $\cG =  (\bV,\bD,\bB^N\cup\bB^A)$ satisfies the following conditions: 1) $max_{[X,Y]\in\bB^N }|Pa_{\cG}(X\cup Y) | = c$; 2) there exists a bidirected edge between every pair of nodes. This construction is straightforward to achieve. Then we consider $[S,R] = \arg\max_{[X,Y]\in\bB^N }|Pa_{\cG}(X\cup Y) |$. According to Lemma \ref{lem:lower B}, we know that if there does not exist a $\bI\in\cI$ such that $S\notin\bI$, $R\notin\bI$ and $Pa_{\cG}(S\cup R)\subseteq \bI$, there will be another $\cG^\prime\in[\cG]_\cI^r$ and $\cRA(\cG)\neq \cRA(\cG^\prime)$. However, by our contraction, combining Equation (\ref{eq:thm lower B bI}), we get that the condition of Lemma \ref{lem:lower B} holds, so we complete the proof.
\end{proof}

\begin{corollary}
    \label{coro: lower B bI}
    In the worst case, the non-adjacent bidirected edges of DMG $\cG$ cannot be learned by any algorithm that performs experiments with the maximum size less than $max_{[X,Y]\in\bB^N }|Pa_{\cG}(X\cup Y) |$.
\end{corollary}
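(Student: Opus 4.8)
The plan is to derive this corollary directly from Theorem \ref{thm:lower B bI} by translating its statement about the cardinality of the equivalence class into an impossibility result for learning. The conceptual bridge is that any learning algorithm operating under Scenario 1 or Scenario 2 has access only to the interventional distributions $P_{do(\bI)}$ for $\bI\in\cI$, which—under the faithfulness and Markov assumptions—are equivalent to the $r$-separation relations encoded in $IM_r(\cG_{\Bar{\bI}})$. Hence two DMGs lying in the same $\cI$-$r$-MEC are indistinguishable to the algorithm.

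First I would fix an arbitrary collection of interventions $\cI$ whose maximum size satisfies $\max_{\bI\in\cI}|\bI| < \max_{[X,Y]\in\bB^N}|Pa_{\cG}(X\cup Y)|$, and invoke Theorem \ref{thm:lower B bI} to obtain the witness DMG $\cG = (\bV,\bD,\bB^N\cup\bB^A)$ together with the conclusion $|\cRA([\cG]_\cI^r)| > 1$. By the definition of the $\cRA$ operator, this yields two DMGs $\cG_1,\cG_2 \in [\cG]_\cI^r$ with $\cRA(\cG_1)\neq\cRA(\cG_2)$. Tracing through the construction used in Theorem \ref{thm:lower B bI} and Lemma \ref{lem:lower B}, these two graphs share the same directed edges $\bD$ but differ precisely in the presence of a non-adjacent bidirected edge $[S,R]\in\bB^N$, so the disagreement of their $\cRA$ projections is exactly a disagreement in $\bB^N$.

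Next I would argue that no algorithm can succeed on both $\cG_1$ and $\cG_2$. Since $\cG_1,\cG_2\in[\cG]_\cI^r$, we have $IM_r((\cG_1)_{\Bar{\bI}}) = IM_r((\cG_2)_{\Bar{\bI}})$ for every $\bI\in\cI$. Under the equivalence between CI relations and $r$-separations guaranteed by Scenarios 1 and 2, the entire collection of conditional independence tests available to the algorithm returns identical answers whether the ground truth is $\cG_1$ or $\cG_2$. Consequently any algorithm restricted to the experiments in $\cI$ produces the same output in both cases, and therefore necessarily misidentifies the non-adjacent bidirected edges for at least one of the two graphs. This establishes that, in the worst case, $\bB^N$ cannot be recovered whenever $\max_{\bI\in\cI}|\bI| < \max_{[X,Y]\in\bB^N}|Pa_{\cG}(X\cup Y)|$.

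The main obstacle here is conceptual rather than technical: one must make precise the claim that indistinguishability at the level of $r$-independence models entails indistinguishability at the level of observable interventional data. This is exactly where the assumptions of Scenario 1 ($d$-Markov and $d$-faithfulness) or Scenario 2 ($\sigma$-Markov and $\sigma$-faithfulness) are essential, since they supply the two-way correspondence between CI tests and $r$-separations. Once this correspondence is invoked, the remaining content of the corollary is an immediate restatement of Theorem \ref{thm:lower B bI}.
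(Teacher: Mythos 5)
Your proposal is correct and follows exactly the route the paper intends: the corollary is stated as an immediate consequence of Theorem \ref{thm:lower B bI}, and your elaboration---that $|\cRA([\cG]_\cI^r)|>1$ yields two DMGs differing only in a non-adjacent bidirected edge whose interventional CI relations coincide under the Scenario 1/2 assumptions, so any CI-test-based algorithm must err on one of them---is precisely the implicit bridge the authors rely on. The only caveat worth noting is that, as the paper itself frames it, this impossibility is relative to algorithms that extract only $r$-separation information from the experiments, which is the setting in which the lower bound is well defined.
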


To facilitate the presentation of the following theorem, we first introduce two definitions.
\begin{definition}[Directed skeleton graph and undicted component graph]
\label{def: Guc}
    For a DMG $\cG = (\bV, \bD,\bD)$, its corresponding directed skeleton graph $\cG^u$ is the skeleton of $\cRB(\cG)$ and the  undicted component graph $\cG^{uc} = (\bV,\bE)$ is the component graph of $\cG^u$, where $\bE = \{[X,Y]^u| (X,Y)\notin\bD, (Y,X)\notin\bD \}$.
\end{definition}

\begin{definition}[Edge clique covering]
    Let $\cG = (\bV,\bE)$ be an undirected graph. A clique edge cover of GG is a collection of cliques $\bC_1,\dots,\bC_k$, where each $\bC_i\subseteq\bV$, such that every edge in $\bE$ is contained in at least one of the induced subgraphs $\cG[\bC_i]$. That is, 
    \begin{equation*}
        \bE \subseteq\cup_{i=1}^k E(\cG[\bC_i]),
    \end{equation*}
    where $\cG[\bC_i]$ denotes the subgraph of $\cG$ induced by the vertex set $\bC_i$ and $E(\cG[\bC_i])$ denotes its edge set.

    The minimal edge clique covering number of $\cG$, denoted by $cc(\cG)$, is the minimum number $k$ such that there exists a clique edge cover of GG consisting of $k$ cliques.
\end{definition}

\begin{figure}[ht]
    \centering
    
    
    \begin{subfigure}[b]{0.18\textwidth}
        \includegraphics[width=\textwidth]{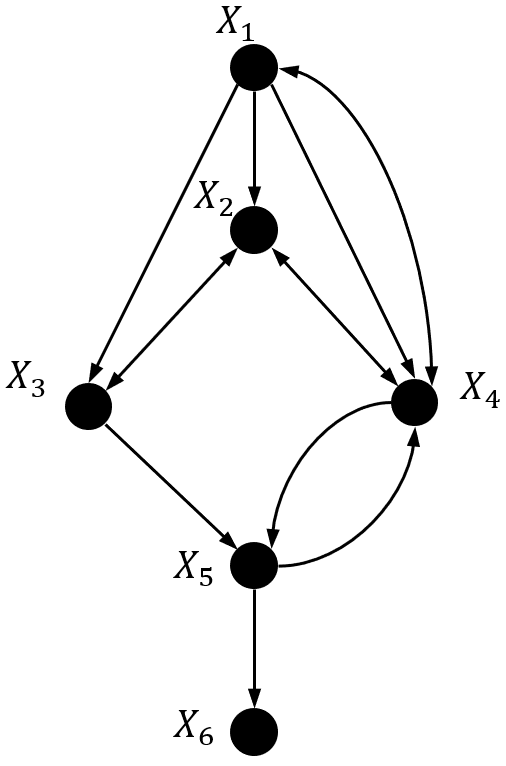}
        \caption{$\cG$}
        \label{fig:example 4.1}
    \end{subfigure}
    \hfill
    \begin{subfigure}[b]{0.25\textwidth}
        \includegraphics[width=\textwidth]{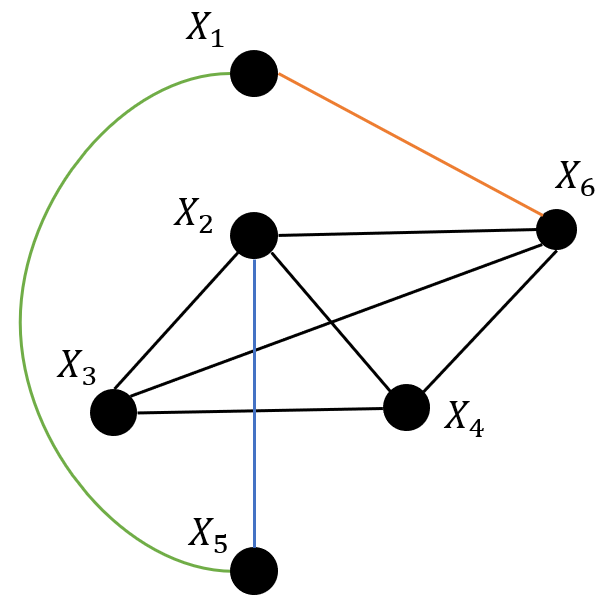}
        \caption{$\cG^{uc}$}
        \label{fig:example 4.2}
    \end{subfigure}
    \hfill
    \begin{subfigure}[b]{0.18\textwidth}
        \includegraphics[width=\textwidth]{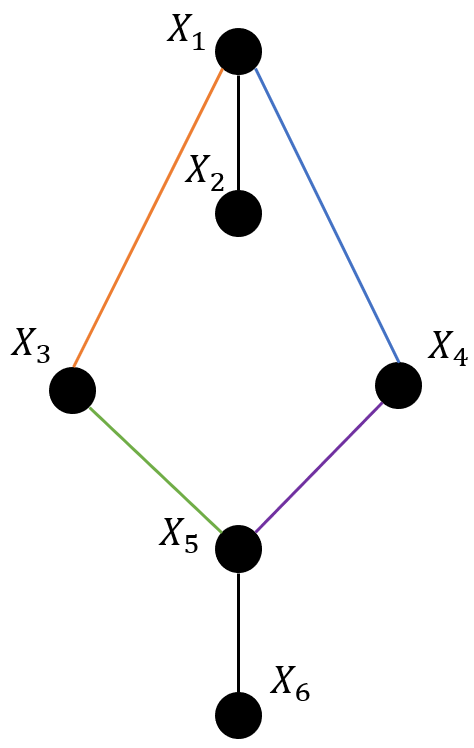}
        \caption{$\cG^{u}$}
        \label{fig:example 4.3}
    \end{subfigure}
   
    \caption{
Examples of DMG $\cG$ and its corresponding $\cG^{uc}$, $\cG^{u}$. In (b), different colors denote different cliques in the edge clique covering. In (c), different colors denote different colors of edges in the strong edge coloring.
}
    \label{fig: cc}
\end{figure}

\begin{example}[Edge clique covering]
\label{exam: edge clique covering}
    Consider the DMG $\cG$ in Figure \ref{fig:example 4.1}, Figure \ref{fig:example 4.2} is its corresponding $\cG^{uc}$ and different colors denote the different cliques in the edge clique covering. In this case, the edge clique covering is $\{\{X_1,X_6\},\{X_1,X_5\},\{X_2,X_5\},\{X_2,X_3,X_4,X_6\}\}$ and we can verify that this is the minimum edge clique covering of $\cG^{uc}$ which means that $cc(\cG^{uc}) = 4$.
\end{example}

\begin{theorem}
\label{thm:lower B cI}
    Consider a set of $n$ nodes denoted by $\bV$ and a constant $1 \leq c \leq n$. There exists a DMG $\cG = (\bV,\bD,\bB^N\cup\bB^A)$ over $\bV$ with $cc(\cG^{uc}) = c $. If 
    \begin{equation}
        |\cI| < cc(\cG^{uc}),
    \end{equation}
    then $|\cRA([\cG]_\cI^r)| > 1$
\end{theorem}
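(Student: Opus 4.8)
The plan is to reduce the identification of the non-adjacent bidirected edges $\bB^N$ to the edge clique cover problem on $\cG^{uc}$, building on Lemma \ref{lem:lower B}. Concretely, I will construct a DMG whose directed skeleton realizes a prescribed $\cG^{uc}$ with cover number $c$, place a non-adjacent bidirected edge on every edge of $\cG^{uc}$, and arrange the remaining bidirected edges so that the hypotheses of Lemma \ref{lem:lower B} hold at every node. I then show that a single experiment can "certify" only those non-adjacent bidirected edges lying inside one clique of $\cG^{uc}$, so that certifying all of them forces $\cI$ to induce an edge clique cover, whence $|\cI|\ge cc(\cG^{uc})$.

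For the construction, fix an undirected graph $H$ on $\bV$ with $cc(H) = c$ (such a graph exists for the given $c$, e.g.\ a clique together with a suitable number of pendant edges, or a cycle). Orient the edges of the complement $\bar H$ according to a fixed linear order of $\bV$ to obtain $\bD$; since $\cRB(\cG)$ is then acyclic, $d$- and $\sigma$-separation coincide and $\cG^{uc} = H$ by Definition \ref{def: Guc}. Put a bidirected edge on every edge of $H$ — these are non-adjacent and form $\bB^N$ — and a bidirected edge between every node and each of its parents — these are adjacent and form $\bB^A$. By construction every node has bidirected edges to all of its parents, so the hypotheses of Lemma \ref{lem:lower B} are met for every candidate pair $[S,R]\in\bB^N$.

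The heart of the argument, and the step I expect to be the main obstacle, is the following clique characterization. For any experiment $\bI$, set $\bC_{\bI} := \{X\in\bV\setminus\bI : Pa_\cG(X)\cap(\bV\setminus\bI)=\varnothing\}$. First, $\bI$ satisfies the condition of Lemma \ref{lem:lower B} for $[S,R]$ (namely $S,R\notin\bI$ and $Pa_\cG(S\cup R)\subseteq\bI$) if and only if $S,R\in\bC_{\bI}$. Second, $\bC_{\bI}$ is a clique in $\cG^{uc}$: if $X,Y\in\bC_{\bI}$ were joined by a directed edge, say $(X,Y)\in\bD$, then $X\in Pa_\cG(Y)\cap(\bV\setminus\bI)$, contradicting $Y\in\bC_{\bI}$; hence no directed edge joins $X$ and $Y$, so $[X,Y]^u$ is an edge of $\cG^{uc}$. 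Thus the non-adjacent bidirected edges that $\bI$ can certify are exactly the edges of the clique $\cG^{uc}[\bC_{\bI}]$.

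Finally I assemble the lower bound. Since every edge of $\cG^{uc}$ carries a non-adjacent bidirected edge in $\bB^N$, identifying $\cRA(\cG)$ requires certifying each such edge, so the family $\{\bC_{\bI}\}_{\bI\in\cI}$ must cover every edge of $\cG^{uc}$; that is, it is an edge clique cover, and therefore $|\cI|\ge cc(\cG^{uc})$. Contrapositively, if $|\cI|<cc(\cG^{uc})$ these cliques cannot cover all edges, so some $[S,R]\in\bB^N$ is certified by no $\bI\in\cI$; Lemma \ref{lem:lower B} then yields $\cG' = (\bV,\bD,\bB\setminus[S,R])\in[\cG]^r_\cI$, and since $[S,R]\in\bB^N$ we have $\cRA(\cG')\ne\cRA(\cG)$, giving $|\cRA([\cG]^r_\cI)|>1$.
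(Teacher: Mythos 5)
Your proof is correct and follows essentially the same route as the paper: construct a DMG realizing $cc(\cG^{uc})=c$ with the hypotheses of Lemma \ref{lem:lower B} satisfied everywhere, observe that each experiment can certify only the non-adjacent bidirected edges inside a single clique of $\cG^{uc}$, and conclude that $\cI$ must induce an edge clique cover. Your explicit set $\bC_{\bI}$ is a cleaner formalization of the paper's first "observation," but the underlying argument is the same.
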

\begin{proof}
    We first contract a DMG $\cG =  (\bV,\bD,\bB^N\cup\bB^A)$ that satisfies the following conditions: 1) $cc(\cG^{uc}) = c$; 2) there exists a bidirected edge between every pair of nodes. This construction is straightforward to achieve, since the $cc(\cG^{uc})$ depends only on the directed edges of $\cG$. Using Lemma \ref{lem:lower B}, we know that to recognize the $[S,R]\in\bB^N$, we at least need to construct a $\bI\in\cI$ such that $S\notin\bI,R\notin\bI$ and $Pa_{\cG}(S\cup R)\subseteq \bI$. Then we give the following three important observations and its proof.
    \begin{itemize}
        \item In a single intervention, we can identify at most the non-adjacent bidirected edges in $\cG$ among the nodes forming a single clique $\bC$ in the undirected component graph $\cG^{uc}$.
        
        \textbf{proof: }This is because a clique in $\cG^{uc}$ implies that the corresponding nodes in $\cG$ are pairwise non-adjacent via directed edges. As a result, they may form non-adjacent bidirected edges in $\cG$, and since none of them is a parent of another within the set, it is possible to intervene on their respective parents in $\cG$ without intervening on the nodes in the clique itself. In contrast, if a set of nodes does not form a clique in $\cG^{uc}$, then there must exist at least one pair of nodes within the set that are connected by a directed edge in $\cG$. Without loss of generality, suppose $X$ is a parent of $Y$ in $\cG$. To identify all non-adjacent bidirectional edges involving $Y$, we would need to intervene on $X$. However, doing so would prevent us from identifying the non-adjacent bidirectional edges involving $X$. Repeating this argument recursively eventually leads to a subset of nodes that form a clique in $\cG^{uc}$, which characterizes the maximal set of nodes whose non-adjacent bidirectional edges in $\cG$ can be identified simultaneously through a single intervention.
        \item The minimal number of experiments of identifying all non-adjacent bidirectional edges in $\cG$ can be reduced to the minimal number of edge clique covering problem in the undirected complement graph $\cG^{uc}$. 

        \textbf{proof: }Based on the above observation, this result is immediate: since each intervention can reveal at most the non-adjacent bidirected edges among nodes in a single clique of $\cG^{uc}$, and all such edges in $\cG$ are located where $\cG^{uc}$ has edges, at least $cc(\cG^{uc})$ interventions are necessary.

        \item Suppose $\{\bC_1,\dots,\bC_k\}$ is a minimal edge clique covering of $\cG^{uc}$, then $Pa_{\cG}(\bC_i)\neq Pa_{\cG}(\bC_j)$ for $i\neq j$. 

        \textbf{proof: }By contradiction, suppose $Pa_{\cG}(\bC_i)= Pa_{\cG}(\bC_j)$. Then, the nodes in $\bC_i$ and $\bC_j$ cannot be parents or children of one another in $\cG$. This implies that there are no directed edges between any pair of nodes from $\bC_i$ and $\bC_j$, and hence $\bC_i\cup \bC_j$ form a clique in the undirected complement graph $\cG^{uc}$. However, this contradicts the assumption that  $\{\bC_1,\dots,\bC_k\}$ is a minimum edge clique covering of $\cG^{uc}$.
    \end{itemize}
    Combining the three observations above, we conclude that accurately identifying all non-adjacent bidirected edges in $\cG$ requires at least as many experiments as $cc(\cG^{uc})$. This completes the proof.
    
\end{proof}

\begin{corollary}
    \label{coro: lower B cI}
    At least $cc(\cG^{uc})$ experiments are required to learn the non-adjacent bidirected edges of DMG $\cG$ in the worst case.
\end{corollary}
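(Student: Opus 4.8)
The plan is to obtain this corollary as an immediate consequence of Theorem \ref{thm:lower B cI}, read as a worst-case lower bound on $|\cI|$. First I would fix an arbitrary target value $c$ and invoke the theorem to produce a DMG $\cG$ over $\bV$ with $cc(\cG^{uc}) = c$ having the stated failure property. The theorem's contrapositive says precisely that whenever $|\cI| < cc(\cG^{uc})$, we have $|\cRA([\cG]^r_{\cI})| > 1$. So for this particular $\cG$, any collection of experiments of size strictly below $cc(\cG^{uc})$ leaves at least two distinct DMGs in the $\cI$-$r$-Markov equivalence class whose $\cRA$-projections disagree.

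The second step is to translate "$|\cRA([\cG]^r_{\cI})| > 1$" into "$\bB^N$ cannot be learned." Concretely, there exist $\cG_1,\cG_2 \in [\cG]^r_{\cI}$ with $\cRA(\cG_1)\neq\cRA(\cG_2)$; since $\cRA$ retains both the directed edges $\bD$ and the non-adjacent bidirected edges $\bB^N$, the two graphs must differ in $\bD$ or in $\bB^N$. In the construction underlying Theorem \ref{thm:lower B cI} (which applies Lemma \ref{lem:lower B} to a maximal clique of $\cG^{uc}$), the two witnesses differ exactly by a single non-adjacent bidirected edge $[S,R]\in\bB^N$, while their directed structure is identical. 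Because $\cG_1$ and $\cG_2$ are $\cI$-$r$-Markov equivalent, no algorithm restricted to the experiments in $\cI$ and to $r$-separation tests can distinguish them, and hence cannot decide whether $[S,R]\in\bB^N$. Therefore identifying $\bB^N$ with certainty is impossible, establishing that at least $cc(\cG^{uc})$ experiments are required for this $\cG$.

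Since the above holds for a graph realizing any admissible value of $cc(\cG^{uc})$, the bound is a genuine worst-case statement over the class of DMGs, which is exactly the claim of the corollary. The only point requiring care—and the nearest thing to an obstacle—is to confirm that the ambiguity produced by the theorem resides in $\bB^N$ rather than in the adjacent bidirected edges $\bB^A$ (which $\cRA$ already discards) or in $\bD$; this is guaranteed because the witness $\cG' = (\bV,\bD,\bB\setminus[S,R])$ from Lemma \ref{lem:lower B} shares the directed part of $\cG$ and differs only in a non-adjacent bidirected edge, so $\cRA(\cG)\neq\cRA(\cG')$ is forced by a genuine disagreement in $\bB^N$. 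Everything else is a direct invocation of Theorem \ref{thm:lower B cI}.
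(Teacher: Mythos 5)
Your proposal is correct and follows essentially the same route as the paper: the corollary is stated there as an immediate consequence of Theorem \ref{thm:lower B cI}, whose proof already constructs the hard instance via Lemma \ref{lem:lower B} and the clique-covering observations. Your additional check that the ambiguity witnessed by $\cG' = (\bV,\bD,\bB\setminus[S,R])$ lies in $\bB^N$ rather than in $\bD$ or $\bB^A$ is the right point to verify and is handled correctly.
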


\begin{remark}
    Even with full knowledge of the directed edge structure of $\cG$, computing the minimal edge clique covering of $\cG^{uc}$ remains NP-hard. Therefore, motivated by the strategy proposed in \cite{NIPS2017_291d43c6}, we employ a probabilistic approach to approximate the covering in our subsequent method.
\end{remark}

Theorems \ref{thm:lower B bI} and \ref{thm:lower B cI} establish worst-case lower bounds on the maximum number of nodes
that must be intervened on per experiment, as well as on the total number of experiments
required to identify the non-adjacent bidirected structure of a DMG.

\section{Unbounded-size Experiment Design}
\label{sec: unb algorithm}
In this section, we propose an experiment design algorithm for learning the structure of a DMG $\cG$ that may contain cycles and latent confounders, under the setting where there is no constraint on the size of each designed intervention. In Section \ref{sec:unb 0}, we introduce the observational graph estimate $\cG_r^{obs}$ and explain why learning the structure of a DMG using observational data alone is challenging in the presence of cycles and confounders. Sections \ref{sec:unb 1.1} and \ref{sec:unb 1.2} present a set of designed experiments aimed at identifying the directed edges of $\cG$. Sections \ref{sec:unb 2.1} and \ref{sec:unb 2.2} introduce an additional set of experiments for identifying the bidirected edges of $\cG$.

\subsection{Step 0: Identify $\cG^{obs}_{r}$} 
\label{sec:unb 0}
As discussed in \cite{mokhtarian2023unified}, when cycles are present, the skeleton of a DMG cannot be identified from observational data alone, even without confounders. In this section, we further characterize the graphs that can be identified from observational data when confounders are considered. 

\begin{definition}[$\cG_r^{obs}$]
Suppose $\cG = (\bV,\bD,\bB)$ is a DMG. Let $\cG_r^{obs}$ denote the undirected graph
 over $\bV$ where there is an edge between $X$ and $Y$ if and only if $X$ and $Y$ are not $r$-separable
 in $\cG$, i.e., for any $\bS\subseteq\bV\backslash\{X,Y\}$ we have $(X\ncperp_rY|\bS)_{\cG}$.
\end{definition}

Find that $\cG_r^{obs}$ contains the skeleton of $\cG$, and it also contains other edges when we consider the cycle and confounder. To describe this, we give the following two definitions.

\begin{definition}[$d$-inducing path]
    Let $\cG = (\bV,\bD,\bB)$ be a DMG. An $d$-inducing path between two nodes $X,Y\in\bV$ is a path in $\cG$ between $X$ and $Y$ on which every collider is in $Anc_{\cG}(\{X,Y\})$.
\end{definition}

\begin{definition}[$\sigma$-inducing path]
    Let $\cG = (\bV,\bD,\bB)$ be a DMG. An $\sigma$-inducing path between two nodes $X,Y\in\bV$ is a path in $\cG$ between $X$ and $Y$ on which every collider is in $Anc_{\cG}(\{X,Y\})$ and each non-endpoint non-collider on the path, only has outgoing directed edges to neighboring nodes on the path that lie in the same SCC of $\cG$.
\end{definition}

\begin{proposition}[\cite{mooij2020constraint}]
\label{prop:inducing path}
    Let $\cG = (\bV,\bD,\bB)$ be a DMG and $X,Y$ two distinct nodes in $\cG$. Then  $(X\ncperp_rY|\bS)_{\cG}$ for any $\bS\subseteq\bV\backslash\{X,Y\}$ if and only if there is an $r$-inducing path between $X$ and $Y$.
\end{proposition}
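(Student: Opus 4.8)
The plan is to prove both implications at once for $r\in\{d,\sigma\}$, exploiting the fact that the \emph{only} place where $d$- and $\sigma$-separation differ is in how a conditioned non-collider behaves, and that this is exactly the only place where the $d$- and $\sigma$-inducing-path definitions differ. Throughout I read the $d$-inducing path in the standard way, i.e. with the additional understanding that every non-endpoint is a collider (necessarily an ancestor of $\{X,Y\}$); without this clause a path such as $X\to Z\to Y$ would vacuously qualify yet be separated by $\{Z\}$, so the statement forces this reading. The $\sigma$-inducing path is then its relaxation permitting non-collider internal nodes whose on-path outgoing directed edges remain inside $SCC_{\cG}(\cdot)$, mirroring the relaxed non-collider clause of $\sigma$-blocking.

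For the direction asserting that an inducing path forces inseparability, I would fix an $r$-inducing path $\cP$ and an arbitrary $\bS\subseteq\bV\setminus\{X,Y\}$ and show that $\cP$ itself is never $r$-blocked by $\bS$, hence $(X\ncperp_r Y|\bS)_{\cG}$. Every collider $Z_i$ of $\cP$ lies in $Anc_{\cG}(\{X,Y\})\subseteq Anc_{\cG}(\bS\cup\{X,Y\})$, so the collider clause never fires. For a non-collider $Z_i$: in the $d$-case there are none; in the $\sigma$-case, if $Z_i\notin\bS$ it cannot block, and if $Z_i\in\bS$ then, by the defining property of a $\sigma$-inducing path, all its on-path outgoing directed edges stay within $SCC_{\cG}(Z_i)$, which is precisely the configuration in which the $\sigma$-blocking clause for non-colliders fails. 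Thus no clause fires for any $\bS$, giving inseparability with no need to modify $\cP$.

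For the converse — the substantive half — suppose $X$ and $Y$ cannot be $r$-separated, and take the specific set $\bS^{\ast}=Anc_{\cG}(\{X,Y\})\setminus\{X,Y\}$, so that $\bS^{\ast}\cup\{X,Y\}=Anc_{\cG}(\{X,Y\})$. By hypothesis some path $\cP$ between $X$ and $Y$ is active (not $r$-blocked) given $\bS^{\ast}$; I claim any such $\cP$ is an $r$-inducing path. Activeness forces every collider of $\cP$ into $Anc_{\cG}(\bS^{\ast}\cup\{X,Y\})=Anc_{\cG}(\{X,Y\})$, the collider requirement. For the non-colliders I would argue that a non-endpoint non-collider $Z_i$ cannot satisfy $Z_i\notin Anc_{\cG}(\{X,Y\})$: being a non-collider it has an outgoing on-path edge $Z_i\to Z_{i+1}$, so $Z_{i+1}\notin Anc_{\cG}(\{X,Y\})$ too; $Z_{i+1}$ cannot be a collider (an active path has no collider outside $Anc_{\cG}(\{X,Y\})$), and the arrowhead entering it forces its other edge to be the outgoing $Z_{i+1}\to Z_{i+2}$. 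Iterating builds a directed sub-path $Z_i\to Z_{i+1}\to\cdots$ that can only end at an endpoint $X$ or $Y$, yielding $Z_i\in Anc_{\cG}(\{X,Y\})$, a contradiction. Hence every non-collider lies in $\bS^{\ast}$; for $r=d$ this is impossible, since a conditioned non-collider $d$-blocks the path, so $\cP$ has no non-colliders and is a $d$-inducing path, whereas for $r=\sigma$ such a conditioned non-collider avoids $\sigma$-blocking $\cP$ exactly when all its on-path outgoing edges stay in its SCC, which is the $\sigma$-inducing condition.

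The main obstacle I anticipate is the bookkeeping of the $\sigma$-blocking clause in both directions: one must verify that ``$Z_i\in\bS$ is a non-collider with an on-path outgoing edge leaving $SCC_{\cG}(Z_i)$'' is the exact negation of the $\sigma$-inducing non-collider condition, and that the propagation step in the converse never stalls (each intermediate node is forced to be a forward-pointing non-collider until an endpoint is reached, using only the finiteness of $\cP$ and the arrowhead it inherits from the preceding edge). I would also confirm that $\bS^{\ast}$ is a legitimate conditioning set (disjoint from $\{X,Y\}$ by construction) and that everything degenerates correctly when $\cG$ is acyclic, where SCCs are singletons and the two inducing-path notions coincide, recovering the classical inducing-path characterization.
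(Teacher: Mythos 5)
Your proof is correct and, unlike the paper, self-contained: the paper's ``proof'' of this proposition is a one-line deferral to \citet{mooij2020constraint} for the $\sigma$-case plus the claim that the $d$-case is a straightforward extension, so there is no argument in the text to compare against step by step. What you give is the standard argument and it goes through: the forward direction checks that neither blocking clause can fire on an $r$-inducing path for any conditioning set, and the converse instantiates the single witness $\bS^{\ast}=Anc_{\cG}(\{X,Y\})\setminus\{X,Y\}$ and shows that any path left active by $\bS^{\ast}$ is $r$-inducing, the key step being the propagation argument that a non-endpoint non-collider outside $Anc_{\cG}(\{X,Y\})$ would head a directed sub-path that is forced, node by node (no arrowhead may terminate at a non-collider outside $Anc_{\cG}(\{X,Y\})$ without creating a forbidden collider), all the way to an endpoint, a contradiction. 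Two remarks. First, your decision to read the $d$-inducing-path definition with the extra clause that every non-endpoint is a collider is not cosmetic: as you observe, the paper's stated definition makes $X\to Z\to Y$ vacuously a $d$-inducing path while $\{Z\}$ $d$-separates $X$ from $Y$, so the proposition is false under the literal definition and your reading is the standard repair; this is a genuine defect in the paper's definition worth reporting rather than silently fixing. Second, your bookkeeping tacitly assumes paths do not repeat vertices, so that a non-endpoint non-collider shown to lie in $Anc_{\cG}(\{X,Y\})$ necessarily lies in $\bS^{\ast}$ rather than being a reoccurrence of $X$ or $Y$; the paper's definition of a path does not state this convention, so you should make it explicit, though it is the usual one and harmless here.
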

\begin{proof}
    The proof of $\sigma$-separation case is shown in \cite{mooij2020constraint}, and the $ d$-separation case is its straightforward extension.
\end{proof}

\begin{figure}[ht]
    \centering
    
    
    \begin{subfigure}[b]{0.18\textwidth}
        \includegraphics[width=\textwidth]{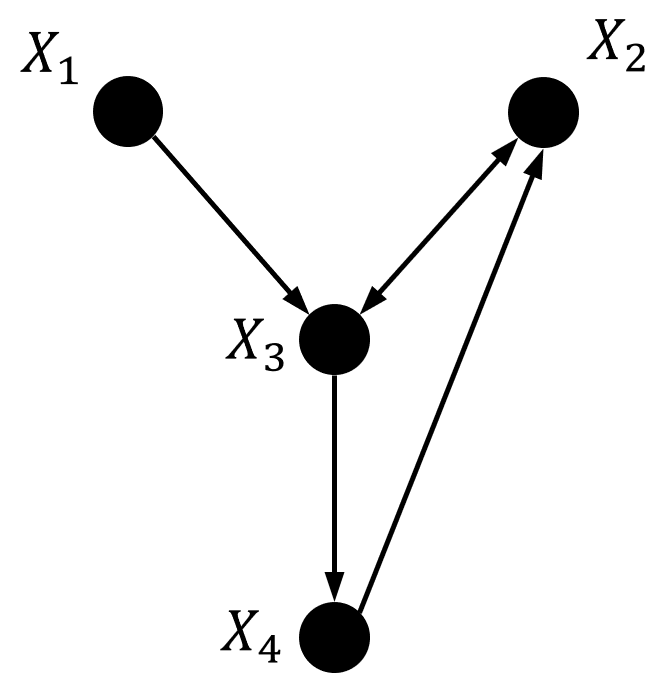}
        \caption{$\cG$}
        \label{fig:example 5.1}
    \end{subfigure}
    \hfill
    \begin{subfigure}[b]{0.18\textwidth}
        \includegraphics[width=\textwidth]{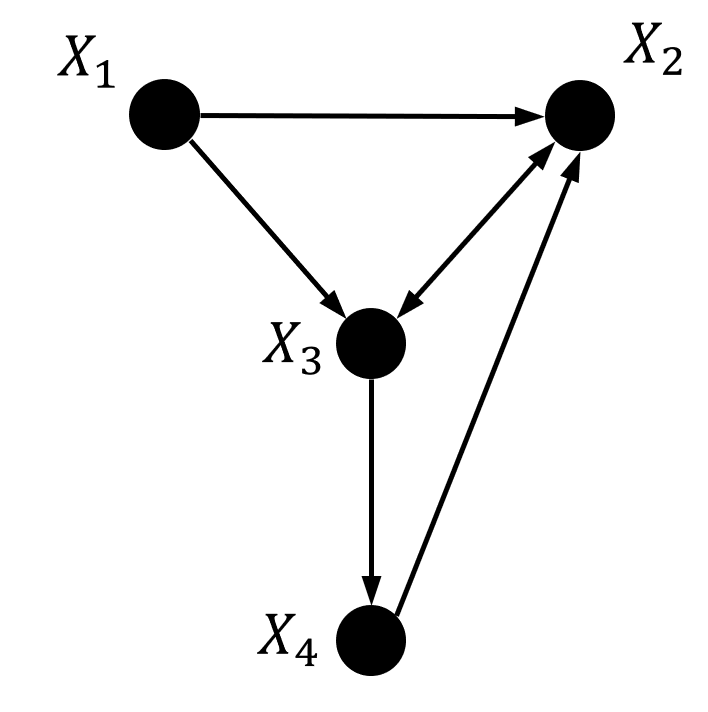}
        \caption{$\Tilde{\cG}$}
        \label{fig:example 5.2}
    \end{subfigure}
    \hfill
    \begin{subfigure}[b]{0.18\textwidth}
        \includegraphics[width=\textwidth]{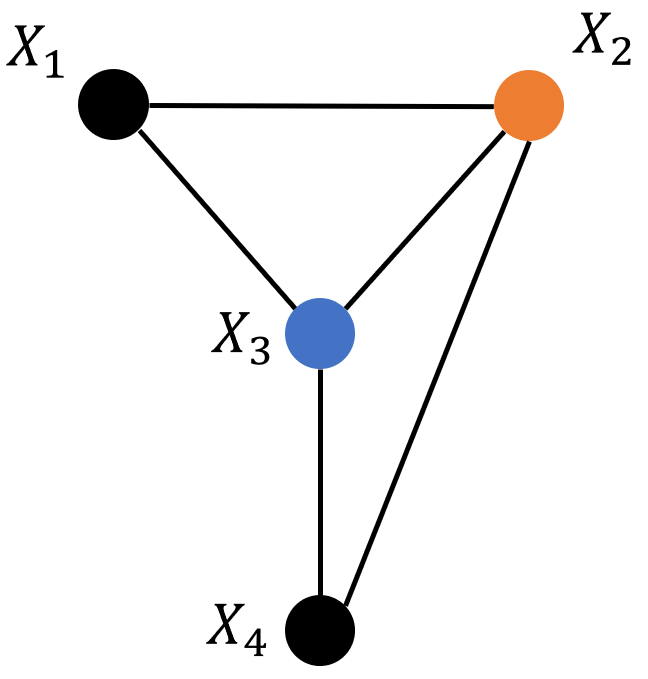}
        \caption{$\cG_r^{obs}$ and $\Tilde{\cG}_r^{obs}$}
        \label{fig:example 5.3}
    \end{subfigure}
    \hfill
    \begin{subfigure}[b]{0.25\textwidth}
        \includegraphics[width=\textwidth]{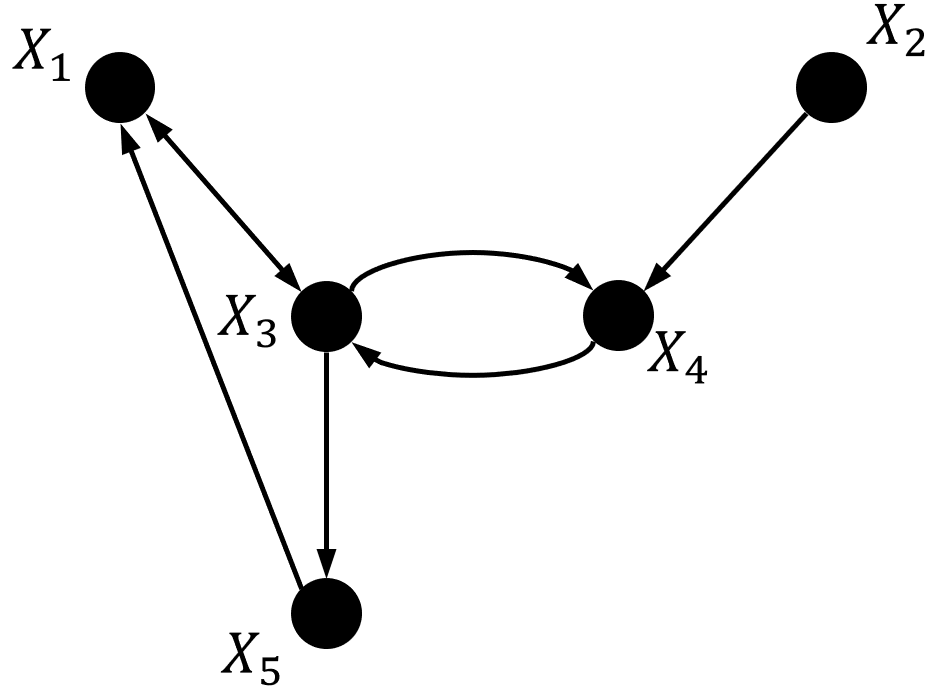}
        \caption{$\cH$}
        \label{fig:example 5.4}
    \end{subfigure}

    \begin{subfigure}[b]{0.25\textwidth}
        \includegraphics[width=\textwidth]{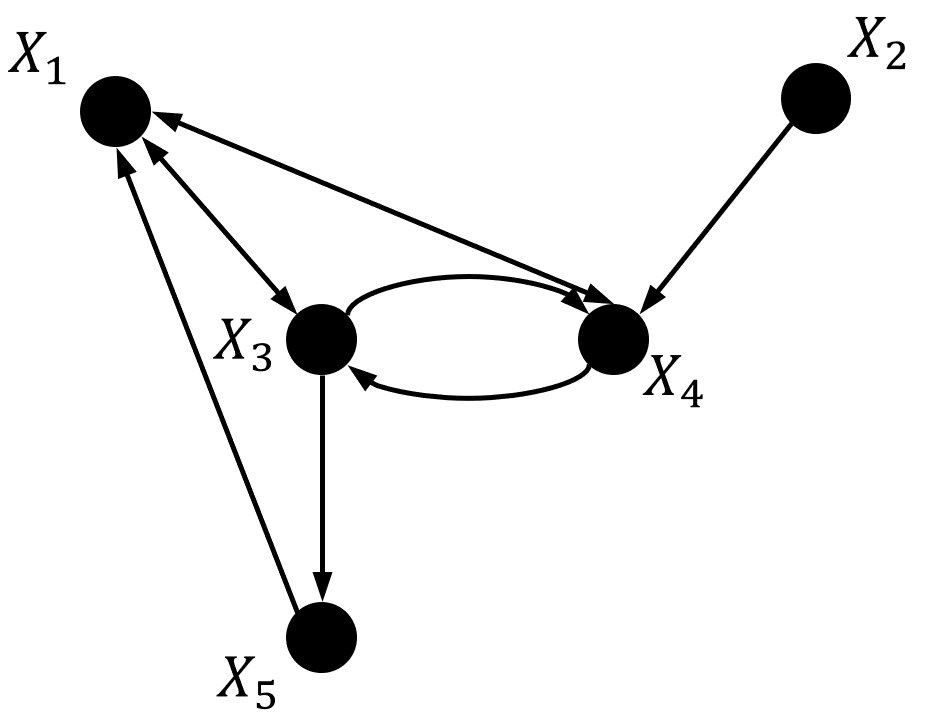}
        \caption{$\Tilde{\cH}$}
        \label{fig:example 5.5}
    \end{subfigure}
    \hfill
    \begin{subfigure}[b]{0.25\textwidth}
        \includegraphics[width=\textwidth]{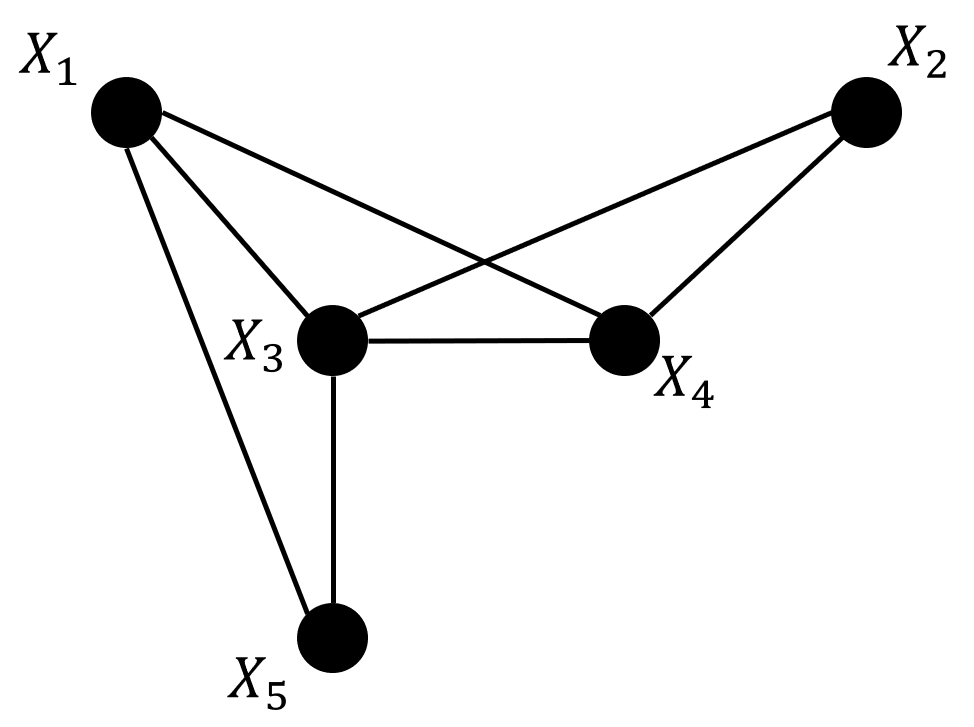}
        \caption{$\cH_d^{obs}$ and $\Tilde{\cH}_d^{obs}$}
        \label{fig:example 5.6}
    \end{subfigure}
    \hfill
    \begin{subfigure}[b]{0.25\textwidth}
        \includegraphics[width=\textwidth]{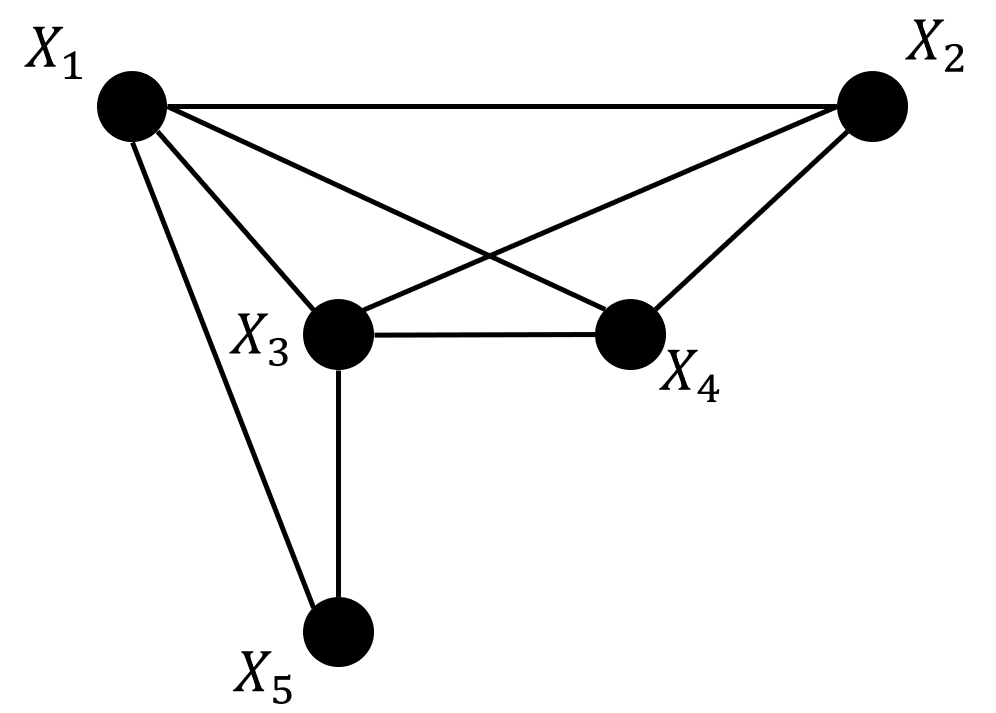}
        \caption{$\cH_\sigma^{obs}$ and $\Tilde{\cH}_\sigma^{obs}$}
        \label{fig:example 5.7}
    \end{subfigure}

    \caption{
DMGs of Examples \ref{exam: Gobs} and \ref{exam: step 1.1}. In (c), different colors denote different colors of nodes in the vertex coloring.
}
    \label{fig: Gobs}
\end{figure}

\begin{example}[Inducing path and $\cG_r^{obs}$]
\label{exam: Gobs}
    Consider the DMG $\cG$ in Figure \ref{fig:example 5.1}, we find that $X_1\rightarrow X_3 \leftrightarrow X_2$ is a $r$-inducing path between $X_1$ and $X_2$, since $X_3$ is the ancestor of $X_2$. Then we have the $\cG_r^{obs}$ is shown in Figure \ref{fig:example 5.3}. We can also find that although $\cG$ and $\Tilde{\cG}$ have different skeletons, they have the same condition-independent relationship according to observation data, i.e., $\cG_r^{obs} = \Tilde{\cG}_r^{obs}$.

    Consider the DMG $\cH$, we can find there are three inducing paths: 1) $X_1\leftrightarrow X_3 \leftarrow X_4$; 2) $X_3\rightarrow X_4 \leftarrow X_5$; 3) $X_1\leftrightarrow X_3 \leftarrow X_4 \leftarrow X_2$. The first two paths are both $d$-inducing paths and $\sigma$-inducing paths, the third path is only $\sigma$-inducing path. Then we have $\cH_d^{obs}$ and $\cH_\sigma^{obs}$ are shown in Figures \ref{fig:example 5.6} and \ref{fig:example 5.7} respectively. We can also find that although $\cH$ and $\Tilde{\cH}$ have different skeletons, they have the same condition-independent relationship according to observation data, i.e., $\cH_r^{obs} = \Tilde{\cH}_r^{obs}$.
\end{example}

Note that if there exists an edge between $X$ and $Y$, whether directed or bidirected, this edge itself forms an inducing path of length one. In this case, the dependence between $X$ and $Y$ arises directly from their adjacency in the skeleton. In the absence of confounders and cycles, all inducing paths correspond exactly to edges in the skeleton of the DMG. However, when confounders and cycles are present, inducing paths between $X$ and $Y$ may exist even without an edge between them. These paths prevent $X$ and $Y$ from being $r$-separated by any conditioning set, making it impossible to recover the true skeleton $\cG$ solely from observational data. As a result, we can at best identify $\cG_r^{obs}$, which reflects the dependencies observable under such constraints.

At step 0,  we learn $\cG_r^{obs}$ from observational data using existing methods such as the one proposed by \cite{ghassami2020characterizing}.

\subsection{Step 1.1: Identify Ancestral Relationship}
\label{sec:unb 1.1}
In this section, we introduce step 1.1 of our algorithm for learning the descendant sets $\{De_{\cG}(X)\}_{X\in\bV}$ and SCCs $\cS = \{\bS_1,\dots,\bS_s\}$ of $\cG$. This part is similar to \cite{mokhtarian2023unified}, but we consider the presence of confounders.

First, we introduce the vertex coloring.
\begin{definition}[Vertex coloring]
    A vertex coloring for an undirected graph $\cG = (\bV,\bE)$ is an assignment of colors to the vertices, such that no two adjacent vertices are of the same color. Chromatic number of $\cG$, denoted by $\chi(\cG)$, is the smallest number of colors needed for a vertex coloring of $\cG$.
\end{definition}

Based on the vertex coloring, we introduce the colored separating system.

\begin{definition}[Colored separating system]
    Suppose $\bV = \{X_1,\dots,X_n\}$ and let $\cC = \{C_1,\dots,C_n\}$  be an arbitrary coloring for $\bV$. A colored separating system $\cI$ on $(\bV,\cC)$ is a collection of subsets of $\bV$ for every distinct ordered pair of variables $(X_i,X_j)$ in $\bV$, if $C_i\neq C_j$, then there exists $\bI\in\cI$ such that $X_i\in\bI$ and $X_j\notin \bI$.
\end{definition}

\begin{proposition}[\cite{mokhtarian2023unified}]
\label{prop: colored separating}
    There exists a colored separating system on $(\bV,\cC)$ with at most $2\lceil\log_2(\chi)\rceil$ elements, where $\chi$ is the number of colors in $\cC$.
\end{proposition}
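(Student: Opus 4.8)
The plan is to reduce the construction to a binary encoding of the $\chi$ color classes, thereby converting the separating requirement on vertices into a separating requirement on colors. First I would set $m = \lceil \log_2 \chi \rceil$ and fix an injective assignment of binary codewords $c \mapsto (b_1(c),\dots,b_m(c)) \in \{0,1\}^m$ to the $\chi$ colors appearing in $\cC$. Such an injection exists precisely because $2^m \geq \chi$, so each color receives a distinct length-$m$ bit string.

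Next, for each bit position $t \in \{1,\dots,m\}$ I would introduce the two complementary subsets
\[
\bI_t^1 = \{X_i \in \bV : b_t(C_i) = 1\}, \qquad \bI_t^0 = \{X_i \in \bV : b_t(C_i) = 0\},
\]
and take $\cI = \{\bI_t^1, \bI_t^0 : 1 \le t \le m\}$, a collection of exactly $2m = 2\lceil \log_2 \chi\rceil$ subsets of $\bV$. To verify that $\cI$ is a colored separating system, I would take any ordered pair $(X_i,X_j)$ with $C_i \neq C_j$. Since the assigned codewords are distinct, they differ in some position $t$. If $b_t(C_i)=1$ and $b_t(C_j)=0$, then $X_i \in \bI_t^1$ while $X_j \notin \bI_t^1$; if instead $b_t(C_i)=0$ and $b_t(C_j)=1$, then $X_i \in \bI_t^0$ while $X_j \notin \bI_t^0$. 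In either case some member of $\cI$ contains $X_i$ and excludes $X_j$, which is exactly the defining property, and the cardinality bound follows immediately from the construction.

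I do not anticipate a genuine obstacle, as this is a clean encoding-and-counting argument rather than anything requiring the graph-theoretic machinery of the earlier sections. The one point demanding care is that the separating condition is imposed on \emph{ordered} pairs: a single set per bit position, say only $\bI_t^1$, would separate $(X_i,X_j)$ in one direction but could fail on the reverse pair $(X_j,X_i)$. Retaining both complementary sets $\bI_t^0$ and $\bI_t^1$ at each position is precisely what secures separation in both orderings, and this is what produces the factor of $2$ in the stated bound $2\lceil \log_2 \chi\rceil$.
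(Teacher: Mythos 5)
Your construction is correct and is essentially the standard constructive argument that the paper defers to in \cite{mokhtarian2023unified}: assign distinct $\lceil\log_2\chi\rceil$-bit codewords to the colors and take both complementary sets at each bit position, with the factor of $2$ arising exactly because the separating condition is on ordered pairs. The paper itself gives no independent proof, only the remark that the cited proof is constructive, and your argument supplies precisely that construction.
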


\begin{remark}
    The proof of Proposition \ref{prop: colored separating} in \cite{mokhtarian2023unified} is constructive. This allows us to obtain a colored separating system on $(\bV,\cC)$ with at most $2\lceil\log_2(\chi)\rceil$ elements.
\end{remark}

\begin{remark}
    Computing the minimum vertex coloring of an undirected graph $\cG$ is NP-hard. However, a greedy algorithm exists that can find a proper coloring using at most $ d+1$ colors in $O(n)$ time, where $n$ is the number of nodes and $d$ is the maximum degree of $\cG$.
\end{remark}

\begin{example}[Coloring separating system]
\label{exam: step 1.1}
    Consider the DMG $\cG$ in Figure \ref{fig:example 5.1}, Figure \ref{fig:example 5.3} is its vertex coloring. We can get the coloring separating system $\cI$ is $\cI = \{\{X_1,X_4\},\{X_3\},\{X_2\}\}$ and $|\cI| = 3 \leq 2\lceil \log_2(\chi(\cG_r^{obs}))\rceil = 4$.
\end{example}
Based on  Proposition \ref{prop: colored separating}, we present Algorithm \ref{alg: 1.1} for finding the descendant set and the set of SCCs in $\cG$.

\begin{algorithm}
        \caption{Learning descendant sets and strongly connected components}
	\label{alg: 1.1}
    \begin{algorithmic}
        \REQUIRE $\cG_r^{obs}$
        \ENSURE $\{De_{\cG}(X)\}_{X\in\bV}$ and SCCs $\cS = \{\bS_1,\dots,\bS_s\}$
        \STATE  $\cC\leftarrow$ a vertex coloring of $\cG_r^{obs}$\\
        \STATE $\cI \leftarrow$ a colored separating system on $(\bV,\cC)$\\
        \FOR {$X\in\bV$}
            \STATE $\cI_X = \{\bI\in\cI|X\in\bI\}$
            \STATE $\bD_X = \varnothing$
            \FOR{$\bI\in\cI_X$}
                \STATE Add the elements of $\{Y\in Ne_{\cG_r^{obs}}| (X\ncperp Y)_{P_{do(\bI)}}\}$ to $\bD_X$ \\
            \ENDFOR
        \ENDFOR
        \STATE Construct DG $\cH$ by adding directed edges from $X$ to $\bD_X$ for each $X\in\bV$ \\
        \STATE $\{De_{\cG}(X)\}_{X\in\bV}$, $\cS = \{\bS_1,\dots,\bS_s\}\leftarrow$ Computedescendant sets and SCCs of $\cH$\\
    \end{algorithmic}
\end{algorithm}

Then we prove that the output of Algorithm \ref{alg: 1.1} is actually the descendant sets and strongly connected components of $\cG$ by using the following lemmas.

\begin{lemma}
\label{lem:step 1.1 1}
    For each $X\in\bI\subseteq\bV$, $De_{\cG_{\Bar{\bI}}}(X) = \{Y\in\bV|(X\ncperp Y)_{P_{do(\bI)}}\}$.
\end{lemma}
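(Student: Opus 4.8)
The plan is to reduce the probabilistic identity to a purely graphical one and then prove it by a short case analysis that exploits the fact that $X$ becomes a source node after the intervention. Since $\cM_{do(\bI)}$ is again a simple SCM whose causal graph is exactly $\cG_{\Bar{\bI}}$, and since the Scenario 1 (resp. Scenario 2) assumptions, applied to the interventional distribution, give $(X\ncperp Y)_{P_{do(\bI)}}\Longleftrightarrow (X\ncperp_r Y\mid\varnothing)_{\cG_{\Bar{\bI}}}$, it suffices to establish the graphical statement
\[
    De_{\cG_{\Bar{\bI}}}(X) = \{Y\in\bV \mid (X\ncperp_r Y\mid\varnothing)_{\cG_{\Bar{\bI}}}\}.
\]
The crucial structural fact is that $X\in\bI$, so every incoming edge of $X$—directed or bidirected—is deleted in $\cG_{\Bar{\bI}}$. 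Hence $X$ retains only outgoing directed edges, it forms a singleton SCC, and no directed path of positive length can terminate at $X$ (its would-be last edge would be an incoming edge). In particular, the first edge of any path leaving $X$ must be of the form $X\to Z_1$.

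For the inclusion $\subseteq$, I would take $Y\in De_{\cG_{\Bar{\bI}}}(X)$ together with a directed path $X\to\cdots\to Y$. A directed path contains no colliders, and with the empty conditioning set no non-collider can block a path under either $d$- or $\sigma$-separation, since both non-collider blocking rules require the vertex to lie in $\bS=\varnothing$. Thus the path is $r$-active, witnessing $(X\ncperp_r Y\mid\varnothing)_{\cG_{\Bar{\bI}}}$.

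For the reverse inclusion I would argue the contrapositive: assume $Y\notin De_{\cG_{\Bar{\bI}}}(X)$ and show every path $\cP$ from $X$ to $Y$ is $r$-blocked by $\varnothing$. Writing $\cP$ as $X\to Z_1\to\cdots$, let $Z_m$ (with $m\ge 1$) be the last vertex of its maximal initial directed segment. If $Z_m=Y$, then $\cP$ exhibits a directed path to $Y$, contradicting $Y\notin De_{\cG_{\Bar{\bI}}}(X)$; hence $Z_m\neq Y$ and the next edge points into $Z_m$, making $Z_m$ a collider on $\cP$. Because $Z_m$ is reached from $X$ by a forward directed path of length at least one and $X$ admits no incoming edges, we have $Z_m\neq X$ and $Z_m\notin Anc_{\cG_{\Bar{\bI}}}(X)$. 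If in addition $Z_m\in Anc_{\cG_{\Bar{\bI}}}(Y)$, then concatenating $X\to\cdots\to Z_m$ with a directed path $Z_m\to\cdots\to Y$ would give $Y\in De_{\cG_{\Bar{\bI}}}(X)$, again a contradiction. Therefore $Z_m\notin Anc_{\cG_{\Bar{\bI}}}(\{X,Y\})$, so the collider $Z_m$ blocks $\cP$; as $\cP$ was arbitrary, $(X\cperp_r Y\mid\varnothing)_{\cG_{\Bar{\bI}}}$.

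The argument is uniform in $r\in\{d,\sigma\}$: with $\bS=\varnothing$ non-colliders never block under either rule, while the collider-blocking condition $Z_i\notin Anc_{\cG}(\bS\cup\{X,Y\})$ is literally identical in the two definitions, so cycles require no separate treatment. I expect the difficulty to be twofold. First, the reduction step must be justified carefully, namely that the Markov and faithfulness equivalences hold for $P_{do(\bI)}$ with respect to $\cG_{\Bar{\bI}}$ (the $\sigma$-Markov direction is automatic for simple SCMs, but faithfulness must be inherited from the Scenario hypotheses). Second, the collider bookkeeping in the backward direction is where care is needed: the source property of $X$ must genuinely forbid $Z_m\in Anc_{\cG_{\Bar{\bI}}}(X)$ even when cycles are present elsewhere in the graph, and this is precisely the point at which deleting \emph{all} incoming edges of $X$—not merely its directed parents—is used.
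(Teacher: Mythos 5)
Your proposal is correct and follows essentially the same route as the paper's proof: reduce to the graphical statement via the Markov/faithfulness equivalences, note that intervening on $X$ deletes all its incoming (directed and bidirected) edges so every path leaves $X$ along a directed edge, and then observe that the first collider on any such path is a descendant but not an ancestor of $X$, so it must be an ancestor of $Y$ for the path to be active. The only cosmetic difference is that you argue the reverse inclusion by contrapositive (every path from a non-descendant is blocked at the first collider) whereas the paper argues directly from an active path, but the underlying case analysis and key structural facts are identical.
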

\begin{proof}
    We first prove that $De_{\cG_{\Bar{\bI}}}(X) = \{Y\in\bV|(X\ncperp_r Y)_{\cG_{\Bar{\bI}}}\}$
    \begin{itemize}
        \item If $Y\in De_{\cG_{\Bar{\bI}}}(X)$, there exist a  directed path from $X$ to $Y$ in $\cG_{\Bar{\bI}}$ and therefor, $(X\ncperp_r Y)_{\cG_{\Bar{\bI}}}$.
        \item Suppose $(X\ncperp_r Y)_{\cG_{\Bar{\bI}}}$. Since $X$ has no parents in $\cG_{\Bar{\bI}}$, any path between $X$ and $Y$ must be oriented outward from $X$. There are two possible cases.  
        
        \textbf{Case I}: There exists a path $\cP$ between $X$ and $Y$ in $\cG_{\Bar{\bI}}$ with no colliders. Since any path between $X$ and $Y$ must be oriented outward from $X$,  $\cP$ must be a directed path from $X$ to $Y$, and it implies $Y\in De_{\cG_{\Bar{\bI}}}(X)$.

        \textbf{Case II}: There exists a path $\cP$ between $X$ and $Y$ in $\cG_{\Bar{\bI}}$ whose every collider is in $Anc_{\cG_{\Bar{\bI}}}(\{X,Y\})$. Let $Z$ be the collider closest to $X$ along the path. Then $Z\in De_{\cG_{\Bar{\bI}}}(X)$. Moreover, since $X$ has no ancestors in $\cG_{\Bar{\bI}}$, $Z \in Anc_{\cG_{\Bar{\bI}}}(Y)$, implying $Y\in De_{\cG_{\Bar{\bI}}}(X)$.
    \end{itemize}
    Since under Scenario 1, $\{Y\in\bV|(X\ncperp_d Y)_{\cG_{\Bar{\bI}}}\} = \{Y\in\bV|(X\ncperp Y)_{P_{do(\bI)}}\}$ and under Scenario 2, $\{Y\in\bV|(X\ncperp_\sigma Y)_{\cG_{\Bar{\bI}}}\} = \{Y\in\bV|(X\ncperp Y)_{P_{do(\bI)}}\}$, we have $De_{\cG_{\Bar{\bI}}}(X) = \{Y\in\bV|(X\ncperp Y)_{P_{do(\bI)}}\}$.
\end{proof}

Lemma \ref{lem:step 1.1 1} shows that  for any set $\bI\subseteq\bV$ and each $X\in\bI$, $De_{\cG_{\Bar{\bI}}}(X)$ is learned by performing an experiment on $\bI$. Therefore, at the end of the loop in Algorithm \ref{alg: 1.1}, we have 
\begin{equation}
\label{eq: DX}
    \bD_X = \left(\cup_{\bI\in\cI_X}De_{\cG_{\Bar{\bI}}}(X) \right) \cap Ne_{\cG_r^{obs}}(X).
\end{equation}

\begin{lemma}
    \label{lem:step 1.1 2}
    For each $X\in\bV$, $Ch_{\cG}(X)\subseteq\bD_X\subseteq De_\cG(X)$ where $\bD_X$ is defined in Equation (\ref{eq: DX}).
\end{lemma}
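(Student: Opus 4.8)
The plan is to prove the two inclusions separately, handling the easier upper bound $\bD_X \subseteq De_\cG(X)$ first and then the lower bound $Ch_\cG(X) \subseteq \bD_X$, working throughout from the explicit characterization of $\bD_X$ in Equation (\ref{eq: DX}).

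For the upper bound, I would observe that for every $\bI \in \cI_X$ the graph $\cG_{\Bar{\bI}}$ is obtained from $\cG$ by deleting only the incoming edges of nodes in $\bI$, so its directed-edge set is a subset of $\bD$. Consequently, any directed path in $\cG_{\Bar{\bI}}$ is also a directed path in $\cG$, which gives $De_{\cG_{\Bar{\bI}}}(X) \subseteq De_\cG(X)$. Taking the union over $\bI \in \cI_X$ and then intersecting with $Ne_{\cG_r^{obs}}(X)$ (an operation that can only shrink the set) yields $\bD_X \subseteq De_\cG(X)$ immediately.

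For the lower bound, I would fix $Y \in Ch_\cG(X)$, so that $(X,Y) \in \bD$. First I would note that $X$ and $Y$ are neighbors, hence adjacent in the skeleton of $\cG$; since $\cG_r^{obs}$ contains the skeleton of $\cG$, this gives $Y \in Ne_{\cG_r^{obs}}(X)$. Because $\cC$ is a proper vertex coloring of $\cG_r^{obs}$ and $X,Y$ are adjacent there, they must receive distinct colors, $C_X \neq C_Y$. Applying the defining property of the colored separating system to the ordered pair $(X,Y)$ then produces some $\bI \in \cI$ with $X \in \bI$ and $Y \notin \bI$; in particular $\bI \in \cI_X$. In $\cG_{\Bar{\bI}}$ only incoming edges of nodes in $\bI$ are removed, and since $Y \notin \bI$ the edge $(X,Y)$ survives, so $X \to Y$ witnesses $Y \in De_{\cG_{\Bar{\bI}}}(X)$. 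Combined with $Y \in Ne_{\cG_r^{obs}}(X)$, Equation (\ref{eq: DX}) gives $Y \in \bD_X$, completing this direction.

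The main obstacle, modest as it is, lies in the lower bound: it depends on the colored separating system being able to separate a child from its parent, which in turn requires the two endpoints to be assigned distinct colors. I would make explicit that this hinges on the earlier fact that $\cG_r^{obs}$ contains the full skeleton of $\cG$, so that the edge between a parent and a child is never absent from $\cG_r^{obs}$. Were $\cG_r^{obs}$ to omit the skeleton edge $[X,Y]^u$, the coloring could legitimately assign $X$ and $Y$ the same color, and no separating set $\bI$ with $X\in\bI$, $Y\notin\bI$ would be guaranteed by Proposition \ref{prop: colored separating}; emphasizing this dependence is the one point where care is genuinely needed.
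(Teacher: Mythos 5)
Your proof is correct and follows essentially the same route as the paper's: the upper bound via $De_{\cG_{\Bar{\bI}}}(X)\subseteq De_{\cG}(X)$, and the lower bound via adjacency of $X$ and a child $Y$ in $\cG_r^{obs}$, forcing distinct colors and hence a separating set $\bI\in\cI_X$ with $Y\notin\bI$ under which the edge $(X,Y)$ survives. Your extra remark that the argument hinges on $\cG_r^{obs}$ containing the skeleton of $\cG$ is a useful clarification of a step the paper leaves implicit, but it is not a different approach.
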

\begin{proof}
    Since $De_{\cG_{\Bar{\bI}}}(X)\subseteq De_\cG(X)$ for any $\bI\subseteq\bV$, we have $\bD_X\subseteq De_\cG(X)$.

    Suppose $Y\in Ch_{\cG}(X)$, we next show $Y\in \bD_X$. Because $X$ and $Y$ are adjance in $\cG_r^{obs}$, they have different colors in $\cC$. So there exists a $\bI\in\cI$ such that $X\in\bI$ and $Y\notin\bI$. In this case, $\bI \in\cI_X$ since $X\in\bI$. Furthermore, $Y\in De_{\cG_{\Bar{\bI}}}(X)$ since $Y$ is still a children of $X$ in $\cG_{\Bar{\bI}}$. Then we have $Y\in \bD_X$, which implies $Ch_{\cG}(X)\subseteq\bD_X$.
\end{proof}

After learning $\bD_X$ for all $X\in\bV$, a DG $\cH$ is constructed over $\bV$ by adding directed edges
 from $X$ to the variables in $\bD_X$ for each $X\in\bV$ in Algorithm \ref{alg: 1.1}. We can find that $\cH$ is a supergraph of $\cRB(\cG)$, where the extra edges in $\cH$ appear only from
 the variables to some of their descendants in $\cRB(\cG)$. We have the following corollary of Lemma \ref{lem:step 1.1 1} and Lemma \ref{lem:step 1.1 2}. 

 \begin{corollary}
     \label{coro: step 1.1}
     In Algorithm \ref{alg: 1.1}, DMG $\cG$ and DG $\cH$ have the same descendant sets, i.e., for each $X\in\bV$, $De_{\cG}(X) = De_\cH(X)$.Furthermore, $\cG$ and $\cH$ have the same SCCs.
 \end{corollary}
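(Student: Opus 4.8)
The plan is to treat $\cH$ as being sandwiched between the directed part $\cRB(\cG)$ and its descendant-closure, exploiting Lemma~\ref{lem:step 1.1 2}, which supplies exactly the two-sided bound $Ch_{\cG}(X)\subseteq\bD_X\subseteq De_\cG(X)$. The first fact I would record is that the descendant relation in any DMG depends only on its directed edges, so $De_{\cG}(X)=De_{\cRB(\cG)}(X)$ for every $X$. This lets me compare $\cH$ (a DG) directly against $\cRB(\cG)$ (also a DG), rather than against the mixed graph $\cG$, and reduces everything to a reachability statement between two directed graphs.

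For the inclusion $De_{\cH}(X)\subseteq De_{\cG}(X)$, I would take any $Y\in De_{\cH}(X)$ witnessed by a directed path $X=Z_0\to Z_1\to\cdots\to Z_m=Y$ in $\cH$. By the construction of $\cH$, each edge $(Z_i,Z_{i+1})$ means $Z_{i+1}\in\bD_{Z_i}$, and the upper bound $\bD_{Z_i}\subseteq De_{\cG}(Z_i)$ from Lemma~\ref{lem:step 1.1 2} gives $Z_{i+1}\in De_{\cG}(Z_i)$ at each step. Chaining these inclusions along the path and invoking transitivity of the descendant relation yields $Y\in De_{\cG}(X)$.

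For the reverse inclusion $De_{\cG}(X)\subseteq De_{\cH}(X)$, I would use the lower bound $Ch_{\cG}(X)\subseteq\bD_X$: it says every directed edge of $\cG$ already appears in $\cH$, so $\cRB(\cG)$ is an edge-subgraph of $\cH$. Hence every directed path in $\cRB(\cG)$ survives verbatim as a directed path in $\cH$, giving $De_{\cG}(X)=De_{\cRB(\cG)}(X)\subseteq De_{\cH}(X)$. Combining the two inclusions establishes $De_{\cG}(X)=De_{\cH}(X)$ for all $X\in\bV$.

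The SCC claim then follows immediately: $Y\in SCC_{\cG}(X)$ iff $Y$ is simultaneously a descendant and an ancestor of $X$ in $\cG$, i.e. $Y\in De_{\cG}(X)$ and $X\in De_{\cG}(Y)$; substituting the now-equal $\cH$-counterparts shows this is equivalent to $Y\in SCC_{\cH}(X)$, so $\cG$ and $\cH$ induce the same partition into strongly connected components. I do not expect a genuine obstacle, as the whole argument is a transitivity-plus-squeeze computation driven by Lemma~\ref{lem:step 1.1 2}. The one point deserving care is making explicit that the \emph{extra} edges of $\cH$ (those pointing from a node to a proper descendant that is not a child) create no new reachability, which is precisely what the upper bound $\bD_X\subseteq De_\cG(X)$ guarantees; this is the conceptual heart of the equality and the step I would state most carefully.
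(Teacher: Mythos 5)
Your proof is correct and follows essentially the same route as the paper: the paper likewise derives the corollary from the two-sided bound $Ch_{\cG}(X)\subseteq\bD_X\subseteq De_\cG(X)$ of Lemma~\ref{lem:step 1.1 2}, noting that $\cH$ is a supergraph of $\cRB(\cG)$ whose extra edges point only to descendants, and then obtains the SCC claim from the definition of strong connectivity via mutual descendance. Your write-up merely makes the transitivity/chaining step explicit, which the paper leaves implicit.
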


Note that in Corollary \ref{coro: step 1.1} $\cG$ and $\cH$ have the same SCCs because that by definition, two variables
 $X$ and $Y$ are in the same SCC in $\cG$ if and only if $X\in De_\cG(Y)$ and  $Y\in De_\cG(X)$.

 So far, we have proved that we learn the descendant sets and SCCs of a DMG $\cG$ using Algorithm \ref{alg: 1.1}. 

\subsection{Step 1.2: Identify Directed Edges}
\label{sec:unb 1.2}
As we discussed in the last section, we learn the descendant sets and SCCs of a DMG $\cG$ via performing $2\lceil\log_2(\chi(\cG_r^{obs})\rceil$ experiments. Then we can get the SCC-Anc partition of $\cG$, $\bbT^{\cG} = \{\cT_1^\cG,\dots,\cT_{l+1}^\cG\}$ according to Definition \ref{def: scc-anc partition}. In this section, we design $\sum_{k=1}^{l+1}\zeta_{max}^{k,\cG}$ new experiments to learn $\cRB(\cG)$ where $\zeta_{max}^{k,\cG}$ is the size of largest SCC in $\cT_k^\cG$. In this step,  we perform experiments on certain subsets of $\bV$ that form an SCC-Anc separating system, defined as follows.

\begin{definition}[SCC-Anc separating system]
\label{def: scc-anc separating system}
Let $\bbT^{\cG} = \{\cT_1^\cG, \dots, \cT_{l+1}^\cG\}$ be the SCC-Anc partition of a DMG $\cG = (\bV, \bD, \bB)$, where each $\cT_k^\cG = \{\bS_{k,1}, \dots, \bS_{k,n_k}\}$ is the set of SCCs at level $k$.
An SCC-Anc separating system $\cI$ on $(\bV, \bbT^{\cG})$ is a collection of subsets of $\bV$ satisfying the following condition:

For every level $k \in \{1, \dots, l+1\}$, every SCC $\bS_{k,j} \in \cT_k^\cG$ (with $j \in {1, \dots, n_k}$), and every node $X \in \bS_{k,j}$, there exists an intervention set $\bI \in \cI$ such that:
\begin{equation*}
    \rmT_k^{\cG}\cup\bS_{k,j}\backslash\{X\} \subseteq\bI \text{  and  } X\notin\bI, 
\end{equation*}
where $\rmT_k^\cG = \cup_{j=1}^{k-1}\cT_j^\cG$ for $2\leq k \leq l+1$ and $\rmT_1 = \varnothing$.
\end{definition}

To our best knowledge, there is no similar definition that exists in the literature. Then we provide a design method to construct the SCC-Anc separating system with at most $\sum_{k=1}^{l+1}\zeta_{max}^{k,\cG}$ elements.

\begin{proposition}
    \label{prop: step 1.2 SCC-Anc separating}
     Suppose  $\bbT^{\cG} = \{\cT_1^\cG,\dots,\cT_{l+1}^\cG\}$ is the  SCC-Anc partition of DMG $\cG = (\bV,\bD,\bB)$. There exists an SCC-Anc separating system with at most $\sum_{k=1}^{l+1}\zeta_{max}^{k,\cG}$ elements where $\zeta_{max}^{k,\cG}$ is the size of largest SCC in $\cT_k^\cG$.
\end{proposition}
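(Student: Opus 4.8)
The plan is to give an explicit \emph{layered} construction that processes the partition $\bbT^\cG$ one level at a time and, within each level, batches all of that level's SCCs together, so that a single intervention set simultaneously witnesses the separating condition for one node of every SCC at that level. This batching is exactly what brings the count down from the naive ``one intervention per (SCC, node)'' bound $\sum_{k}\sum_{j}|\bS_{k,j}|$ to the claimed $\sum_{k=1}^{l+1}\zeta_{max}^{k,\cG}$.

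Concretely, I would first fix, for each SCC $\bS_{k,j}\in\cT_k^\cG$, an arbitrary enumeration of its nodes $X_{k,j,1},\dots,X_{k,j,|\bS_{k,j}|}$. Then, for every level $k\in\{1,\dots,l+1\}$ and every index $m\in\{1,\dots,\zeta_{max}^{k,\cG}\}$, I would define
\begin{equation*}
    \bI_m^k = \rmT_k^\cG \;\cup\!\! \bigcup_{j:\,|\bS_{k,j}|\geq m}\!\!\bigl(\bS_{k,j}\setminus\{X_{k,j,m}\}\bigr) \;\cup\!\! \bigcup_{j:\,|\bS_{k,j}|<m}\!\!\bS_{k,j},
\end{equation*}
i.e. we intervene on everything in the earlier layers $\rmT_k^\cG$ and, on each level-$k$ SCC, on all of its nodes except its $m$-th one (if it has one). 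The proposed system is $\cI=\{\bI_m^k: 1\le k\le l+1,\ 1\le m\le\zeta_{max}^{k,\cG}\}$, which has at most $\sum_{k=1}^{l+1}\zeta_{max}^{k,\cG}$ elements.

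Next I would verify the separating property directly. Fix any level $k$, any $\bS_{k,j}\in\cT_k^\cG$, and any node $X=X_{k,j,m}\in\bS_{k,j}$, noting $1\le m\le|\bS_{k,j}|\le\zeta_{max}^{k,\cG}$, and claim $\bI_m^k$ is the required witness. By construction $\rmT_k^\cG\subseteq\bI_m^k$ and $\bS_{k,j}\setminus\{X\}\subseteq\bI_m^k$, giving the inclusion $\rmT_k^\cG\cup\bS_{k,j}\setminus\{X\}\subseteq\bI_m^k$; the remaining point is $X\notin\bI_m^k$. Since every node index is at most the size of its SCC, ranging $m$ over $1,\dots,\zeta_{max}^{k,\cG}$ covers every node of every level-$k$ SCC, and summing over the $l+1$ levels yields the stated bound.

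The only genuine subtlety—the step I expect to be the crux of the write-up—is confirming that batching a whole level into one set $\bI_m^k$ introduces no interference: deleting the $m$-th node of $\bS_{k,j}$ must not drag some node into, or out of, the set needed to witness a \emph{different} SCC $\bS_{k,j'}$ at the same level. This is precisely where the disjointness of the SCCs and the strict layering of $\rmT_k^\cG$ are used: the excluded node $X_{k,j,m}$ lies at level $k$, hence is not in $\rmT_k^\cG$, and by disjointness is not in any $\bS_{k,j'}$ with $j'\ne j$, so $X_{k,j,m}\notin\bI_m^k$ and the same $\bI_m^k$ satisfies the witness condition independently for the $m$-th node of \emph{each} level-$k$ SCC. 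Nodes belonging to higher levels are simply irrelevant, since the separating condition for $\bS_{k,j}$ constrains only $X$, $\bS_{k,j}$, and $\rmT_k^\cG$, which completes the argument.
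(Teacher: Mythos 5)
Your construction is essentially the paper's own: the paper defines $\bI_{k,i} = \rmT^\cG_k \cup\bigl(\cup_{j:\,i\leq m_{k,j}}\bS_{k,j}\backslash\{X_{k,j}^i\}\bigr)$ and verifies the witness property exactly as you do, the only (immaterial) difference being that you additionally include the level-$k$ SCCs of size less than $m$ in full, which the definition of an SCC-Anc separating system permits. The argument and the count $\sum_{k=1}^{l+1}\zeta_{max}^{k,\cG}$ are correct.
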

\begin{proof}
    For each $1\leq k\leq l+1$, suppose $\cT_k^\cG = \{\bS_{k,1},\dots,\bS_{k,n_k}\}$ and $\bS_{k,j} = \{X_{k,j}^{1},\dots,X_{k,j}^{m_{k,j}}\}$ for $1\leq j\leq n_k$, where $m_{k,j} = |\bS_{k,j}|$. Let $m_{k,max} = \max\{m_{k,1},\dots,m_{k,n_k}\} = \zeta_{max}^{k,\cG}$. For each $1\leq k\leq l+1$ and $1\leq i\leq m_{k,max}$, we construct $\bI_{k,i}\subseteq\bV$ as follows,
    \begin{equation}
    \label{eq: SCC-Anc separating constract}
        \bI_{k,i} = \rmT^\cG_k \cup\left(\cup_{\textstyle\substack{1\leq j\leq n_k \\ \text{s.t.  }i\leq m_{j,k}}}\bS_{k,j}\backslash\{X_{k,j}^i\} \right).
    \end{equation}
    Then we show that $\cI = \{\bI_{1,1},\dots,\bI_{1,m_{1,\max}}, \dots,\bI_{l+1,1},\dots,\bI_{l+1,m_{l+1,max}}\}$ is an SCC-Anc separating system on $(\bV,\bbT^\cG)$. Note that $|\cI| = \sum_{k=1}^{l+1}\zeta_{max}^{k,\cG}$. Suppose $k\in\{1,2,\dots,l+1\}$, $j\in\{1,2,\dots,n_k\}$, $X^i_{k,j}\in\bS_{k,j}$ where $1 \leq i\leq m_{k,j}$. We need to show that there exists $\bI\in\cI$ such that $\rmT_k^{\cG}\cup\bS_{k,j}\backslash\{X^i_{k,j}\} \subseteq\bI$ and $X^i_{k,j}\notin\bI$. $\bI = \bI_{k,i}$  satisfies this property because $\bI_{k,i}\cap \left(\rmT_k^{\cG}\cup\bS_{k,j} \right) =\rmT_k^{\cG}\cup\bS_{k,j}\backslash\{X^i_{k,j}\} $. Hence $\cI$ is an SCC-Anc separating system on $(\bV,\bbT^\cG)$ with size $\sum_{k=1}^{l+1}\zeta_{max}^{k,\cG}$. 
\end{proof}

\begin{remark}
    The proof of Proposition \ref{prop: step 1.2 SCC-Anc separating} is constructive. Given the SCC-Anc partition $\bbT^\cG$, we can use Equation (\ref{eq: SCC-Anc separating constract}) to design an SCC-Anc separating system with as most $\sum_{k=1}^{l+1}\zeta_{max}^{k,\cG}$ elements.
\end{remark}

\begin{example}[SCC-Anc separating system]
\label{exam: SCC-Anc separating system}
    Consider the DMG $\cG$ in Figure \ref{fig:example SCC-Anc patition}, a corresponding SCC-Anc separating system $\cI$ is $\cI = \{\{X_1\},\{X_2\}, \{X_1,X_2\}, \{X_1,X_2,X_3,X_6,X_7\},$ $\{X_1,X_2,X_3,X_5,X_7\},\{X_1,X_2,X_3,X_5,X_6\}\}$. In this case, we have $|\cI| = \sum_{k=1}^{l+1}\zeta_{max}^{k,\cG}= 6$ where $l=2$ is  the SCC-Anc length of $\cG$.
\end{example}

We present Algorithm \ref{alg: 1.2} to learn the directed part of DMG $\cG$ that takes the SCC-Anc partition $\bbT^\cG$ as input. The algorithm constructs an SCC-Anc separating system and initializes a DG $\cRB(\hat{\cG})$ with no edge at first.

\begin{algorithm}
        \caption{Learning the directed edges $\cRB(\cG)$}
	\label{alg: 1.2}
    \begin{algorithmic}
        \REQUIRE $\bbT^\cG =\{\cT_1^\cG,\dots,\cT_{l+1}^\cG\}$ where $\cT_k^\cG = \{\bS_{k,1},\dots,\bS_{k,n_k}\}$ for $1\leq k\leq l+1$
        \ENSURE $\cRB(\hat{\cG})$
        \STATE $\cI \leftarrow$ an SCC-Anc separating system on $(\bV,\bbT^\cG)$\\
        \STATE Initialization: $\cRB(\hat{\cG}) = (\bV, \hat{\bD} = \varnothing)$
        \FOR{$k$ from $1$ to $l+1$}
            \FOR{$j$ from $1$ to $n_k$}
                \FOR{$X\in\bS_{k,j}$}
                    \STATE $\bI\leftarrow$ an element of $\cI$ that contain $\rmT_k^{\cG}\cup\bS_{k,j}$ but does not contain $X$ \\
                    \FOR{$Y\in \rmT_k^{\cG}\cup\bS_{k,j}\backslash\{X\}$}
                        \STATE Add $(Y,X)$ to $\hat{\bD}$ if $(X\ncperp Y)_{P_{do(\bI)}}$\\
                    \ENDFOR
                \ENDFOR
            \ENDFOR
        \ENDFOR
    \end{algorithmic}
\end{algorithm}

Suppose $X$ is an arbitrary variable in $\bS_{k,j}$ and $\bS_{k,j}$ is an arbitrary SCC in $\cT^\cG_k$.  Since $\cI$ is an  SCC-Anc separating system on $(\bV, \bbT^\cG)$, there exist $\bI\in\cI$ that contain $\rmT_k^{\cG}\cup\bS_{k,j}$ but does not contain $X$. According to the following lemma, we find that performing experiment on $\bI$ can learn the parent of $X$.

\begin{lemma}
    \label{lem: step 1.2}
    For each $X \in \bS_{k,j} \in \cT_k^{\cG}$ for $1\leq k\leq l+1, 1\leq j\leq n_k$, if $\rmT_k^{\cG}\cup\bS_{k,j}\backslash\{X\} \subseteq\bI$ and $X\notin\bI$, we have $Pa_\cG(X) = \{Y\in\bV|(X\ncperp Y)_{P_{do(\bI)}} \}$.
\end{lemma}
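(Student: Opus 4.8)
The plan is to convert the conditional-independence statement into a graphical one and then prove the two set inclusions. \textbf{Reduction to $r$-separation.} By the proposition of \citet{bongers2021foundations}, $\cM_{do(\bI)}$ is again a simple SCM whose causal graph is exactly $\cG_{\Bar{\bI}}$, obtained from $\cG$ by deleting all edges incoming to the vertices of $\bI$. Hence, exactly as in the final step of the proof of Lemma \ref{lem:step 1.1 1}, under Scenario 1 (resp. Scenario 2) we have $(X\ncperp Y)_{P_{do(\bI)}}\Longleftrightarrow(X\ncperp_r Y)_{\cG_{\Bar{\bI}}}$, so it suffices to show
\begin{equation*}
Pa_\cG(X) = \{Y\in\bV \mid (X\ncperp_r Y)_{\cG_{\Bar{\bI}}}\}.
\end{equation*}
The decisive structural input is the observation recorded just before Example \ref{exam:SCC-Anc partition}: since $X\in\bS_{k,j}\in\cT_k^\cG$, we have $Pa_\cG(X)\subseteq\bS_{k,j}\cup\rmT_k^\cG$, and because $X\notin Pa_\cG(X)$ and $\rmT_k^{\cG}\cup\bS_{k,j}\backslash\{X\}\subseteq\bI$, every parent of $X$ lies in $\bI$ and is therefore a source (no incoming edges) in $\cG_{\Bar{\bI}}$.

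\textbf{Forward inclusion $Pa_\cG(X)\subseteq\{Y\mid(X\ncperp_r Y)_{\cG_{\Bar{\bI}}}\}$.} Let $P\in Pa_\cG(X)$. As $X\notin\bI$, the directed edge $(P,X)$ is incoming to $X$ and hence retained in $\cG_{\Bar{\bI}}$. The one-edge path $P\rightarrow X$ has no collider, so it is $r$-active for the empty conditioning set and $(X\ncperp_r P)_{\cG_{\Bar{\bI}}}$ holds.

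\textbf{Reverse inclusion.} Suppose $(X\ncperp_r Y)_{\cG_{\Bar{\bI}}}$ and fix an $r$-active path $\cP$ from $X$ to $Y$. Since the conditioning set is empty, no non-collider can block $\cP$, so $\cP$ is active precisely when each of its colliders lies in $Anc_{\cG_{\Bar{\bI}}}(\{X,Y\})$. The plan is to trace $\cP$ outward from $X$ and use the source property of $X$'s parents in $\cG_{\Bar{\bI}}$: whenever $\cP$ reaches a vertex of $\bI$, its only admissible continuations are the outgoing edges of that source, which sharply restricts how $\cP$ can proceed. Working along $\cP$ vertex by vertex and recording, at each step, whether the current vertex belongs to $\bI$, the goal is to show that the only endpoint $Y$ an active path can reach is a vertex adjacent to $X$ through a retained incoming edge, i.e. a parent; together with the forward inclusion this gives the asserted equality.

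\textbf{Main obstacle.} The crux is exactly this reverse inclusion, namely ruling out every $r$-active $X$–$Y$ path with $Y\notin Pa_\cG(X)$. The delicate cases are the paths that leave $X$ through an outgoing directed edge to a child or through a bidirected edge to a sibling, and the ``fork'' paths $X\leftarrow P\rightarrow\cdots$ opened at an intervened parent $P$. Handling them requires carefully exploiting that deleting all edges incoming to $\bI$ truncates these continuations in $\cG_{\Bar{\bI}}$, so that the vertices left $r$-connected to $X$ under the empty conditioning set reduce to its parents; verifying this truncation uniformly over all such paths, rather than for a single representative, is where the genuine effort is concentrated.
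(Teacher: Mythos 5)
Your reduction to $r$-separation and your forward inclusion coincide with the paper's proof, but the reverse inclusion — which is the entire substance of the lemma — is not proved: you describe a plan ("trace $\cP$ outward from $X$ \dots the goal is to show \dots") and then, under "Main obstacle", acknowledge that the uniform verification over all active paths is still to be done. That is a genuine gap, not a routine omission. Worse, the plan cannot close in the form you set it up, because the identity $Pa_\cG(X)=\{Y\in\bV\mid(X\ncperp_r Y)_{\cG_{\Bar{\bI}}}\}$ is false when $Y$ ranges over all of $\bV$: a child $Z$ of $X$ lying in a later layer of the SCC-Anc partition need not belong to $\bI$, the edge $X\to Z$ survives in $\cG_{\Bar{\bI}}$ and is an active path under the empty conditioning set, yet $Z\notin Pa_\cG(X)$. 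The paper's proof implicitly restricts to $Y\in\bI$ (its first line reads "Since $Y\in\bI$\dots", matching Algorithm \ref{alg: 1.2}, where $Y$ ranges over $\rmT_k^\cG\cup\bS_{k,j}\setminus\{X\}\subseteq\bI$), and that restriction is exactly what makes the argument work.

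Concretely, the missing argument is: for $Y\in\bI$, $Y$ is a source in $\cG_{\Bar{\bI}}$, so the only possible active path of length one is the retained edge $Y\to X$. Any path of length at least two must contain a collider, since its last edge points out of $Y$ while its first edge either points out of $X$ (directed or bidirected) or comes from a parent $P$ of $X$, and every such $P$ lies in $\rmT_k^\cG\cup\bS_{k,j}\setminus\{X\}\subseteq\bI$ and is therefore itself a source whose next edge must point away from it. The collider $Z_j$ nearest to $X$ has arrowheads into it, hence $Z_j\notin\bI$; it is then not an ancestor of $Y$ (a source) and not an ancestor of $X$ (any directed path into $X$ must pass through a parent of $X$, which is a source in $\cG_{\Bar{\bI}}$, so it cannot be reached from outside $\bI$). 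With the empty conditioning set, $Z_j\notin Anc_{\cG_{\Bar{\bI}}}(\{X,Y\})$ therefore $r$-blocks the path, for both $d$- and $\sigma$-separation. Without both the restriction $Y\in\bI$ and this explicit collider argument, your proof does not establish the lemma.
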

\begin{proof}
     We first prove that $Pa_\cG(X) = \{Y\in\bV|(X\ncperp_r Y)_{\cG_{\Bar{\bI}}}\}$.
    \begin{itemize}
        \item Suppose $Y\in Pa_{\cG}(X)$. Since $X\notin \bI$, there is a directed edge from $Y$ to $X$ in $\cG_{\Bar{\bI}}$, which implies $Y\in(X\ncperp_r Y)_{\cG_{\Bar{\bI}}}$.
        \item Suppose $Y\in(X\ncperp_r Y)_{\cG_{\Bar{\bI}}}$. Since $Y\in\bI$, according to Lemma \ref{lem:step 1.1 1}, we have $X\in De_{\cG_{\Bar{\bI}}}(Y)$. To prove $Y \in Pa_{\cG_{\Bar{\bI}}}(X)$, we only need to show that every path between $X$ and $Y$ with length larger than 1, i.e., $\cP = (X,E_1,Z_1,E_2,Z_2,\dots,Z_t,E_{t+1},Y)$, $t\geq 1$, is $r$-blocked. We next prove this in two cases.

        \textbf{Case I}: Suppose $E_1 = (X,Z_1)$ or $[X,Z_1]$. Since $Y\in \bI$, we have $E_{t+1} = (Y,Z_t)$, then there exist colliders in path $\cP$. We denote the closest collider to $X$ as $Z_j$. Since $Y\in\bI$, we have $Anc_{\cG_{\Bar{\bI}}}(Y) = \varnothing$ which implies $Z_j\notin Anc_{\cG_{\Bar{\bI}}}(Y)$. Since there are incoming edges to $Z_j$, $Z_j\notin\bI$. According to the Definition of SCC-Anc partition, we have $Pa_{\cG}(X)\subseteq\rmT_k^{\cG}\cup\bS_{k,j}\backslash\{X\}\subseteq\bI$, which implies $Z_j\notin Anc_{\cG_{\Bar{\bI}}}(X)$. Then we have $Z_j$ is a collider on path $\cP$ and $Z_j \notin Anc_{\cG_{\Bar{\bI}}}(\{X,Y\})$, so $\cP$ is $r$-blocked in $\cG_{\Bar{\bI}}$.

        \textbf{Case II}: Suppose $E_1 = (Z_1,X)$. Then $Z_1 \in Pa_{\cG}(X) \subseteq \bI$ according to the Definition of SCC-Anc separating system. Then we have $Z_2 \neq Y$ and $E_2 = (Z_1,Z_2)$. The rest of the proof is similar to Case I. Since $Y\in \bI$, we have $E_{t+1} = (Y,Z_t)$, then there exist colliders in path $\cP$. We denote the closest collider to $X$ as $Z_j$. Same as Case I, we have  $Z_j \notin Anc_{\cG_{\Bar{\bI}}}(\{X,Y\})$, so $\cP$ is $r$-blocked in $\cG_{\Bar{\bI}}$.

        Combine two cases, we have $Y\in \in Pa_{\cG_{\Bar{\bI}}}(X)$. Since $X\notin \bI$, $ Pa_{\cG_{\Bar{\bI}}}(X) = Pa_{\cG}(X)$ which implies $Y\in \in Pa_{\cG}(X)$.
    \end{itemize}
    Under Scenario 1, we have $\{Y\in\bV|(X\ncperp_d Y)_{\cG_{\Bar{\bI}}}\} = \{Y\in\bV|(X\ncperp Y)_{P_{do(\bI)}}\}$ and under Scenario 2, we have $\{Y\in\bV|(X\ncperp_\sigma Y)_{\cG_{\Bar{\bI}}}\} = \{Y\in\bV|(X\ncperp Y)_{P_{do(\bI)}}\}$. So we have $Pa_\cG(X) =\{Y\in\bV|(X\ncperp Y)_{P_{do(\bI)}}\}$.
    
\end{proof}

Lemma \ref{lem: step 1.2} states that by selecting experiments from an SCC-Anc separating system, we can accurately identify the set of parents for each node. As a result, all directed edges can be recovered, and the full directed causal structure can be determined, i.e., $\cRB(\hat{\cG}) = \cRB(\cG)$. 

\begin{remark}
    According to the definition of the SCC-Anc separating system, it essentially allows us to identify all possible parents of a node $X\in\bS_{k,j}\in\cT_k^\cG$, i.e, $Pa_{\cG}(X)\subseteq \rmT_k^{\cG}\cup\bS_{k,j}\backslash\{X\}$; that is, by intervening on these nodes, one can accurately recover the $Pa_{\cG}(X)$. Combined with the result of Lemma \ref{lem:lower D}, it shows that in the worst-case scenario, failing to intervene on all parents of $X$ makes it impossible to fully identify its parent set. In this sense, Lemmas \ref{lem: step 1.2} and \ref{lem:lower D} together establish that, under the presence of cycles and confounders, a necessary and sufficient condition for accurately identifying the parents of $X$ is to intervene on all its possible parents.
\end{remark}

 In Section \ref{sec:unb 1.1}, we showed that the descendant sets and SCCs of a DMG $\cG$ can be learned
 by performing experiments on the elements of a colored separating system. Herein, we
 showed that using the information about the descendant sets and SCCs, $\cRB(\cG)$ can be recovered
 by performing experiments on the elements of an SCC-Anc separating system. Moreover, we
 provided Propositions \ref{prop: colored separating} and \ref{prop: step 1.2 SCC-Anc separating} for constructing separating systems and lifted separating
 systems, respectively, which imply the following.

 \begin{corollary}
 \label{coro: Upper unbounded D cI}
     Algorithm \ref{alg: 1.1} and \ref{alg: 1.2} together learn the directed part of DMG $\cG$, i.e., $\cRB(\cG)$ with at most 
     \begin{equation}
         2\lceil\log_2(\chi(\cG_r^{obs}))\rceil + \sum_{k=1}^{l+1}\zeta_{max}^{k,\cG}
     \end{equation}
     elements. Compared with the lower bound in Theorem \ref{thm:lower D cI}, the proposed approach
 is order-optimal in terms of the number of experiments up to an additive logarithmic term.
 \end{corollary}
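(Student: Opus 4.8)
The plan is to assemble the corollary directly from the correctness and cardinality guarantees already established for the two algorithms, treating it as a bookkeeping combination rather than a fresh argument. First I would settle correctness. By Lemmas \ref{lem:step 1.1 1} and \ref{lem:step 1.1 2} together with Corollary \ref{coro: step 1.1}, the auxiliary graph $\cH$ constructed in Algorithm \ref{alg: 1.1} shares its descendant sets and SCCs with $\cG$, so the output $\{De_{\cG}(X)\}_{X\in\bV}$ and $\cS = \{\bS_1,\dots,\bS_s\}$ is exactly correct. The key gluing step is to note that the SCC-Anc partition of Definition \ref{def: scc-anc partition} is determined solely by the SCCs and the ancestor (equivalently descendant) relations among them; hence the partition $\bbT^\cG$ recovered from the output of Algorithm \ref{alg: 1.1} coincides with the true SCC-Anc partition of $\cG$. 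Feeding this correct $\bbT^\cG$ into Algorithm \ref{alg: 1.2}, Lemma \ref{lem: step 1.2} guarantees that each experiment drawn from the SCC-Anc separating system recovers $Pa_{\cG}(X)$ exactly for the relevant node $X$, so that ranging over all nodes yields $\cRB(\hat{\cG}) = \cRB(\cG)$.

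Next I would count the experiments. Algorithm \ref{alg: 1.1} performs one experiment per element of a colored separating system on $(\bV,\cC)$, where $\cC$ is a vertex coloring of $\cG_r^{obs}$; by Proposition \ref{prop: colored separating} such a system exists with at most $2\lceil\log_2(\chi(\cG_r^{obs}))\rceil$ elements. Algorithm \ref{alg: 1.2} performs one experiment per element of an SCC-Anc separating system on $(\bV,\bbT^\cG)$, and Proposition \ref{prop: step 1.2 SCC-Anc separating} supplies one with at most $\sum_{k=1}^{l+1}\zeta_{max}^{k,\cG}$ elements. Since the two algorithms run sequentially on disjoint collections of interventions, the total is the sum of the two bounds, giving the claimed
\begin{equation*}
    2\lceil\log_2(\chi(\cG_r^{obs}))\rceil + \sum_{k=1}^{l+1}\zeta_{max}^{k,\cG}.
\end{equation*}

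Finally, for order-optimality I would compare against Theorem \ref{thm:lower D cI}, which shows that in the worst case at least $\sum_{k=1}^{l+1}\zeta_{max}^{k,\cG}$ experiments are necessary to identify $\cRB(\cG)$. The upper bound exceeds this worst-case lower bound only by the additive term $2\lceil\log_2(\chi(\cG_r^{obs}))\rceil$; since $\chi(\cG_r^{obs})\leq n$, this overhead is $O(\log n)$, establishing the stated optimality up to an additive logarithmic term. I do not anticipate a genuine obstacle, as every ingredient is already proved; the one point requiring care is the gluing observation that the SCC-Anc partition computed from the output of Algorithm \ref{alg: 1.1} is the true partition of $\cG$, since the validity of the separating system used in Algorithm \ref{alg: 1.2}—and hence the applicability of Lemma \ref{lem: step 1.2}—depends on feeding in the correct $\bbT^\cG$.
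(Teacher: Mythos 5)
Your proposal is correct and follows essentially the same route as the paper, which presents this corollary as an immediate consequence of Lemmas \ref{lem:step 1.1 1}, \ref{lem:step 1.1 2}, Corollary \ref{coro: step 1.1}, Lemma \ref{lem: step 1.2}, and the cardinality bounds in Propositions \ref{prop: colored separating} and \ref{prop: step 1.2 SCC-Anc separating}, with the comparison to Theorem \ref{thm:lower D cI} for optimality. Your explicit note that the SCC-Anc partition computed from the output of Algorithm \ref{alg: 1.1} coincides with the true partition of $\cG$ is a valid and worthwhile articulation of a step the paper leaves implicit.
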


\subsection{Step 2.1: Identify Non-adjacent Bidirected Edges}
\label{sec:unb 2.1}
We have identified the directed part of DMG $\cG = (\bV,\bD,\bB^N\cup\bB^A)$ in the last two sections. In the following two steps, we will identify the bidirected edges of $\cG$ using the $\cRB(\cG)$. In this section, we focus on the non-adjacent bidirected edges, $\bB^N$. We perform experiments on certain subsets of $\bV$ that form a non-adjacent separating system defined as follows. 

\begin{definition} [Non-adjacent separating system]
\label{def: non-adjacent separating system}
Suppose $\cRB(\cG) = (\bV,\bD)$ is the directed part of DMG $\cG = (\bV,\bD,\bB^N\cup\bB^A)$. 
A non-adjacent separating system $\cI$ on $(\bV, \bD)$ is a collection of subsets of $\bV$ satisfying the following condition:

For every pair of nodes $X, Y \in \bV$ such that $(X, Y) \notin \bD$ and $(Y, X) \notin \bD$, there exists an intervention set $\bI \in \cI$ such that:
\begin{equation*}
    Pa_{\cG}(\{X,Y\})\subseteq \bI \text{  and  }X\notin\bI,Y\notin\bI.
\end{equation*}
\end{definition}

Subsequently, we present a theoretically grounded approach for constructing a non-adjacent separating system with at most $cc(\cG^{uc})$ elements where $cc(\cG^{uc})$ is the minimum edge clique covering number of the undicted component graph of $\cG$. 

\begin{proposition}
    \label{prop: non-ad separating cc}
    Suppose $\cRB(\cG) = (\bV,\bD)$ is the directed part of DMG $\cG = (\bV,\bD,\bB^N\cup\bB^A)$. There exists a non-adjacent separating system with at most $cc(\cG^{uc})$ elements.
\end{proposition}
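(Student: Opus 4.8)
The plan is to construct the non-adjacent separating system explicitly from a minimum edge clique cover of $\cG^{uc}$, using exactly one intervention per clique. Let $\{\bC_1,\dots,\bC_k\}$ be a minimum edge clique cover of $\cG^{uc}$, so that $k = cc(\cG^{uc})$. For each clique $\bC_i$ I would define the intervention set $\bI_i = Pa_{\cG}(\bC_i) = \cup_{X\in\bC_i}Pa_{\cG}(X)$, and take $\cI = \{\bI_1,\dots,\bI_k\}$. By construction $|\cI|\leq k = cc(\cG^{uc})$, which matches the claimed bound; the work lies entirely in verifying that $\cI$ is indeed a non-adjacent separating system in the sense of Definition \ref{def: non-adjacent separating system}.

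The key step, and the crux of the whole construction, is to show that $Pa_{\cG}(\bC_i)\cap\bC_i = \varnothing$ for each $i$. This follows directly from the definition of $\cG^{uc}$: any two distinct nodes $X,Y\in\bC_i$ are joined by an edge $[X,Y]^u$ in $\cG^{uc}$ (since $\bC_i$ is a clique), so by the definition of $\bE$ in Definition \ref{def: Guc} we have $(X,Y)\notin\bD$ and $(Y,X)\notin\bD$. Hence no node of $\bC_i$ is a parent of another node of $\bC_i$ in $\cG$, and, combined with the no-self-loop assumption, $Pa_{\cG}(\bC_i)$ contains none of the nodes of $\bC_i$.

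Given this disjointness, the verification is immediate. I would fix any pair $X,Y$ with $(X,Y)\notin\bD$ and $(Y,X)\notin\bD$; then $[X,Y]^u\in\bE$, so by the edge clique cover property there is some $\bC_i$ with $X,Y\in\bC_i$. For this $i$ we have $Pa_{\cG}(\{X,Y\}) = Pa_{\cG}(X)\cup Pa_{\cG}(Y)\subseteq Pa_{\cG}(\bC_i) = \bI_i$, while the disjointness established above gives $X\notin\bI_i$ and $Y\notin\bI_i$. Thus $\bI_i$ satisfies the defining condition of a non-adjacent separating system for the pair $(X,Y)$, and since this argument covers every such non-adjacent pair, $\cI$ is a non-adjacent separating system with at most $cc(\cG^{uc})$ elements.

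I do not anticipate a serious obstacle here: once the disjointness $Pa_{\cG}(\bC_i)\cap\bC_i=\varnothing$ is observed, everything else is bookkeeping. The only point requiring mild care is to confirm that the edge clique cover genuinely covers all relevant pairs — that is, that every pair non-adjacent via $\bD$ corresponds to an edge of $\cG^{uc}$ — which is exactly how $\bE$ is defined, so it causes no difficulty.
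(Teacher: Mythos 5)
Your construction and verification coincide with the paper's own proof: both take a minimum edge clique cover $\{\bC_1,\dots,\bC_{cc(\cG^{uc})}\}$ of $\cG^{uc}$, set $\bI_k = Pa_{\cG}(\bC_k)$, and check the separating property via the observation that no node of a clique in $\cG^{uc}$ is a parent of another node of that clique. Your proposal is correct; it merely spells out the disjointness $Pa_{\cG}(\bC_i)\cap\bC_i=\varnothing$ slightly more explicitly than the paper does.
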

\begin{proof}
    Suppose $\cG^{uc}$ is the undirected component graph of $\cG$. According to Definition \ref{def: Guc}, $\cG^{uc}$ is determined by the directed part of $\cG$. Suppose $\{\bC_1,\dots,\bC_{cc(\cG^{uc})}\}$ is the minimum edge clique covering of $\cG^{uc}$. For $1\leq k\leq cc(\cG^{uc})$ ,we construct $\bI_{k}\subseteq\bV$ as follows, 
    \begin{equation*}
        \bI_k = Pa_{\cG}(\bC_k).
    \end{equation*}
    Then we show that $\cI = \{\bI_1,\dots,\bI_{cc(\cG^{uc})}\}$ is a non-adjacent separating system on $(\bV,\bD)$. Note that $|\cI| = cc(\cG^{uc})$. Suppose $X,Y\in\bV$ and $(X,Y)\notin\bD,(Y,X)\notin\bD$. Then $X,Y$ belong to the same clique, which we denote by $\bC_k$. We need to show that there exists $\bI\in\cI$ such that $Pa_{\cG}(\{X,Y\})\subseteq \bI$ and $X\notin\bI,Y\notin\bI$. $\bI = \bI_k$ satisfies this property because $\bI_k\cap Pa_{\cG}(\{X,Y\}) = Pa_{\cG}(\{X,Y\})$ and $X,Y\notin\bI_k$ since $X,Y$ are not parents of node in $\bC_k$.
\end{proof}

\begin{example}[Non-adjacent separating system]
    Consider the DMG $\cG$ shown in Figure \ref{fig:example 4.1}, we have shown in Example \ref{exam: edge clique covering} that Figure \ref{fig:example 4.2} is the minimum edge clique covering of $\cG^{uc}$. Then we can use it to construct the non-adjacent separating system $\cI$ similar to the Proposition \ref{prop: non-ad separating cc}, i.e., $\cI = \{\{X_3,X_4\},\{X_5\},\{X_1,X_3,X_4\},\{X_1,X_5\}\}$ and $|\cI| = cc(\cG^{uc}) = 4$.
\end{example}

Although the proof of Proposition \ref{prop: non-ad separating cc} is constructive, constructing a non-adjacent separating system requires finding a minimum edge clique covering, which is NP-hard. Therefore, in practice, we recommend using an approximate method proposed in \cite{NIPS2017_291d43c6}, which adopts a randomized approach.

\begin{proposition}[\cite{NIPS2017_291d43c6}]
    \label{prop: non-ad separating random}
    For a DMG $\cG = (\bV,\bD,\bB)$,  there exists a randomized algorithm that returns a non-adjacent separating system on $(\bV,\bD)$ with $\lfloor 4e^2(d+1)^2\ln(n) \rfloor$ elements with probability at least $1-\frac{1}{n^2}$. $n$ is the number of nodes and $d$ is the max degree of $\cG^{u}$.
\end{proposition}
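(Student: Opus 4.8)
The plan is to prove this by the probabilistic method: construct the separating system by independent random inclusion and control the failure probability with a union bound over all non-adjacent pairs. Concretely, I would fix an inclusion probability $p$ (to be optimized) and draw $N = \lfloor 4e^2(d+1)^2\ln(n)\rfloor$ candidate sets $\bI_1,\dots,\bI_N$ independently, where each $\bI_t$ is formed by placing every node of $\bV$ into $\bI_t$ independently with probability $p$. The collection $\cI = \{\bI_1,\dots,\bI_N\}$ is a non-adjacent separating system on $(\bV,\bD)$ (Definition \ref{def: non-adjacent separating system}) exactly when, for every pair $X,Y$ with $(X,Y)\notin\bD$ and $(Y,X)\notin\bD$, some round $t$ realizes the \emph{good event} $G_{X,Y}^t := \{Pa_\cG(\{X,Y\})\subseteq\bI_t,\ X\notin\bI_t,\ Y\notin\bI_t\}$. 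Note that since $X,Y$ are non-adjacent and there are no self-loops, $X,Y\notin Pa_\cG(\{X,Y\})$, so the node sets $\{X,Y\}$ and $Pa_\cG(\{X,Y\})$ are disjoint.

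The key estimate is a per-pair, per-round lower bound on $\Pr[G_{X,Y}^t]$. Since $\cG^{u}$ is the skeleton of $\cRB(\cG)$ and has maximum degree $d$, each node has at most $d$ parents, so $|Pa_\cG(\{X,Y\})|\le 2d$. Because the inclusions are independent and the constrained node sets are disjoint,
\begin{equation*}
\Pr[G_{X,Y}^t] = p^{|Pa_\cG(\{X,Y\})|}(1-p)^2 \ge p^{2d}(1-p)^2,
\end{equation*}
using $p<1$. Choosing $p = \tfrac{d}{d+1}$ and the elementary inequality $\ln\!\big(1-\tfrac{1}{d+1}\big)\ge -\tfrac1d$, I would show
\begin{equation*}
p^{2d}(1-p)^2 = \Big(1-\tfrac{1}{d+1}\Big)^{2d}\frac{1}{(d+1)^2}\ge \frac{1}{e^{2}(d+1)^2} =: q.
\end{equation*}

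I would then finish with a union bound. For a fixed non-adjacent pair, the probability it is never covered is $\prod_{t=1}^N \big(1-\Pr[G_{X,Y}^t]\big) \le (1-q)^N \le e^{-qN}$. There are at most $\binom{n}{2}<\tfrac{n^2}{2}$ such pairs, so
\begin{equation*}
\Pr[\cI \text{ fails}] \le \binom{n}{2}e^{-qN} < \frac{n^2}{2}\, e^{-qN}.
\end{equation*}
Substituting $N \ge 4e^2(d+1)^2\ln n - 1$ yields $qN \ge 4\ln n - q \ge 4\ln n - e^{-2}$, whence $\tfrac{n^2}{2}e^{-qN} \le \tfrac{n^2}{2}\,n^{-4}e^{e^{-2}} = \tfrac{1}{2}e^{e^{-2}}\,n^{-2} < n^{-2}$, using $e^{e^{-2}}<2$. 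Thus $\cI$ is a non-adjacent separating system with probability at least $1-\tfrac1{n^2}$, and by construction $|\cI| = N = \lfloor 4e^2(d+1)^2\ln n\rfloor$.

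The main obstacle is securing polynomial rather than exponential dependence on $d$: a naive constant $p$ (say $p=\tfrac12$) drives $\Pr[G_{X,Y}^t]$ down to $2^{-\Theta(d)}$ and would force exponentially many rounds. The crux is therefore the optimization $p=\tfrac{d}{d+1}$ together with the bound $\big(1-\tfrac{1}{d+1}\big)^{2d}\ge e^{-2}$, which pins the per-round success probability at $\Omega\big(1/(d+1)^2\big)$ and matches the $cc(\cG^{uc})$ scaling in spirit. A secondary technical point is absorbing the floor in $N$, which I handle above by checking that $e^{e^{-2}}<2$ leaves enough slack in the union bound.
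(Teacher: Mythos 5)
Your proposal is correct and follows essentially the same route as the cited source (the paper itself does not reprove Proposition \ref{prop: non-ad separating random}, but its accompanying remark describes exactly this randomized covering idea): independent inclusion with probability $\tfrac{d}{d+1}$, the per-round success bound $\bigl(1-\tfrac{1}{d+1}\bigr)^{2d}\tfrac{1}{(d+1)^2}\ge \tfrac{1}{e^{2}(d+1)^2}$, and a union bound over the at most $\binom{n}{2}$ non-adjacent pairs. Your handling of the floor in $N$ and the disjointness of $\{X,Y\}$ from $Pa_\cG(\{X,Y\})$ is careful and correct, so no gaps remain.
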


\begin{remark}
    The proof of Proposition \ref{prop: non-ad separating random} in \cite{NIPS2017_291d43c6} is constructive. The core idea is to identify the parent sets of $\lfloor 4e^2(d+1)^2\ln(n)\rfloor$ cliques in $\cG^{uc}$, such that these sets probabilistically cover all edges of $\cG^{uc}$ in $1-\frac{1}{n^2}$. This ensures that, even if the algorithm fails to return the correct non-adjacent separating system, the resulting collection sets $\cI$ can still be used to identify a subset of the non-adjacent bidirected edges, though not necessarily all of them.
\end{remark}

We present Algorithm \ref{alg: 2.1} to learn the non-adjacent bidirected edges of DMG $\cG$ that takes the $\cRB(\cG)$ as input. The algorithm constructs a non-adjacent separating system and initializes a set of non-adjacent bidirected edges as an empty set at first.

\begin{algorithm}
        \caption{Learning the non-adjacnet bidirected edges $\bB^N$}
	\label{alg: 2.1}
    \begin{algorithmic}
        \REQUIRE $\cRB(\cG) = (\bV,\bD)$
        \ENSURE $\hat{\bB^N}$
        \STATE $\cI \leftarrow$ an non-adjacent separating system on $(\bV,\bD)$\\
        \STATE Initialization: $\hat{\bB^N} = \varnothing$
        \FOR{ $[X,Y]\in \{[S,R]| (S,R)\notin\bD, (R,S)\notin\bD\}$}
        \STATE $\bI\leftarrow$ an element of $\cI$ that contain $Pa_{\cG}(\{X,Y\})$ but does not contain $X,Y$\\
        \STATE Add $[X,Y]$ to $\hat{\bB^N}$ if $(X\ncperp Y|Pa_{\cG}(\{X,Y\}))_{P_{do(\bI)}}$
        \ENDFOR
    \end{algorithmic}
\end{algorithm}

Suppose $X,Y$ are two arbitrary variables in $\bV$ which satisfy $(X,Y)\notin\bD, (Y,X)\notin\bD$. Since $\cI$ is a non-adjacent separating system on $(\bV,\bD)$, there exist $\bI\in\cI$ that contain $Pa_{\cG}(\{X,Y\})$ but does not contain $X,Y$. According to the following lemma, we find that performing the experiment on $\bI$ can justify whether $[X,Y]\in\bB^N$.

\begin{lemma}
    \label{lem: step 2.1}
    Suppose $\cG = (\bV,\bD,\bB^N\cup\bB^A)$ is a DMG. For each two variables $X,Y\in\bV$ satisfy $(X,Y)\notin\bD, (Y,X)\notin\bD$, if $Pa_\cG(\{X,Y\}) \subseteq\bI$, $X\notin\bI,Y\notin\bI$, then $[X,Y]\in\bB^N$ if and only if $(X\ncperp Y|Pa_{\cG}(\{X,Y\}))_{P_{do(\bI)}}$.
\end{lemma}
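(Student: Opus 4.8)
The plan is to reduce the probabilistic statement to a purely graphical one in the intervened graph $\cG_{\Bar{\bI}}$ and then show that, under the stated conditions, the unique $r$-active path between $X$ and $Y$ given $Pa_\cG(\{X,Y\})$ is the direct bidirected edge. First I would invoke the equivalence between CI and $r$-separation from Scenario 1 / Scenario 2: under $do(\bI)$ the distribution is $r$-Markov and $r$-faithful w.r.t.\ $\cG_{\Bar{\bI}}$, so $(X\ncperp Y\mid Pa_\cG(\{X,Y\}))_{P_{do(\bI)}}$ holds iff $(X\ncperp_r Y\mid Pa_\cG(\{X,Y\}))_{\cG_{\Bar{\bI}}}$. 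Hence it suffices to prove the graphical equivalence: $[X,Y]\in\bB^N$ iff there is an $r$-active path between $X$ and $Y$ given $Pa_\cG(\{X,Y\})$ in $\cG_{\Bar{\bI}}$. The forward direction is immediate: if $[X,Y]\in\bB^N$ the edge $X\leftrightarrow Y$ survives in $\cG_{\Bar{\bI}}$ because $X,Y\notin\bI$, and a single edge is a length-one path that cannot be blocked.

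For the converse I would first record the structural effect of intervening on a set $\bI\supseteq Pa_\cG(\{X,Y\})$ with $X,Y\notin\bI$. Every node of $Pa_\cG(\{X,Y\})$ loses all incoming (directed and bidirected) edges, so each is a root, hence a singleton SCC, in $\cG_{\Bar{\bI}}$; consequently $Anc_{\cG_{\Bar{\bI}}}(Pa_\cG(\{X,Y\})) = Pa_\cG(\{X,Y\})$, and since the parents of $X$ (resp.\ $Y$) are exactly these roots, $Anc_{\cG_{\Bar{\bI}}}(\{X,Y\}) = \{X,Y\}\cup Pa_\cG(\{X,Y\})$.

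The substantive part is a two-step argument on an arbitrary $r$-active path $\cP$ between $X$ and $Y$ given $Pa_\cG(\{X,Y\})$. (i) \emph{No colliders:} any collider $Z_i$ on $\cP$ must lie in $Anc_{\cG_{\Bar{\bI}}}(Pa_\cG(\{X,Y\})\cup\{X,Y\}) = Pa_\cG(\{X,Y\})\cup\{X,Y\}$; but a collider has an incoming edge, so $Z_i\notin\bI\supseteq Pa_\cG(\{X,Y\})$, and it is an internal vertex so $Z_i\neq X,Y$ — contradiction. (ii) \emph{Forced chain:} given that $\cP$ is collider-free, I trace arrowheads. The edge at $X$ cannot point into $X$, else its other endpoint is a conditioned root (distinct from $Y$, since $Y\notin Pa_\cG(X)$) forming a fork that blocks $\cP$ under both $d$- and $\sigma$-separation; so this edge points out of or is bidirected at $X$, planting an arrowhead at the first internal vertex. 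Collider-freeness then forces every subsequent edge to be a forward directed edge, giving $X\,(\to\text{ or }\leftrightarrow)\,Z_1\to\cdots\to Z_k\to Y$. The final edge $Z_k\to Y$ forces $Z_k\in Pa_\cG(Y)\subseteq\bI$, hence $Z_k$ is a root, yet $Z_k$ also carries an incoming edge from the chain — contradiction. Therefore every active path has length one, and since $(X,Y),(Y,X)\notin\bD$ it must be $X\leftrightarrow Y$, i.e.\ $[X,Y]\in\bB^N$.

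The main obstacle I anticipate is handling the $\sigma$-separation case uniformly in step (ii): unlike $d$-separation, a conditioned non-collider only blocks a path when its on-path neighbors lie outside its SCC. I must therefore verify that the conditioned nodes appearing in the forks and chains are genuinely singleton SCCs in $\cG_{\Bar{\bI}}$. This holds precisely because intervention strips these parents of all incoming edges, so they cannot lie on any cycle — which is exactly why conditioning on $Pa_\cG(\{X,Y\})$ blocks the offending paths under both separation notions. Making this observation explicit and checking it covers Scenarios 1 and 2 simultaneously is the delicate point.
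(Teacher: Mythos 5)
Your proposal is correct and follows essentially the same route as the paper's proof: both reduce the CI statement to $r$-separation in $\cG_{\Bar{\bI}}$ and then exploit the two key facts that nodes in $\bI$ have no incoming edges (so no collider on a path can lie in $Pa_\cG(\{X,Y\})$ and $Anc_{\cG_{\Bar{\bI}}}(\{X,Y\}\cup Pa_\cG(\{X,Y\}))$ collapses to $\{X,Y\}\cup Pa_\cG(\{X,Y\})$) and that conditioned parents are singleton SCCs, which handles the $\sigma$-separation case. Your organization (first exclude colliders globally, then run the forced-chain argument) is a slightly cleaner packaging of the paper's case analysis on the first edge of the path, but the substance is identical.
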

\begin{proof}
    We first prove $[X,Y]\in\bB^N$ if and only if $(X\ncperp_r Y|Pa_{\cG}(\{X,Y\}))_{\cG_{\Bar{\bI}}}$.
    \begin{itemize}
        \item If $[X,Y]\in\bB^N$, there exist a path $\cP = (X,[X,Y],Y)$ in $\cG_{\Bar{\bI}}$ since $X,Y\notin \bI$. And this path is not $r$-blocked by $Pa_{\cG}(\{X,Y\})$ in $\cG_{\Bar{\bI}}$, so $(X\ncperp_r Y|Pa_{\cG}(\{X,Y\}))_{\cG_{\Bar{\bI}}}$.
        \item Suppose $(X\ncperp Y|Pa_{\cG}(\{X,Y\}))_{P_{do(\bI)}}$. To prove $[X,Y]\in\bB^N$, we only need to prove that if $[X,Y]\notin \bB^n$, $(X\cperp Y|Pa_{\cG}(\{X,Y\}))_{P_{do(\bI)}}$, which is equivalent to proving that for any path $\cP = (X,E_1,Z_1,\dots,Z_t,E_{t+1},Y)$ between $X$ and $Y$ is $r$-bloced by $Pa_{\cG}(\{X,Y\})$ in $\cG_{\Bar{\bI}}$. Since $(X,Y)\notin \bD,(Y,X)\notin\bD$ and $[X,Y]\notin\bB$, we have $t \geq 1$. We next prove this in two cases.

        \textbf{Case I} : Suppose $E_1 = (X,Z_1)$ or $E_1 = [X,Z_1]$. Since $Pa_G(Y)\subseteq\bI$, $Z_1 \notin Pa_G(Y)$. Furthermore, either $E_{t+1} = [Z_t,Y]$, $E_{t+1} = (Y,Z_t)$ or $E_{t+1} = (Z_t,Y)$ and $E_t = (Z_t, Z_{t-1})$, for all the three scenarios, there exist a collider in $\cP$ and we denote the cloest one to $X$ as $Z_j$. Since there are incoming edges to $Z_j$, we have $Z_j \notin \bI$ and $Z_j \notin Pa_{\cG}(\{\bX,\bY\})$ because $Pa_{\cG}(\{\bX,\bY\})\subseteq\bI$. Besides, $Z_j\notin Anc_\cG(\{X,Y\})$ since $Pa_{\cG}(\{\bX,\bY\})\subseteq\bI$. Thus $Z_j$ is a collider on $\cP$ in $\cG_{\Bar{\bI}}$ which is not the ancestor of $\{X,Y,Pa_{\cG_{\Bar{\bI}}}(\{X,Y\})\}$, we have $\cP$ is $r$-bloced in $\cG_{\Bar{\bI}}$

        \textbf{Case II}: Suppose $E_1 = (Z_1,X)$. We have $Z_1$ is the parent of $X$, so $Z_i\in\bI$ which implies $E_2 = (Z_1,Z_2)$. If $Z_2 = Y$, we have $\cP = (X,(Z_1,X),Z_1,(Z_1,Y),Y)$. In this scenario, $Z_1\in Pa_{\cG}(\{X,Y\})$ and $Z_1,X,Y$ are not in the same SCC. These implies that $\cP$ if $r$-blocked by $Z_1$. If $Z_2 \neq Y$, the rest proof is similar to Case I. There exists a collider in $\cP$ and we denote the closest one to $X$ as $Z_j$. Since there are incoming edges to $Z_j$, we have $Z_j \notin \bI$ and $Z_j \notin Pa_{\cG}(\{\bX,\bY\})$ because $Pa_{\cG}(\{\bX,\bY\})\subseteq\bI$. Besides, $Z_j\notin Anc_\cG(\{X,Y\})$ since $Pa_{\cG}(\{\bX,\bY\})\subseteq\bI$. Thus $Z_j$ is a collider on $\cP$ in $\cG_{\Bar{\bI}}$ which is not the ancestor of $\{X,Y,Pa_{\cG_{\Bar{\bI}}}(\{X,Y\})\}$, we have $\cP$ is $r$-bloced  in $\cG_{\Bar{\bI}}$
        \end{itemize}

        Under Scenario 1, we have $(X\ncperp_d Y|Pa_{\cG}(\{X,Y\}))_{\cG_{\Bar{\bI}}}$ is equivalent to $(X\ncperp Y|Pa_{\cG}(\{X,Y\}))_{P(do(\bI))}$ and under Scenario 2, we have $(X\ncperp_\sigma Y|Pa_{\cG}(\{X,Y\}))_{\cG_{\Bar{\bI}}}$ is equivalent to $(X\ncperp Y|Pa_{\cG}(\{X,Y\}))_{P(do(\bI))}$. This completes the proof.
\end{proof}

Lemma \ref{lem: step 2.1} states that by selecting experiments from a non-adjacent separating system, we can accurately identify the non-adjacent bidirected edge, i.e., $\hat{\bB^N} = \bB^N$.

\begin{remark}
    Compared to Theorem \ref{thm:lower B cI}, Proposition \ref{prop: non-ad separating cc} shows that all non-adjacent bidirected edges can be identified using $cc(\cG^{uc})$ experiments, which matches the worst-case lower bound. However, since the construction in Proposition \ref{prop: non-ad separating cc} is NP-hard, we instead adopt the randomized construction proposed in Proposition \ref{prop: non-ad separating random} as a practical alternative. This method also requires $\lfloor4e^2(d+1)^2\ln(n)\rfloor$ experiments and identifies all bidirected edges with probability at least $1-\frac{1}{n^2}$.
\end{remark}

\subsection{Step 2.2: Identify Adjacent Bidirected Edges}
\label{sec:unb 2.2}
In this section, we focus on the adjacent bidirected edges, $\bB^A$. According to \cite{NIPS2017_291d43c6}, adjacent bidirected edges cannot be identified solely through CI tests; instead, it is necessary to employ the \textit{do-see} test proposed by \cite{NIPS2017_291d43c6}.
We perform experiments on certain subsets of $\bV$ that form an adjacent separating system defined as follows.

\begin{definition}[Adjacent separating system]
\label{def: adjacent separating system}
Suppose $\cRB(\cG) = (\bV,\bD)$ is the directed part of DMG $\cG = (\bV,\bD,\bB^N\cup\bB^A)$. 
An adjacent separating system $\cI$ on $(\bV, \bD)$ is a collection of subsets of $\bV$ satisfying the following condition:

For every pair of nodes $X, Y \in \bV$ such that $(X, Y) \in \bD$ and $(Y, X) \notin \bD$, the following two intervention sets exist:
\begin{itemize}
    \item There exists $\bI \in \cI$ such that:
    \begin{equation*}
        Pa_\cG(\{X,Y\})\backslash\{X,Y\} \subseteq\bI \text{  and  }X\notin\bI,Y\notin\bI;
    \end{equation*}
    \item There exists $\bI' \in \cI$ such that:
    \begin{equation*}
        Pa_\cG(\{X,Y\})\backslash\{Y\} \subseteq\bI^\prime\text{  and  }X\in\bI^\prime,Y\notin\bI^\prime.
    \end{equation*}
\end{itemize}
\end{definition}

Then we present a theoretically grounded approach for constructing an adjacent separating system by using a strong edge coloring defined as follows.

\begin{definition}
    Given an undirected graph $\cG = (\bV,\bE)$,  a strong edge coloring is a function $f: \bE\rightarrow\cC$ that assigns a color to each edge such that: any two edges that are adjacent or connected by another edge must receive different colors. We use $\chi_s(\cG)$ to denote the minimum number of colors needed for a strong edge coloring of $\cG$.
\end{definition}


\begin{proposition}
    \label{prop: step 2.2 min strong EC}
    Suppose $\cRB(\cG) = (\bV,\bD)$ is the directed part of DMG $\cG = (\bV,\bD,\bB^N\cup\bB^{AS}\cup\bB^{AD})$. There exists an adjacent separating system with at most $2\chi_s(\cG^u)$.
\end{proposition}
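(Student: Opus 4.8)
The plan is to build the adjacent separating system directly from a strong edge coloring of the directed skeleton graph $\cG^u$, exploiting the fact that each color class of such a coloring is an induced matching. First I would fix a strong edge coloring of $\cG^u$ using $\chi_s(\cG^u)$ colors, and for each color $c$ let $\bE_c$ be the set of edges receiving color $c$. For each color I would then produce exactly two intervention sets, which accounts for the target bound of $2\chi_s(\cG^u)$ and reflects the two clauses of Definition \ref{def: adjacent separating system}.

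For a color $c$, restrict attention to those edges $[X,Y]^u\in\bE_c$ coming from a single directed edge, i.e. $(X,Y)\in\bD$ and $(Y,X)\notin\bD$ (edges arising from $2$-cycles are irrelevant to the adjacent separating condition and may be discarded, which only shrinks a color class and hence preserves the induced-matching property). I would then set
\begin{equation*}
    \bI_c = \bigcup_{\substack{(X,Y)\in\bD,\,(Y,X)\notin\bD \\ [X,Y]^u\in\bE_c}} \big(Pa_\cG(\{X,Y\})\backslash\{X,Y\}\big), \qquad \bI'_c = \bigcup_{\substack{(X,Y)\in\bD,\,(Y,X)\notin\bD \\ [X,Y]^u\in\bE_c}} \big(Pa_\cG(\{X,Y\})\backslash\{Y\}\big),
\end{equation*}
and take $\cI=\{\bI_c\}\cup\{\bI'_c\}$ over all colors. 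By construction $|\cI|\leq 2\chi_s(\cG^u)$, and for each single directed edge with skeleton in $\bE_c$ the two required containments $Pa_\cG(\{X,Y\})\backslash\{X,Y\}\subseteq\bI_c$ and $Pa_\cG(\{X,Y\})\backslash\{Y\}\subseteq\bI'_c$ hold by definition. Since $(X,Y)\in\bD$ gives $X\in Pa_\cG(Y)$, we automatically have $X\in\bI'_c$, matching the second clause.

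The substance of the argument is the exclusion conditions, namely $X,Y\notin\bI_c$ and $Y\notin\bI'_c$. The summand contributed by the edge itself already removes these endpoints (using in addition $(Y,X)\notin\bD$ to ensure $Y\notin Pa_\cG(X)$), so the only real danger is that an endpoint of $[X,Y]^u\in\bE_c$ is dragged into the union through the parent set of a \emph{different} edge $[X',Y']^u\in\bE_c$. This is exactly where strong edge coloring is indispensable: if, say, $X\in Pa_\cG(\{X',Y'\})$, then there is a directed edge from $X$ into $X'$ or $Y'$, hence an edge of $\cG^u$ joining an endpoint of $[X,Y]^u$ to an endpoint of $[X',Y']^u$; but these two edges share the color $c$, contradicting the definition of a strong edge coloring, under which adjacent or linked edges must differ in color. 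The same reasoning excludes $Y\in Pa_\cG(\{X',Y'\})$ and handles $\bI'_c$ identically.

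I expect the main obstacle to be precisely this cross-edge interference bookkeeping: one must enumerate every way an endpoint could re-enter the union and confirm that each such event forces a $\cG^u$-edge between two identically colored edges. Once the implication ``endpoint lies in a foreign parent set $\Rightarrow$ there is a connecting edge in $\cG^u$'' is pinned down, the induced-matching property of color classes closes all cases uniformly, and the counting bound $2\chi_s(\cG^u)$ follows immediately.
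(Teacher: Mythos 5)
Your proposal is correct and follows essentially the same route as the paper's proof: fix a strong edge coloring of $\cG^u$, build two intervention sets per color class from the parent sets of that class's endpoints, and use the induced-matching property of color classes to rule out any endpoint being pulled in through a foreign edge's parent set. The only cosmetic differences are that the paper writes $\bI_k = Pa_\cG(\bN_k)\backslash\bN_k$ and adds the tail set $\bT_k$ explicitly to form $\bI_k'$, whereas you subtract endpoints locally and obtain $X\in\bI_c'$ automatically from $X\in Pa_\cG(Y)$ --- the underlying argument is identical.
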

\begin{proof}
    Suppose $\cG^u$ is the directed skeleton of $\cG$ and it is determined by $\cRB(\cG)$. Define $\bE_k$ as the set of edges colored with the $k$th color in the minimum strong edge coloring of $\cG^u$. Let $\bN_k\subseteq\bV$ be the set of all endpoints of the edges in $\bE_k$, and let $\bT_k$ be the set of tail nodes of the directed edges in $\bE_k$, where the direction of each edge follows that in the original directed graph $\cRB(\cG)$. If an edge in $\cG^u$ corresponds to two directed edges in the original graph $\cRB(\cG)$, then either of its endpoints can be chosen as the tail node.
    For $1\leq k\leq \chi_s(\cG^u)$, we construct $\bI_k,\bI_k^\prime \subseteq\bV$ as follows,
    \begin{equation*}
        \bI_k = Pa_{\cG}(\bN_k)\backslash\bN_k, \qquad \bI_k^\prime = \left(Pa_\cG(\bN_k)\backslash\bN_k\right)\cup\bT_k
    \end{equation*}
    Then we show that $\cI = \{\bI_1,\bI_1^\prime,\dots,\bI_{\chi_s(\cG^u)},\bI_{\chi_s(\cG^u)}^\prime\}$ is a adjacent separating system on $(\bV,\bD)$. Note that $|\cI| = 2\chi_s(\cG^u)$. Suppose $X,Y\in\bV$ and $(X,Y)\in\bD$, $(Y,X)\notin\bD$ and corresponding undirected edge $[X,Y]^u$ in the directed skeleton graph $\cG^u$ belongs to the color class $\bE_k$. Then we have $X\in\bT_k\subseteq\bN_k$ and $Y\in\bN_k\backslash\bT_k$.  We want to show that there exist $\bI,\bI^\prime \in \cI$ such that $Pa_\cG(\{X,Y\})\backslash\{X,Y\} \subseteq\bI$ and $X\notin\bI,Y\notin\bI$; $Pa_\cG(\{X,Y\})\backslash\{Y\} \subseteq\bI^\prime$ and $X\in\bI^\prime,Y\notin\bI^\prime$. $\bI =\bI_k,\bI^\prime = \bI_k^\prime$ satisfies this propoty. According to the definition of strong edge coloring, we have $(Pa_\cG(\{X,Y\}\backslash\{X,Y\}) \cap \bN_k = \varnothing $ and $X,Y\in\bN_k$, $X\in \bT_k$, $Y\notin \bT_k$. Then we have $\bI_k\cap Pa_\cG(\{X,Y\}) = Pa_\cG(\{X,Y\})\backslash\{X,Y\}$, $X\notin\bI_k,Y\notin\bI_k$ and $\bI_k^\prime\cap Pa_\cG(\{X,Y\}) =Pa_\cG(\{X,Y\})\backslash\{Y\}$, $X\in\bI_k^\prime,Y\notin\bI_k^\prime$.
\end{proof}

\begin{example}[Strong edge coloring and adjacent separating system]
    Consider the DMG $\cG$ shown in Figure \ref{fig:example 4.1},  Figure \ref{fig:example 4.3} is a strong edge coloring of $\cG^u$ where different colors of edges mean different colors in the strong edge coloring. We can verify that this is the minimum strong edge coloring.
    
    Then we can use it to construct the adjacent separating system $\cI$ similar to the Proposition \ref{prop: step 2.2 strong EC}, i.e., $\cI = \{\varnothing,\{X_1\},\{X_5\},\{X_1,X_5\},\{X_1,X_4\},\{X_1,X_3,X_4\},\{X_1,X_3\},\{X_1,X_3,X_5\},$ \\ $\{X_3,X_4\},\{X_1,X_3,X_4,X_5\}\}$ and $|\cI| = 2\chi_s(\cG^{u}) = 10$.
\end{example}

Although the proof of Proposition \ref{prop: step 2.2 min strong EC} is constructive, constructing an adjacent separating system requires finding a minimum strong edge covering, which is NP-hard. Therefore, in practice, we recommend using a simple greedy algorithm to find an alternative adjacent separating system.

\begin{proposition}
    \label{prop: step 2.2 strong EC}
    Suppose $\cRB(\cG) = (\bV,\bD)$ is the directed part of DMG $\cG = (\bV,\bD,\bB^N\cup\bB^{AS}\cup\bB^{AD})$ and $\cG^u$ is the directed skeleton of $\cRB(\cG)$, $d$ is the max degree of $\cG^u$. We can use a simple greedy algorithm to find an adjacent separating system on $(\bV,\bD)$ with at most $4d^2$ elements.
\end{proposition}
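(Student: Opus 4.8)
The plan is to reuse the construction from Proposition \ref{prop: step 2.2 min strong EC} verbatim, but to feed it a strong edge coloring produced by a greedy procedure rather than a minimum one. Inspecting the proof of Proposition \ref{prop: step 2.2 min strong EC}, one sees that it only invokes the \emph{defining} property of a strong edge coloring—that edges sharing an endpoint or joined by a third edge receive distinct colors—to guarantee that each color class $\bE_k$ yields intervention sets $\bI_k, \bI_k^\prime$ with the required separation structure; it never uses minimality. Hence the identical construction converts \emph{any} strong edge coloring of $\cG^u$ with $c$ colors into an adjacent separating system with $2c$ elements. It therefore suffices to exhibit a greedy strong edge coloring of $\cG^u$ that uses at most $2d^2$ colors.

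First I would set up the greedy procedure: order the edges of $\cG^u$ arbitrarily and process them one at a time, assigning to each edge the smallest color index not yet used by any already-colored edge that conflicts with it, where two edges conflict if they are adjacent or connected by a third edge. This yields a valid strong edge coloring by construction, and the number of colors used is at most $1 + D$, where $D$ is the maximum over all edges $e$ of the number of edges conflicting with $e$ (equivalently, $D = \Delta(L(\cG^u)^2)$, the maximum degree of the square of the line graph).

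The key step is to bound $D$. Fix an edge $e = [u,v]^u$ and split the conflicting edges into two types. \emph{Type A} edges share an endpoint with $e$; there are at most $(\deg(u)-1) + (\deg(v)-1) \le 2d - 2$ of them. \emph{Type B} edges are disjoint from $\{u,v\}$ but joined to $e$ by a third edge, which forces each Type B edge to have an endpoint among the neighbors of $u$ or of $v$ in $\cG^u$. Summing incident edges over the at most $d-1$ relevant neighbors of $u$ and the at most $d-1$ relevant neighbors of $v$, each neighbor contributing at most $d-1$ further edges, bounds the Type B count by $2(d-1)^2$. Combining the two types gives
\begin{equation*}
    D \le (2d-2) + 2(d-1)^2 = 2d(d-1) = 2d^2 - 2d .
\end{equation*}
Hence the greedy coloring uses at most $2d^2 - 2d + 1 \le 2d^2$ colors, and the resulting adjacent separating system has at most $2(2d^2 - 2d + 1) = 4d^2 - 4d + 2 \le 4d^2$ elements, as claimed.

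The hard part will be the combinatorial count in the key step: one must avoid double-counting edges whose endpoints are both neighbors of $u$ (or both of $v$), and justify that the crude per-neighbor bound of $d-1$ new edges is a legitimate upper bound for $\Delta(L(\cG^u)^2)$. Once this degree bound is secured, the remaining pieces—the validity of the greedy coloring and the direct reuse of Proposition \ref{prop: step 2.2 min strong EC}—are routine.
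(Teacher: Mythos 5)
Your proposal is correct and follows essentially the same route as the paper: replace the minimum strong edge coloring in Proposition \ref{prop: step 2.2 min strong EC} with a greedy one using at most $2d^2$ colors, and observe that the construction there never uses minimality. The only difference is that you explicitly prove the $2d^2$ bound on the greedy strong edge coloring (via the $2d(d-1)$ conflict-degree count, where over-counting edges between neighbors only loosens the upper bound and is therefore harmless), whereas the paper delegates this to a citation of \citet{Choi_2018}.
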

\begin{proof}
    The proof is similar to that of Proposition \ref{prop: step 2.2 min strong EC}. The key difference lies in replacing the minimum strong edge coloring used in Proposition \ref{prop: step 2.2 min strong EC} with a general strong edge coloring, which can be efficiently obtained using a greedy algorithm \citep{Choi_2018}. Although not optimal, this general coloring uses at most $2d^2$ colors. The remainder of the proof follows identically to that of Proposition \ref{prop: step 2.2 min strong EC}.
\end{proof}

We present Algorithm \ref{alg: 2.2} to learn the single adjacent bidirected edges $\bB^{AS}$ of DMG $\cG$ that takes $\cRB(\cG)$ as input. The algorithm constructs an adjacent separating system and initializes a set of single adjacent bidirected edges as empty at first.

\begin{algorithm}
        \caption{Learning the single adjacnet bidirected edges $\bB^{AS}$}
	\label{alg: 2.2}
    \begin{algorithmic}
        \REQUIRE $\cRB(\cG) = (\bV,\bD)$
        \ENSURE $\hat{\bB^{AS}}$
        \STATE $\cI \leftarrow$ an adjacent separating system on $(\bV,\bD)$\\
        \STATE Initialization: $\hat{\bB^{AS}} = \varnothing$
        \FOR{ $[X,Y]\in \{[S,R]| (S,R)\in\bD, (R,S)\notin\bD\}$}
        \STATE $\bI\leftarrow$ an element of $\cI$ that contain $Pa_{\cG}(\{X,Y\})\backslash\{X,Y\}$ but does not contain $X,Y$\\
        \STATE $\bI^\prime\leftarrow$ an element of $\cI$ that contain $Pa_{\cG}(\{XY\})\backslash\{Y\}$ and $X$ but does not contain $Y$\\
        \STATE Add $[X,Y]$ to $\hat{\bB^{AS}}$ if $P_{do(\bI)}(Y|X,do(\bI))\neq P_{do(\bI^\prime)}(Y|do(\bI^\prime))$\\
        \ENDFOR
    \end{algorithmic}
\end{algorithm}

Suppose $X,Y$ are two arbitrary variables in $\bV$ which satisfy $(X,Y)\in\bD, (Y,X)\notin\bD$. Since $\cI$ is an adjacent separating system on $(\bV,\bD)$, there exist $\bI\in\cI$ that contain $Pa_{\cG}(\{X,Y\})\backslash\{X,Y\}$ but does not contain $X,Y$ and exist $\bI^\prime\in\cI$ that contain $Pa_{\cG}(\{XY\})\backslash\{Y\}$ and $X$ but does not contain $Y$. According to the following lemma, we find that performing the experiment on $\bI$ and $\bI^\prime$ can justify whether $[X,Y]\in\bB^{AS}$ through the do-see test.

\begin{lemma}[\cite{NIPS2017_291d43c6}]
    \label{lem:step2.2}
    Suppose $\cG = (\bV,\bD,\bB^N\cup\bB^{AS}\cup\bB^{AD})$ is a DMG. For each two variables $X,Y\in\bV$ satisfy $(X,Y)\in\bD$ and $(Y,X)\notin\bD$, if $Pa_\cG(\{X,Y\})\backslash\{X,Y\} \subseteq\bI$, $X\notin\bI,Y\notin\bI$ and $Pa_\cG(\{X,Y\})\backslash\{Y\} \subseteq\bI^\prime$, $X\in\bI^\prime,Y\notin\bI^\prime$, then $[X,Y]\in\bB^{AS}$ if and only if $P_{do(\bI)}(Y|X,do(\bI))\neq P_{do(\bI^\prime)}(Y|do(\bI^\prime))$
\end{lemma}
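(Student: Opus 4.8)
The plan is to collapse both interventional conditionals to a single structural evaluation of $Y$ and then compare them through the exogenous noise $\bU^Y$. Write $\mathbf{W} := Pa_\cG(\{X,Y\})\setminus\{X,Y\}$ and note that both hypotheses fix $\mathbf{W}$: $\mathbf{W}\subseteq\bI$ and $\mathbf{W}\cup\{X\}\subseteq\bI'$. Because $(Y,X)\notin\bD$ we have $Pa_\cG(X)\subseteq\mathbf{W}$, and because $(X,Y)\in\bD$ we have $Pa_\cG(Y)\setminus\{X\}\subseteq\mathbf{W}$. Interpreting the conditionals at matching values $\mathbf{W}=\mathbf{w}$ of the (full-support) intervened parents, I would first observe that intervening on $\mathbf{W}$ deletes all incoming edges of those nodes, so in $\cG_{\bar\bI}$ every parent of $X$ is a source and $Y$ is not an ancestor of $X$; simplicity of the SCM then lets me write $X=f_X(\mathbf{w},\bU^X)$ and $Y=f_Y(X,\mathbf{w},\bU^Y)$ as genuine evaluations rather than fixed points. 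Fixing $\mathbf{W}=\mathbf{w}$ also blocks every fork $X\leftarrow W\to Y$ through a common parent, so that, apart from the edge $X\to Y$, the only possible remaining channel between $X$ and $Y$ is the bidirected edge $[X,Y]$.

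Under this reduction both quantities become integrals of $\mathbf{1}[f_Y(x,\mathbf{w},u)=y]$ against a law of $\bU^Y$. On the do-side, intervening on $X$ additionally removes the incoming bidirected edge of $X$, so the set value of $X$ is independent of $\bU^Y$ and $P_{do(\bI')}(Y=y\mid do(X=x),\mathbf{W}=\mathbf{w})$ integrates against the marginal $P(\bU^Y)$; on the see-side, $P_{do(\bI)}(Y=y\mid X=x,\mathbf{W}=\mathbf{w})$ integrates against the conditional $P(\bU^Y\mid X=x)$. Hence the two conditionals coincide for all $(x,y)$ exactly when $P(\bU^Y\mid X=x)=P(\bU^Y)$, i.e. when $X\cperp\bU^Y$ in the see regime.

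The "only if" direction is then direct: if $[X,Y]\notin\bB$ then $\bU^X\cap\bU^Y=\varnothing$, and since in $\cG_{\bar\bI}$ the variable $X$ depends only on $\mathbf{w}$ and $\bU^X$, we get $X\cperp\bU^Y$, so $P(\bU^Y\mid X=x)=P(\bU^Y)$ and the see- and do-conditionals agree. For the "if" direction, $[X,Y]\in\bB$ yields a nonempty shared confounder $\bU^{XY}:=\bU^X\cap\bU^Y$ that is a genuine argument of both $f_X$ and $f_Y$; this makes the bidirected edge $X\leftrightarrow Y$ an open back-door path from $X$ to $Y$ given $\mathbf{W}$ in $\cG_{\bar\bI}$, so $X$ and $\bU^Y$ are dependent and $P(\bU^Y\mid X=x)\neq P(\bU^Y)$ on a set of positive measure, whence the two conditionals differ. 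Finally, since we are in the case $(X,Y)\in\bD$, $(Y,X)\notin\bD$, any such $[X,Y]\in\bB$ is adjacent and single, so $[X,Y]\in\bB\Leftrightarrow[X,Y]\in\bB^{AS}$, giving the stated equivalence.

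I expect the main obstacle to be the non-degeneracy required in the "if" direction: ruling out a coincidental cancellation in which the $\mathbf{W}$-conditional shift of $\bU^Y$ induced by observing $X$ leaves the law of $Y=f_Y(x,\mathbf{w},\bU^Y)$ unchanged. This is precisely what a faithfulness assumption excludes, but the subtlety is that the do-see gap is a cross-regime comparison rather than a conditional independence inside a single $P_{do(\cdot)}$; I would therefore encode it as the openness of the back-door path $X\leftrightarrow Y$ given $\mathbf{W}$ in the mutilated graph $\cG_{\bar\bI}$ and invoke faithfulness of the interventional distribution (the same property already used in Lemmas \ref{lem:step 1.1 1}, \ref{lem: step 1.2} and \ref{lem: step 2.1}) to conclude, following the argument of \cite{NIPS2017_291d43c6}.
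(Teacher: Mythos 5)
The paper never proves Lemma \ref{lem:step2.2}: it is imported verbatim from \cite{NIPS2017_291d43c6} by citation, so there is no in-paper proof to compare against. Your reconstruction is the standard do-see argument from that reference, and its skeleton is sound: after hard-intervening on $\mathbf{W}=Pa_\cG(\{X,Y\})\setminus\{X,Y\}$, every parent of $X$ and every parent of $Y$ other than $X$ becomes an exogenous source, $Y$ cannot be an ancestor of $X$ in $\cG_{\Bar{\bI}}$ (so simplicity does give you genuine evaluations $X=f_X(\mathbf{w},\bU^X)$, $Y=f_Y(X,\mathbf{w},\bU^Y)$), and the see/do conditionals at matching values of the intervened variables reduce to pushforwards of $P(\bU^Y\mid X=x)$ and of $P(\bU^Y)$ under $u\mapsto f_Y(x,\mathbf{w},u)$. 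The direction ``$[X,Y]\notin\bB\Rightarrow$ equality'' then follows cleanly from the joint independence of $\bU$, and your closing observation that in the case $(X,Y)\in\bD$, $(Y,X)\notin\bD$ membership in $\bB$ is equivalent to membership in $\bB^{AS}$ is correct.

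The genuine gap is the one you flag yourself but do not close. For ``$[X,Y]\in\bB\Rightarrow$ inequality'' you need two non-degeneracies: that the shared noise $\bU^X\cap\bU^Y\neq\varnothing$ actually induces dependence between $X$ and $\bU^Y$, and that this dependence survives the pushforward through $f_Y(x,\mathbf{w},\cdot)$ rather than cancelling. Neither is delivered by the assumptions the paper states: the $r$-faithfulness of Scenarios 1 and 2 equates conditional independences \emph{within a single interventional distribution} with $r$-separations, whereas the do-see gap is an equality between a conditional of $P_{do(\bI)}$ and a marginal of $P_{do(\bI')}$ --- a cross-regime functional that no $r$-separation statement in any single mutilated graph encodes. \cite{NIPS2017_291d43c6} closes this with an explicit additional genericity assumption on the SCM, and a complete proof must either import that assumption or prove non-cancellation directly; your appeal to ``the same faithfulness already used in Lemmas \ref{lem:step 1.1 1}, \ref{lem: step 1.2} and \ref{lem: step 2.1}'' is therefore not sufficient as stated, though the rest of the argument is.
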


\begin{remark}
    Lemma \ref{lem:step2.2} only holds for single adjacent bidirected edges, which is $\{[X,Y]\in\bB|(X,Y)\in\bD, (Y,X)\notin\bD\}$. If $(X,Y)\in\bD$ and $(Y,X)\in\bD$, $P_{do(\bI)}(Y|X,do(\bI))\neq P_{do(\bI^\prime)}(Y|do(\bI^\prime))$ always holds, thus we can not justify whether $[X,Y]\in\bB^{AD}$.
\end{remark}
Lemma \ref{lem:step2.2} states that by selecting experiments from an adjacent separating system, we can accurately identify the single adjacent bidirected edge, i.e., $\hat{\bB^{AS}} = \bB^{AS}$.

In Section~\ref{sec:unb 2.1}, we demonstrate that the non-adjacent bidirected edges of a DMG $\cG$ can be identified by conducting experiments based on a non-adjacent separating system. In this section, we further show that the set of single adjacent bidirected edges, denoted $\bB^{AS}$, can be recovered through experiments designed using an adjacent separating system.
To support this, we provide Propositions \ref{prop: non-ad separating cc} and \ref{prop: step 2.2 min strong EC}, which describe constructions for non-adjacent and adjacent separating systems, respectively. Moreover, for components that are computationally intractable due to NP-hardness, we propose alternative constructions based on Propositions \ref{prop: non-ad separating random} and \ref{prop: step 2.2 strong EC}, which offer practical approximations to the ideal separating systems.


Thus far, through Algorithms \ref{alg: 1.1} to \ref{alg: 2.2}, we have successfully identified all directed edges and the majority of bidirected edges in the DMG $\cG$, with the exception of double-adjacent bidirected edges. Specifically, when both directed edges $(X, Y)$ and $(Y, X)$ are present, we are unable to determine whether the bidirected edge $[X, Y]$ belongs to $\bB^{AD}$. This difficulty arises because in such cases, not only do CI tests fail, but even do-see tests become ineffective.
Intuitively, the presence of $[X, Y]$ has little impact on the connectivity of the graph, as both $X$ and $Y$ simultaneously possess incoming and outgoing directed edges. As a result, intervening on either node yields post-intervention distributions that remain invariant, regardless of whether $[X, Y]$ is present.
While this limitation poses a challenge for edge identification, we argue that accurately recovering such double-adjacent bidirected edges may be less critical, as their presence or absence has minimal effect on the overall causal structure. A deeper investigation of this limitation is left for future work.


\begin{corollary}
\label{coro: Upper unbounded DB cI}
    For a DMG $\cG=(\bV,\bD,\bB^{N}\cup\bB^{AS}\cup\bB^{AD})$, if the NP-hardness can be ignored, Algorithms \ref{alg: 1.1} to \ref{alg: 2.2} together learn the $\cRD(\cG) = (\bV,\bD,\bB^{N}\cup\bB^{AS})$ with as most 
    \begin{equation}
        2\lceil\log_2(\chi(\cG_r^{obs}))\rceil + \sum_{k=1}^{l+1}\zeta_{max}^{k,\cG} + cc(\cG^{uc})+2\chi_s(\cG^u)
    \end{equation}
    elements. If the NP-hardness can not be ignored, Algorithms \ref{alg: 1.1} to \ref{alg: 2.2} together learn the $\cRD(\cG) = (\bV,\bD,\bB^{N}\cup\bB^{AS})$ with as most 
    \begin{equation}
        2\lceil\log_2(d_1+1)\rceil+ \sum_{k=1}^{l+1}\zeta_{max}^{k,\cG} + \lfloor4e^2(d_2+1)^2\ln(n)\rfloor + 4d_2^2
    \end{equation}
    elements with probability $1-\frac{1}{n^2}$, where $n$ is the number of nodes, $d_1$ is the max degree of $\cG_r^{obs}$, $d_2$ is the max degree of $\cG^{u}$.
\end{corollary}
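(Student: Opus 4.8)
The plan is to prove this corollary as a straightforward aggregation of the correctness and experiment-count guarantees already established for Algorithms \ref{alg: 1.1} through \ref{alg: 2.2} in the preceding subsections; no new combinatorial machinery is required. First I would handle correctness. Step 0 recovers $\cG_r^{obs}$ from observational data alone and contributes no experiments. By Corollary \ref{coro: step 1.1}, Algorithm \ref{alg: 1.1} outputs the true descendant sets $\{De_\cG(X)\}_{X\in\bV}$ and SCCs of $\cG$, from which the SCC-Anc partition $\bbT^\cG$ is obtained via Definition \ref{def: scc-anc partition}. Feeding $\bbT^\cG$ into Algorithm \ref{alg: 1.2}, Lemma \ref{lem: step 1.2} guarantees exact recovery of every parent set and hence $\cRB(\hat{\cG}) = \cRB(\cG)$, i.e., all of $\bD$. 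With $\cRB(\cG)$ in hand, Lemma \ref{lem: step 2.1} ensures Algorithm \ref{alg: 2.1} returns $\hat{\bB^N} = \bB^N$, and Lemma \ref{lem:step2.2} ensures Algorithm \ref{alg: 2.2} returns $\hat{\bB^{AS}} = \bB^{AS}$. Assembling these four outputs yields exactly $\cRD(\cG) = (\bV, \bD, \bB^N \cup \bB^{AS})$, matching the claim.

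Next I would tally the experiment counts, noting that the four separating systems are constructed and queried independently, so their sizes simply add (which gives the stated upper bound). In the NP-hardness-ignored case, Proposition \ref{prop: colored separating} bounds Step 1.1 by $2\lceil\log_2(\chi(\cG_r^{obs}))\rceil$, Proposition \ref{prop: step 1.2 SCC-Anc separating} bounds Step 1.2 by $\sum_{k=1}^{l+1}\zeta_{max}^{k,\cG}$, Proposition \ref{prop: non-ad separating cc} bounds Step 2.1 by $cc(\cG^{uc})$, and Proposition \ref{prop: step 2.2 min strong EC} bounds Step 2.2 by $2\chi_s(\cG^u)$; their sum is the first displayed bound, whose first two terms already appear combined in Corollary \ref{coro: Upper unbounded D cI}. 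In the NP-hardness-not-ignored case, the minimum vertex coloring of Step 1.1 is replaced by the greedy $(d_1+1)$-coloring, changing its term to $2\lceil\log_2(d_1+1)\rceil$; the SCC-Anc construction of Step 1.2 is already polynomial and its term is unchanged; the minimum edge clique cover of Step 2.1 is replaced by the randomized construction of Proposition \ref{prop: non-ad separating random}, giving $\lfloor 4e^2(d_2+1)^2\ln(n)\rfloor$; and the minimum strong edge coloring of Step 2.2 is replaced by the greedy construction of Proposition \ref{prop: step 2.2 strong EC}, giving $4d_2^2$. Summing yields the second displayed bound.

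Finally I would address the probability qualifier. Steps 1.1, 1.2, and 2.2 rely on deterministic greedy colorings that always return valid (if non-optimal) separating systems, so their correctness is unconditional. The only randomized ingredient is the non-adjacent separating system of Proposition \ref{prop: non-ad separating random}, which fails to be a valid separating system with probability at most $1/n^2$; conditioned on its success, Lemma \ref{lem: step 2.1} recovers $\bB^N$ exactly. Hence the full pipeline succeeds with probability at least $1 - 1/n^2$, as claimed.

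The main obstacle here is bookkeeping rather than mathematical: one must verify that the output of each stage is precisely the input required by the next (descendant sets and SCCs $\to$ SCC-Anc partition $\to$ $\cRB(\cG)$ $\to$ the two bidirected-edge stages), so that the correctness guarantees compose without gap, and that the sole source of randomness is isolated to Step 2.1 when asserting the $1 - 1/n^2$ bound. I would also be careful to emphasize that the summed count is an upper bound, since the separating systems are not shared across steps, which is exactly what the corollary asserts.
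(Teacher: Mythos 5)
Your proposal is correct and follows essentially the same route the paper takes: the corollary is a direct aggregation of the correctness guarantees (Corollary \ref{coro: step 1.1}, Lemmas \ref{lem: step 1.2}, \ref{lem: step 2.1}, \ref{lem:step2.2}) with the separating-system size bounds (Propositions \ref{prop: colored separating}, \ref{prop: step 1.2 SCC-Anc separating}, \ref{prop: non-ad separating cc}/\ref{prop: non-ad separating random}, \ref{prop: step 2.2 min strong EC}/\ref{prop: step 2.2 strong EC}), summed across the four experimental stages. Your additional care in isolating the randomness to the Step 2.1 construction and in verifying that each stage's output feeds the next is consistent with, and slightly more explicit than, the paper's implicit argument.
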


\section{Bounded-size Experiment Design}
\label{sec: boun algorithm}
 In the previous sections, we did not place any restrictions on the size of the intervention sets, allowing the algorithm to perform experiments of arbitrary size. However, in practical scenarios, large-scale interventions may be infeasible or prohibitively expensive. In this section, we address the experiment design problem under a constraint on experiment size. Specifically, our objective is to design a collection of intervention sets, denoted by $\cI$, such that $\cRA([\cG]_\cI^r)={\cRA(\cG)}$, and to use the do-see test to recover as much of $\bB^A$ as possible. Each intervention set $\bI\in\cI$ is subject to a size constraint $|\bI|\leq M$, where $M<n$ is a given constant.

 \begin{remark}
 \label{remark: M setting}
     As proved in Theorems \ref{thm:lower D bI} and \ref{thm:lower B cI},  it is necessary to perform some experiments with the max size at least $\max\{\max_{[X,Y]\in\bB^N }|Pa_{\cG}(X\cup Y) |, |\rmT_{l+1}^{\cG}|_n + \zeta^{l+1,\cG}_{max} -1\}$ to learn a DMG $\cG$ in the worst case. Furthermore, to identify adjacent bidirected edges via the do-see test, the largest size of the experiment must have at least $\max_{[X,Y]\in\bB^A }|Pa_{\cG}(X\cup Y) |$ elements. Hence, the upper bound $M$ cannot be smaller than $\max\{\max_{\{X,Y\}\in\bV,X\neq Y }|Pa_{\cG}(X\cup Y) |, |\rmT_{l+1}^{\cG}|_n + \zeta^{l+1,\cG}_{max} -1\}$.
 \end{remark}

Since Step 0 does not involve any experiments, we will modify the rest four steps of our proposed method (introduced in Sections \ref{sec:unb 1.1} to \ref{sec:unb 2.2}) in order to accommodate the new constraint that the size of the experiments is bounded by a constant $M \geq \max\{\max_{[X,Y]\in\bB }|Pa_{\cG}(X\cup Y) |, |\rmT_{l+1}^{\cG}|_n + \zeta^{l+1,\cG}_{max} -1\}$.

\subsection{Step 1.1: Identify Ancestral Relationship}
\label{sec:bou 1.1}
 In this step, instead of learning $\cG_r^{obs}$ and constructing a colored separating system, we construct an $(n,M)$-separating system, formally defined by \cite{nMsystem2015} as follows.
\begin{definition}[$(n,M)$-separating system]
    An $(n,M)$-separating system $\cI$ on $\bV$ is a collection of subsets of $\bV$ ($|\bV| = n$) such that $|\bI|<M$ for each $\bI\in\cI$, and for every ordered pair of distinct variables $X,Y$ in $\bV$ there exists $\bI\in\cI$ such that $X\in\bI$ and $Y\notin\bI$.
\end{definition}

According to \cite{nMsystem2015}, the cardinality of the $(n,M)$-separating system also has an achievable bound. 

\begin{proposition}[\cite{nMsystem2015}]
\label{prop: boun 1.1}
    There exists an $(n,M)$-separating system on $\bV$ with at most $\lceil\frac{n}{M}\rceil\lceil \log_{\lceil\frac{n}{M}\rceil}n\rceil$ elements.
\end{proposition}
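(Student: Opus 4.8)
The plan is to give a direct constructive proof. Set $a = \lceil n/M\rceil$ (note $a \ge 2$ since $M < n$) and $L = \lceil \log_a n\rceil$, so that $a^L \ge n$. The idea is to attach to each of the $n$ vertices a distinct ``address'' in $\{0,\dots,a-1\}^L$ and, for each coordinate $t \in \{1,\dots,L\}$ and each symbol $s \in \{0,\dots,a-1\}$, to let $\bI_{t,s}$ be the set of vertices whose $t$-th coordinate equals $s$. This yields at most $aL = \lceil n/M\rceil \lceil \log_{\lceil n/M\rceil} n\rceil$ sets, matching the target count, so the whole burden is to choose the addresses so that (i) they are pairwise distinct and (ii) every coordinate is \emph{balanced}, meaning each symbol occurs at most $\lceil n/a\rceil \le M$ times in each coordinate.

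To produce such addresses I would build them by a recursive $a$-ary partition of $\bV$ down to singletons. At the root, split $\bV$ into $a$ blocks whose sizes differ by at most one and record the block index as the first coordinate; then recurse inside each block to determine the remaining coordinates, stopping once every block is a singleton. Because each split reduces block sizes by a factor of about $a$, the recursion has depth $L$, and distinct vertices are eventually separated into different singletons, so their full addresses differ, giving (i). The set $\bI_{t,s}$ is then exactly the union, over all level-$t$ blocks, of the $s$-th child, and its cardinality is the global number of vertices receiving symbol $s$ in coordinate $t$.

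For the separating property, suppose $X \ne Y$ form an ordered pair. Their addresses differ in some coordinate $t$, say $X$ has symbol $s_X$ and $Y$ has $s_Y \ne s_X$ there; then $\bI_{t,s_X}$ contains $X$ but not $Y$, which is exactly what the definition of an $(n,M)$-separating system requires (the reverse pair being handled by $\bI_{t,s_Y}$). Together with the count $aL$ above, this already establishes everything except the size bound $|\bI_{t,s}| \le M$.

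The size bound is the step I expect to be the main obstacle, because per-coordinate balance is not automatic. The naive choice of addresses as the base-$a$ digit strings of $0,\dots,n-1$ fails: the most significant digit ranges over only $\lceil n/a^{L-1}\rceil$ values, so some symbol can occur up to $a^{L-1}$ times, far exceeding $M$. Independent balanced splitting inside each block also fails, because when a block of size $m$ is split into parts of sizes $\lceil m/a\rceil$ and $\lfloor m/a\rfloor$, the same symbol tends to receive the larger part in every block, and these surpluses accumulate across siblings. The fix, and the delicate point to verify, is to \emph{stagger} the rounding: across the sibling blocks at each level one rotates which children receive the oversized parts, so that the surpluses are spread evenly among the $a$ symbols. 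I would then prove by induction on the levels that, with this rotation, the global count of every symbol in every coordinate is at most $\lceil n/a\rceil$, whence $|\bI_{t,s}| \le \lceil n/a\rceil \le M$ by the choice $a = \lceil n/M\rceil$. Carefully bookkeeping the rotation so that balance is preserved \emph{simultaneously} at all levels is the crux of the argument; everything else is routine.
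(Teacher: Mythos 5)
The paper does not actually prove this proposition; it is imported verbatim from \cite{nMsystem2015}, whose constructive argument has exactly the architecture you describe: distinct length-$L$ labels over an alphabet of size $a=\lceil n/M\rceil$, balanced in every position, with one intervention set per (position, symbol) pair. Your reduction to that labeling lemma is sound: the count $aL$ matches the stated bound, the separating property for ordered pairs follows from distinctness of the labels, and $\lceil n/a\rceil\le M$ holds because $a\ge n/M$ and $M$ is an integer. (One pedantic point: the paper's definition requires $|\bI|<M$ strictly, whereas your sets can have size exactly $\lceil n/a\rceil=M$, e.g.\ $n=10$, $M=5$; this is an off-by-one in the paper's statement of the definition relative to \cite{nMsystem2015}, not a defect of your construction.)

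The genuine gap is that the balanced distinct labeling---which you correctly identify as carrying the entire content of the statement---is asserted rather than proved. You rightly observe that the base-$a$ digit strings of $0,\dots,n-1$ fail in the high-order positions, and that independently balanced splitting of each block accumulates a surplus of up to one element per block per level; but ``rotate which children receive the oversized parts'' is a plan, not a proof, and the simultaneous-balance bookkeeping you defer is where all the difficulty sits. A clean way to close it without any induction on levels: partition $\{0,\dots,a-1\}^L$ into the $a^{L-1}$ classes of the form $\{(x_1+c,\dots,x_L+c)\bmod a : 0\le c\le a-1\}$. Each class has exactly $a$ strings, any two of which differ in \emph{every} coordinate, so within a full class every symbol occurs exactly once in every position. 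Writing $n=ka+r$ with $0\le r<a$ and taking $k$ full classes plus $r$ strings from one further class (possible since $\lceil n/a\rceil\le a^{L-1}$) yields $n$ distinct labels in which every symbol occurs at most $k+1\le\lceil n/a\rceil$ times in every position, which is precisely the lemma you need. With that supplied, your proof is complete and coincides with the approach of the cited reference.
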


\begin{remark}
    The proof of Proposition \ref{prop: boun 1.1} in \cite{nMsystem2015} is constructive, which allows us to obtain an $(n,M)$-separating system on $\bV$ with at most  $\lceil\frac{n}{M}\rceil\lceil \log_{\lceil\frac{n}{M}\rceil}n\rceil$ elements.
\end{remark}

 It suffices to modify the $\cC$ and $\cI$ of Algorithm \ref{alg: 1.1} by setting $\cI$ to be an $(n,M)$-separating system on $\bV$ and leaving the rest of the algorithm unchanged. It is straightforward to verify that the modified algorithm obtains $\{De_{\cG}(X)\}_{X\in\bV}$ and SCCS $\cS = \{\bS_1,\dots,\bS_s\}$ of $\cG$ by performing experiments on the elements of $\cI$.

\subsection{Step 1.2: Identify Directed Edges}
\label{sec:bou 1.2}
Algorithm \ref{alg: 1.2} remains unchanged for this step except for the SCC-Anc separating system. We need to construct an SCC-Anc separating system on $(\bV,\bbT^\cG)$ such that the size of the elements of $\bI$ does not exceed $M$.

\begin{theorem}
    \label{thm: boun 1.2}
    Suppose $M \geq \max\{\max_{[X,Y]\in\bB }|Pa_{\cG}(X\cup Y) |, |\rmT_{l+1}^{\cG}|_n + \zeta^{l+1,\cG}_{max} -1\}$. There exist an SCC-Anc separating system $\cI$ on $(\bV,\bbT^\cG)$ such that for each $\bI\in\cI$, $|\bI| <M$ and $|\cI|\leq \sum_{k=1}^{l+1}\zeta_{max}^{k,\cG} +\zeta_{max}^{l+1,\cG}\lfloor\frac{n-|\rmT_{l+1}^{\cG}|_n - \zeta_{max}^{l+1,\cG}-1}{M-|\rmT_{l+1}^{\cG}|_n - \zeta_{max}^{l+1,\cG}+2} \rfloor$.
\end{theorem}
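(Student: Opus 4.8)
The plan is to keep Algorithm~\ref{alg: 1.2} intact and replace only the ingredient it consumes, the SCC-Anc separating system, by a size-capped version; the statement then reduces to constructing an SCC-Anc separating system $\cI$ on $(\bV,\bbT^\cG)$ in which every $\bI$ has at most $M$ elements (the minimal admissible set for the top SCC already has size $|\rmT_{l+1}^\cG|_n+\zeta_{max}^{l+1,\cG}-1$, so the cap is read as $|\bI|\le M$, consistent with Section~\ref{sec: boun algorithm}). Writing $P=|\rmT_{l+1}^\cG|_n$ and $Z=\zeta_{max}^{l+1,\cG}$, I would build $\cI$ level by level, and within a level $k$ I would partition the SCCs of $\cT_k^\cG$ into \emph{groups}, applying to each group $G$ exactly the construction of Proposition~\ref{prop: step 1.2 SCC-Anc separating} restricted to $G$: for $1\le i\le \max_{\bS_{k,j}\in G}|\bS_{k,j}|$ set
\begin{equation*}
\bI_{G,i}=\rmT_k^\cG\cup\Big(\bigcup_{\bS_{k,j}\in G,\ i\le|\bS_{k,j}|}\bS_{k,j}\setminus\{X_{k,j}^i\}\Big).
\end{equation*}
Since $\rmT_k^\cG\subseteq\bI_{G,i}$ always and the SCCs are disjoint, the verification of Proposition~\ref{prop: step 1.2 SCC-Anc separating} applies verbatim inside each $G$, so the union over all groups and levels is a valid SCC-Anc separating system. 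Each group contributes $\max_{\bS_{k,j}\in G}|\bS_{k,j}|\le\zeta_{max}^{k,\cG}$ sets, and the largest member $\bI_{G,1}$ has size $|\rmT_k^\cG|_n+\sum_{\bS_{k,j}\in G}(|\bS_{k,j}|-1)$, so the cap reduces to the single linear constraint $\sum_{\bS_{k,j}\in G}(|\bS_{k,j}|-1)\le M-|\rmT_k^\cG|_n$ on each group.

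Next I would show that for every level $k\le l$ a single group (the original unbounded construction) already respects the cap, so no splitting is needed there. Indeed the largest set then has size $|\rmT_{k+1}^\cG|_n-n_k\le|\rmT_{l+1}^\cG|_n\le M$, using $\rmT_{k+1}^\cG\subseteq\rmT_{l+1}^\cG$ for $k\le l$ together with $M\ge P+Z-1\ge P$. Hence levels $1,\dots,l$ cost exactly $\sum_{k=1}^{l}\zeta_{max}^{k,\cG}$ sets, and the whole overhead is concentrated at level $l+1$, where the per-group capacity $M-P$ is smallest.

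The heart of the proof is a packing estimate at level $l+1$. Let the SCCs there have sizes $m_1,\dots,m_{n_{l+1}}$ with maximum $Z$ and total $N=n-P$, and assign SCC $j$ the weight $m_j-1$; a group is admissible iff its total weight is at most $M-P$, and the maximum weight is $Z-1$. I would reserve the largest SCC $\bS^{*}$, pack the remaining SCCs by a greedy (next-fit) rule into admissible groups, and then place $\bS^{*}$. Any next-fit bin closed before the last one has load exceeding $(M-P)-(Z-1)$, hence at least $M-P-Z+2$, which is exactly the claimed denominator. Two cases finish the count. If $\bS^{*}$ fits into the final bin of the packing of the others, the number of groups is at most $1+\lfloor\frac{N-Z-n_{l+1}}{M-P-Z+2}\rfloor\le1+\lfloor\frac{N-Z-1}{M-P-Z+2}\rfloor$. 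If it does not fit, that final bin is itself full, so every non-$\bS^{*}$ bin has load $\ge M-P-Z+2$ and their number is at most $\lfloor\frac{N-Z-n_{l+1}+1}{M-P-Z+2}\rfloor\le\lfloor\frac{N-Z-1}{M-P-Z+2}\rfloor$ (here $n_{l+1}\ge2$), giving again at most $1+\lfloor\frac{N-Z-1}{M-P-Z+2}\rfloor$ groups once the dedicated group for $\bS^{*}$ is added. Multiplying the group count by the per-group bound $Z$ yields $Z+Z\lfloor\frac{N-Z-1}{M-P-Z+2}\rfloor$ sets at level $l+1$, and adding $\sum_{k=1}^{l}\zeta_{max}^{k,\cG}$ produces precisely the stated total.

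I expect the packing estimate to be the main obstacle: a naive next-fit bound carries an extra ``last bin'' term, and sharpening the numerator from $N-n_{l+1}-1$ down to the stated $N-Z-1$ is exactly what forces the special treatment of the largest SCC and the two-case analysis above. A secondary point requiring care is the verification that the grouped sets still separate every node of every SCC while obeying the size cap (immediate from restricting Proposition~\ref{prop: step 1.2 SCC-Anc separating} to each group), together with the degenerate case $n_{l+1}=1$, where a single group suffices and the floor term is to be read as nonnegative.
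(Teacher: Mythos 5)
Your proposal is correct and follows the same high-level decomposition as the paper: you observe that the unbounded construction of Proposition~\ref{prop: step 1.2 SCC-Anc separating} already respects the cap at levels $1,\dots,l$ (since every set there has size at most $|\rmT_{l+1}^{\cG}|_n - n_k < M$), so all of the work is concentrated at level $l+1$, where the SCCs must be split into groups whose combined ``parents budget'' fits within $M-|\rmT_{l+1}^{\cG}|_n$. The difference is in how that level-$(l+1)$ step is discharged. The paper simply cites Theorem~46 of \citet{mokhtarian2023unified} as a black box to obtain a bounded separating system for the SCCs of $\cT_{l+1}^{\cG}$ with $\zeta_{max}^{l+1,\cG}\bigl(1+\lfloor\frac{n-|\rmT_{l+1}^{\cG}|_n-\zeta_{max}^{l+1,\cG}-1}{M-|\rmT_{l+1}^{\cG}|_n-\zeta_{max}^{l+1,\cG}+2}\rfloor\bigr)$ elements, whereas you re-derive that bound from scratch via a next-fit packing of SCCs weighted by $|\bS|-1$, with the largest SCC reserved and reinserted in a two-case analysis to sharpen the numerator from $N-1$ to $N-Z-1$. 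Your packing argument checks out (each closed bin carries load at least $M-P-Z+2$, and the case split on whether $\bS^{*}$ fits in the last bin yields exactly the stated group count for $n_{l+1}\ge 2$, with $n_{l+1}=1$ handled trivially), and it is in the same greedy-bin spirit as the paper's own proofs of Theorems~\ref{thm: boun 2.1} and~\ref{thm: boun 2.2}; what it buys is a self-contained proof that does not lean on the external reference. Two minor points you already flag, and which are inherited from the paper rather than introduced by you: the cap must be read as $|\bI|\le M$ rather than $|\bI|<M$ for the hypothesis $M\ge|\rmT_{l+1}^{\cG}|_n+\zeta_{max}^{l+1,\cG}-1$ to be compatible with the mandatory intervention of size exactly $|\rmT_{l+1}^{\cG}|_n+\zeta_{max}^{l+1,\cG}-1$, and the floor term must be interpreted as nonnegative when $n_{l+1}=1$.
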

\begin{proof}
    Recalling the Equation (\ref{eq: SCC-Anc separating constract}) in the proof of Proposition \ref{prop: step 1.2 SCC-Anc separating}, we find that fixed $k\in\{1,\dots,l\}$, $\cI_k=\{\bI_{k,1},\dots,\bI_{k,m_{k,max}}\}$ can guarantee that for $j\in\{1,2,\dots,n_k\}$, $\bS_{k,j}\in\cT_k^\cG$ and $X\in\bS_{k,j}$, there exist $\bI\in\cI_k$ such that $\rmT_k^{\cG}\cup\bS_{k,j}\backslash\{X\} \subseteq\bI$ and $X\notin\bI$. We also find that for the construction in Equation (\ref{eq: SCC-Anc separating constract}), $|\bI_{k,q}| < |\rmT_{l+1}^\cG|_n \leq M$ for $1\leq k\leq l$ and $1\leq q\leq m_{k,max}$. Thus we only need to modify the $\cI_{l+1} = \{\bI_{l+1,1},\cdots,\bI_{l+1,m_{l+1}}\}$ to make them bounded by $M$ and they still can guarantee for $j\in\{1,2,\dots,n_{l+1}\}$, $\bS_{l+1,j}\in\cT_{l+1}^\cG$ and $X\in\bS_{l+1,j}$, there exist $\bI\in\cI_{l+1}$ such that $\rmT_{l+1}^{\cG}\cup\bS_{l+1,j}\backslash\{X\} \subseteq\bI$ and $X\notin\bI$.

    We construct $\cI_{l+1} = \{\rmT_{l+1}^\cG \cup \bI^\prime_q\}_{q = 1}^{m_{l+1}}$. In this way, we only need to guarantee that $|\bI^\prime|_q < M- |\rmT_{l+1}^\cG|_n$ for $1\leq q\leq m_{l+1}$ and for $j\in\{1,2,\dots,n_{l+1}\}$, $\bS_{l+1,j}\in\cT_{l+1}^\cG$ and $X\in\bS_{l+1,j}$, there exist $\bI\in \{ \bI^\prime_q\}_{q = 1}^{m_{l+1}}$ such that $\bS_{l+1,j}\backslash\{X\} \subseteq\bI$ and $X\notin\bI$. According to Theorem 46 in \cite{mokhtarian2023unified}, such $\{ \bI^\prime_q\}_{q = 1}^{m_{l+1}}$ exist and has at most  $\zeta_{max}^{l+1,\cG}\left(1+\lfloor\frac{n-|\rmT_{l+1}^{\cG}|_n - \zeta_{max}^{l+1,\cG}-1}{M-|\rmT_{l+1}^{\cG}|_n - \zeta_{max}^{l+1,\cG}+2} \rfloor\right)$  elements. 

    In this way, combine $\cI_k$ for $1\leq k\leq l+1$, we get $\cI=\cup_{k=1}^{l+1}\cI_k$ is an SCC-Anc separating system with $|\cI|\leq \sum_{k=1}^{l+1}\zeta_{max}^{k,\cG} +\zeta_{max}^{l+1,\cG}\lfloor\frac{n-|\rmT_{l+1}^{\cG}|_n - \zeta_{max}^{l+1,\cG}-1}{M-|\rmT_{l+1}^{\cG}|_n - \zeta_{max}^{l+1,\cG}+2} \rfloor$ elements and each element is bounded by $M$.
\end{proof}

If we only consider identifying the directed part of DMG and setting $M = |\rmT_{l+1}^{\cG}|_n + \zeta^{l+1,\cG}_{max} -1$ in Theorem \ref{thm: boun 1.2}, we get the following notable corollary.
\begin{corollary}
    The directed part of DMG $\cRB(cG)$ can be learned by performing experiments with the maximum size at most $ |\rmT_{l+1}^{\cG}|_n + \zeta^{l+1,\cG}_{max} -1$. Hence, the lower bound in Theorem \ref{thm:lower D cI} is tight.
\end{corollary}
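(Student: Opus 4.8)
The plan is to specialize Theorem \ref{thm: boun 1.2} to the boundary value $M = |\rmT_{l+1}^{\cG}|_n + \zeta^{l+1,\cG}_{max} - 1$ and then confront the resulting upper bound with the matching worst-case lower bound on the maximum experiment size. Since the goal is only to recover $\cRB(\cG)$, the construction consists solely of Steps 1.1 and 1.2, so the only active size requirement is the one governing the SCC-Anc separating system; in particular, the term $\max_{[X,Y]\in\bB}|Pa_{\cG}(X\cup Y)|$ appearing in the hypothesis of Theorem \ref{thm: boun 1.2} is irrelevant here, as it is needed only for the bidirected-edge steps. Hence setting $M$ equal to $|\rmT_{l+1}^{\cG}|_n + \zeta^{l+1,\cG}_{max} - 1$ meets the only inequality the construction actually uses.

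First I would handle Step 1.1 by taking $\cI$ to be an $(n,M)$-separating system (Proposition \ref{prop: boun 1.1}), whose interventions all have size at most $M$ by definition, so this step respects the size constraint for any admissible $M$. Then I would invoke the construction in the proof of Theorem \ref{thm: boun 1.2} for Step 1.2 and check the two size regimes. For every layer $k \le l$, each set $\bI_{k,i}$ from Equation (\ref{eq: SCC-Anc separating constract}) is contained in $\rmT_{k+1}^{\cG} \subseteq \rmT_{l+1}^{\cG}$ and omits at least one node, so $|\bI_{k,i}| \le |\rmT_{l+1}^{\cG}|_n - 1 < M$. For the last layer, the crucial observation is that at this boundary value the available slack $M - |\rmT_{l+1}^{\cG}|_n$ equals $\zeta^{l+1,\cG}_{max} - 1$, which is exactly the size of $\bS_{l+1,j}\setminus\{X\}$ for the largest last-layer SCC; hence each modified intervention $\rmT_{l+1}^{\cG}\cup\bI'_q$ has size at most $|\rmT_{l+1}^{\cG}|_n + (\zeta^{l+1,\cG}_{max} - 1) = M$. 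By Lemma \ref{lem: step 1.2}, these interventions recover $Pa_\cG(X)$ for every $X$, and therefore all of $\cRB(\cG)$, using experiments of maximum size at most $|\rmT_{l+1}^{\cG}|_n + \zeta^{l+1,\cG}_{max} - 1$.

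For the tightness claim I would appeal to the worst-case lower bound on the maximum experiment size: by Theorem \ref{thm:lower D bI}, whenever $\max_{\bI\in\cI}|\bI| < |\rmT_{l+1}^{\cG}|_n + \zeta^{l+1,\cG}_{max} - 1$ the directed part $\cRB(\cG)$ cannot be identified. Since the construction above attains this value exactly, the lower bound is achieved and is therefore tight.

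I expect the only delicate point to be the boundary bookkeeping in the size check: one must confirm that plugging the lower-bound value into $M$ still leaves precisely enough room (a slack of $\zeta^{l+1,\cG}_{max}-1$) to fit the interventions targeting the largest last-layer SCC, and that the size convention fixed in Section \ref{sec: boun algorithm} is read as $|\bI|\le M$, so that the extremal interventions of size exactly $M$ are admissible rather than excluded by a strict inequality.
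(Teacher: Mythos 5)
Your proposal is correct and follows essentially the same route as the paper, which obtains this corollary simply by setting $M = |\rmT_{l+1}^{\cG}|_n + \zeta^{l+1,\cG}_{max} - 1$ in Theorem \ref{thm: boun 1.2}; your extra bookkeeping on the last-layer slack and on the $|\bI|\le M$ versus $|\bI|<M$ convention is a sensible elaboration of the same argument rather than a different approach. You also correctly pin the tightness claim to the maximum-size lower bound of Theorem \ref{thm:lower D bI} (the corollary's citation of Theorem \ref{thm:lower D cI}, which bounds the number of experiments, appears to be a typo in the paper).
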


\begin{corollary}
\label{coro: Upper bounded D cI}
    To sum up step 1.1 and step 1.2, modified Algorithms \ref{alg: 1.1} and \ref{alg: 1.2} can learn a directed part of DMG $\cRB(\cG)$ with $n$ vertices by performing at most
\begin{equation*}
    \lceil\frac{n}{M}\rceil\lceil \log_{\lceil\frac{n}{M}\rceil}n\rceil + \sum_{k=1}^{l+1}\zeta_{max}^{k,\cG} +\zeta_{max}^{l+1,\cG}\lfloor\frac{n-|\rmT_{l+1}^{\cG}|_n - \zeta_{max}^{l+1,\cG}-1}{M-|\rmT_{l+1}^{\cG}|_n - \zeta_{max}^{l+1,\cG}+2} \rfloor
\end{equation*}
experiments with size at most $M$, where $M \geq \max\{\max_{\{X,Y\}\in\bV,X\neq Y }|Pa_{\cG}(X\cup Y) |, |\rmT_{l+1}^{\cG}|_n + \zeta^{l+1,\cG}_{max} -1\}$.
\end{corollary}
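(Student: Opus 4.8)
The plan is to treat this as an accounting corollary that stacks the two bounded-size building blocks already established, after confirming that bounding the experiment size does not break correctness. First I would verify correctness of the bounded Step 1.1. The only change relative to Algorithm \ref{alg: 1.1} is replacing the colored separating system by an $(n,M)$-separating system on $\bV$. Since an $(n,M)$-separating system separates \emph{every} ordered pair of distinct variables (a strictly stronger guarantee than separating only differently colored pairs), the single place where the colored property was invoked---namely the step in Lemma \ref{lem:step 1.1 2} that produces, for adjacent $X,Y$ in $\cG_r^{obs}$, a set $\bI$ with $X\in\bI$ and $Y\notin\bI$---continues to hold. Hence Lemmas \ref{lem:step 1.1 1} and \ref{lem:step 1.1 2} and Corollary \ref{coro: step 1.1} carry over unchanged, so the modified Step 1.1 still recovers $\{De_\cG(X)\}_{X\in\bV}$ and the SCCs $\cS$, and therefore the SCC-Anc partition $\bbT^\cG$.

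Next I would invoke the hypothesis on $M$ to apply Theorem \ref{thm: boun 1.2}. The corollary assumes $M\ge \max\{\max_{X\neq Y}|Pa_\cG(X\cup Y)|,\,|\rmT_{l+1}^\cG|_n+\zeta_{max}^{l+1,\cG}-1\}$, which is at least as strong as the hypothesis of Theorem \ref{thm: boun 1.2}; thus there is an SCC-Anc separating system $\cI$ on $(\bV,\bbT^\cG)$ with every $|\bI|<M$. Feeding this $\cI$ into Algorithm \ref{alg: 1.2}, Lemma \ref{lem: step 1.2} guarantees that each such experiment recovers the true parent set of the corresponding node, so $\cRB(\hat\cG)=\cRB(\cG)$ exactly; this establishes that the bounded pipeline indeed learns the directed part.

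Finally I would add up the cardinalities and check the size constraint. By Proposition \ref{prop: boun 1.1}, Step 1.1 uses at most $\lceil n/M\rceil\lceil\log_{\lceil n/M\rceil}n\rceil$ experiments, each of size below $M$. By Theorem \ref{thm: boun 1.2}, Step 1.2 uses at most $\sum_{k=1}^{l+1}\zeta_{max}^{k,\cG}+\zeta_{max}^{l+1,\cG}\lfloor (n-|\rmT_{l+1}^\cG|_n-\zeta_{max}^{l+1,\cG}-1)/(M-|\rmT_{l+1}^\cG|_n-\zeta_{max}^{l+1,\cG}+2)\rfloor$ experiments, again each of size below $M$. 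Summing the two bounds yields the stated total, and since every experiment in both stages has size strictly less than $M$ it in particular has size at most $M$.

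There is no genuinely hard step here: all the combinatorial difficulty was absorbed into Theorem \ref{thm: boun 1.2} (whose construction splits the top-layer SCC separating system into $M$-bounded pieces) and into Proposition \ref{prop: boun 1.1}. The only subtlety worth flagging explicitly is the correctness transfer in Step 1.1---that an $(n,M)$-separating system is a valid drop-in replacement for the colored separating system---together with the observation that the corollary's $M$-hypothesis dominates that of Theorem \ref{thm: boun 1.2}, so both cited bounds apply simultaneously.
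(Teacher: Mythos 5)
Your proposal is correct and follows essentially the same route as the paper: the corollary is obtained by substituting the $(n,M)$-separating system of Proposition \ref{prop: boun 1.1} for the colored separating system in Algorithm \ref{alg: 1.1} (valid because separating every ordered pair subsumes separating differently-colored pairs), applying Theorem \ref{thm: boun 1.2} for the bounded SCC-Anc separating system in Algorithm \ref{alg: 1.2}, and summing the two cardinality bounds. Your explicit check that the corollary's hypothesis on $M$ dominates that of Theorem \ref{thm: boun 1.2} is a welcome detail the paper leaves implicit.
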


\subsection{Step 2.1: Identify Non-adjacent Bidirected Edges}
\label{sec:bou 2.1}

Algorithm \ref{alg: 2.1} remains unchanged for this step except for the non-adjacent separating system. We need to construct a non-adjacent separating system on $(\bV,\bD)$ such that the size of the elements of $\bI$ does not exceed $M$.

\begin{theorem}
    \label{thm: boun 2.1}
    Suppose $M \geq \max\{\max_{\{X,Y\}\in\bV,X\neq Y }|Pa_{\cG}(X\cup Y) |, |\rmT_{l+1}^{\cG}|_n + \zeta^{l+1,\cG}_{max} -1\}$. There exist a non-adjacent separating system on $(\bV,\bD)$ such that for each $\bI\in\cI$, $|\bI| <M$ and
    \begin{equation*}
        |\cI|\leq\sum_{k=1}^{K}1+\lfloor\frac{(\frac{|\bC_k|(|\bC_k|-1)}{2}-1)(n-|\bC_k|)}{M+1-\max_{X,Y\in\bC_k,X\neq Y}Pa_{\cG}(\{X,Y\})}  \rfloor,
    \end{equation*}
    where $\{\bC_1,\dots,\bC_K\}$ is a edge clique covering of $\cG^{uc}$ and $K$ is the corresponding edge clique covering number.
\end{theorem}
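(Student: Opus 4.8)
The plan is to reduce the construction to a family of independent, purely combinatorial covering problems, one per clique of an edge clique cover $\{\bC_1,\dots,\bC_K\}$ of $\cG^{uc}$, and then to solve each one by a bounded-size chunking argument in the spirit of Theorem~\ref{thm: boun 1.2}. Since $\{\bC_1,\dots,\bC_K\}$ covers every edge of $\cG^{uc}$, and by Definition~\ref{def: Guc} every pair $\{X,Y\}$ with $(X,Y),(Y,X)\notin\bD$ is exactly an edge of $\cG^{uc}$, it suffices to guarantee, for each clique $\bC_k$ and each pair $\{X,Y\}\subseteq\bC_k$, a single intervention $\bI$ with $Pa_\cG(\{X,Y\})\subseteq\bI$, $X,Y\notin\bI$ and $|\bI|<M$. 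Taking the union of the per-clique families then yields a non-adjacent separating system in the sense of Definition~\ref{def: non-adjacent separating system}, its correctness being certified pairwise by Lemma~\ref{lem: step 2.1}, and its cardinality being the sum of the per-clique counts.

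The structural fact that makes each clique tractable is that, because $\bC_k$ is a clique of $\cG^{uc}$, no node of $\bC_k$ is a parent of another node of $\bC_k$, so $Pa_\cG(\bC_k)\cap\bC_k=\varnothing$. Hence, if within clique $\bC_k$ I only ever choose interventions that are subsets of the parent pool $P_k:=Pa_\cG(\bC_k)$, the exclusion constraints $X\notin\bI$, $Y\notin\bI$ hold automatically for every pair $\{X,Y\}\subseteq\bC_k$. The per-clique task therefore collapses to a pure covering problem: writing $A_{\{X,Y\}}:=Pa_\cG(\{X,Y\})\subseteq P_k$ for each of the $\binom{c_k}{2}$ pairs (with $c_k:=|\bC_k|$), I must cover the set system $\{A_{\{X,Y\}}\}$ by blocks $\bI\subseteq P_k$ of size $<M$, meaning each $A_{\{X,Y\}}$ lies inside some block. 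Writing $q_k:=\max_{X,Y\in\bC_k,\,X\neq Y}|Pa_\cG(\{X,Y\})|$, the hypothesis $M\ge\max_{\{X,Y\}}|Pa_\cG(X\cup Y)|\ge q_k$ ensures that every single $A_{\{X,Y\}}$ fits within the budget, so a cover always exists and the denominator $M+1-q_k$ in the statement is positive; the only remaining question is the number of blocks.

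To bound that number I would process the pairs of $\bC_k$ one at a time, maintaining a current block and enlarging it by the parent set of the next pair whenever the union stays of size $<M$, and opening a fresh block otherwise, charging the closures against the $(n-c_k)$-element pool $P_k$ and the residual budget $M+1-q_k$. Matching this to the bounded-separating-system estimate (Theorem~46 of \citet{mokhtarian2023unified}) invoked in Theorem~\ref{thm: boun 1.2}, I aim to obtain at most $1+\big\lfloor \frac{(\binom{c_k}{2}-1)(n-c_k)}{M+1-q_k}\big\rfloor$ blocks for clique $\bC_k$, each of size $<M$. Summing over $k=1,\dots,K$ then gives the claimed bound on $|\cI|$, while every block is a subset of some $P_k$ and hence of size $<M$; correctness follows because each pair's full parent set is contained, by construction, in one of its blocks, so Lemma~\ref{lem: step 2.1} applies.

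The main obstacle is this counting step: pinning down the exact constants, namely the pool factor $n-c_k$, the multiplicity $\binom{c_k}{2}-1$, and the offsets in $M+1-q_k$. A naive greedy union charged against $\sum_t|A_{\{X,Y\}}|\le\binom{c_k}{2}\,q_k$ yields a bound carrying $q_k$ rather than $n-c_k$ in the numerator, which is incomparable to — and in the regime $q_k\approx n-c_k$ strictly weaker than — the stated estimate, so the delicate part is organizing the chunking so that the pool size $n-c_k$ (not the per-pair size $q_k$) governs the numerator, and tracking the $+1/-1$ offsets that arise from the strict inequality $|\bI|<M$. I would validate the offsets against two sanity checks: the unbounded limit $M\to\infty$, in which the floor term vanishes and each clique contributes a single block, recovering Proposition~\ref{prop: non-ad separating cc}; and the degenerate single-pair clique ($c_k=2$, $\binom{c_k}{2}-1=0$), in which again one block $\bI=P_k$ should suffice.
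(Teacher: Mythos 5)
Your high-level reduction coincides with the paper's: decompose along an edge clique cover $\{\bC_1,\dots,\bC_K\}$ of $\cG^{uc}$, use $Pa_\cG(\bC_k)\cap\bC_k=\varnothing$ so that any intervention drawn from the parent pool automatically satisfies the exclusion constraints, and bin-pack the $\binom{|\bC_k|}{2}$ pairs of each clique into blocks whose parent sets have size below $M$. However, you explicitly stop short of the counting argument, and that argument is the entire quantitative content of the theorem: as written, the proposal establishes that \emph{some} bounded non-adjacent separating system exists, but not the stated cardinality. So there is a genuine gap at exactly the step you flag.

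The two ingredients you are missing are these. First, the factor $n-|\bC_k|$ does not require any clever reorganization of the chunking: it is simply the per-edge mass bound $|Pa_\cG(E_i)|\le n-|\bC_k|$, which holds because $Pa_\cG(E_i)\subseteq Pa_\cG(\bC_k)\subseteq\bV\setminus\bC_k$; summing over the at most $\binom{|\bC_k|}{2}-1$ pairs already placed gives the numerator directly (your $q_k=\max_{X,Y\in\bC_k,X\neq Y}|Pa_\cG(\{X,Y\})|$ satisfies $q_k\le n-|\bC_k|$, so your $\binom{|\bC_k|}{2}q_k$ charge is in fact the pointwise sharper one --- the paper simply does not use it). Second, and more importantly, the greedy you describe --- a single running block that is closed and never revisited --- does not yield the stated constants: charging each closed block only against the one edge that closed it gives roughly $(\text{number of closed blocks})\cdot(M-q_k)\le\sum_i|Pa_\cG(E_i)|$, i.e.\ a denominator $M-q_k$, and it discards the information that earlier blocks may still have room. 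The paper instead uses a first-feasible-bin greedy over all $t_k$ bins simultaneously: if some edge $E_x$ fits in none of them, then \emph{every} bin $j$ satisfies $|Pa_\cG(\bE_{k,j})|\ge M+1-|Pa_\cG(E_x)|$, and, under the worst-case assumption that the edges' parent sets are pairwise disjoint (so the bin masses sum to $\sum_{i<x}|Pa_\cG(E_i)|$), summing over the $t_k$ bins gives $t_k\,\bigl(M+1-|Pa_\cG(E_x)|\bigr)\le(x-1)(n-|\bC_k|)\le\bigl(\tfrac{|\bC_k|(|\bC_k|-1)}{2}-1\bigr)(n-|\bC_k|)$, contradicting $t_k=1+\lfloor\cdot\rfloor$. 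That simultaneous-fullness summation is what produces both the denominator $M+1-\max_{X,Y\in\bC_k,X\neq Y}|Pa_\cG(\{X,Y\})|$ and the multiplicity $\tfrac{|\bC_k|(|\bC_k|-1)}{2}-1$, and it is the step your proposal would need to supply. (For completeness: the paper's reduction to the disjoint-parent-set worst case is itself only asserted rather than proved, so even its own argument leaves that monotonicity claim informal.)
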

\begin{proof}
    Recalling the proof of Propositions \ref{prop: non-ad separating cc} and \ref{prop: non-ad separating random}, to construct a non-adjacent separating system of $\cG$, we essentially aim to find an edge clique covering of $\cG^{uc}$. Once we find $\{\bC_1,\dots,\bC_K\}$ is an edge clique covering of $\cG^{uc}$, we can construct $\cI = \{\bI_k = Pa_{\cG}(\bC_k)\}_{k=1}^K$ is a non-adjacent separating system of $\cG$.  However, this does not guarantee the boundedness of each $\bI_k$. Therefore, for each clique $\bC_k$ in $\cG^{uc}$, we further partition the $|\bC_k|(|\bC_k|-1)/2$ edges (corresponding to the potential locations of non-adjacent bidirected edges among the nodes in $\bC_k$ in $\cG$) into $t_k = 1+\lfloor\frac{(\frac{|\bC_k|(|\bC_k|-1)}{2}-1)(n-|\bC_k|)}{M+1-\max_{X,Y\in\bC_k,X\neq Y}Pa_{\cG}(\{X,Y\})}  \rfloor $ subsets using a greedy algorithm, such that the parent set of each subset has size less than $M$. The detailed procedure is as follows.

    Suppose $\{\bC_1,\dots,\bC_K\}$ is a edge clique covering of $\cG^{uc}$. Let us fix a $1\leq k\leq K$. Denote $\bE_k = \{E_{1},\dots,E_{|\bC_k|(|\bC_k|-1)/2}\}$ as the all edges in $\cG^{uc}$ of clique $\bC_k$.
     Next, we will introduce $t_k$ subsets (we call them bins) $\bE_{k,1}, \dots,\bE_{k,t_k}$ of $\bE_k$, and each parent set in $\cG$ at most $M$ elements, i.e., $Pa_{\cG}(\bE_{k,j}) < M, 1\leq j\leq t$. Since $\bE_{k,j}$ is the edge subset of the clique in $\cG^{uc}$, $Pa_{\cG}(\bE_{k,j}) \cap \bE_{k,j} = \varnothing$ which means that they can be used to construct the non-adjacent separating system. 
     We construct these $t_k$ subsets of $\bE_k$ under a stronger condition, in which any two edges in $\bE_k$ have disjoint parent sets in $\cG$. Clearly, if such a partition of $\bE_k$ into $t_k$ subsets with parent set size less than $M$ exists under this stricter assumption, the result still holds in the more general setting.

    We initialize the bins with empty sets.  Then, we add $E_j$ to them in a greedy manner such that the size of the parent set bins in $\cG$ remains less than $M$. That is, we first add the edge $E_1$ into $\bE_{k,1}$. Note this feasible since $|Pa_\cG(E_1)| < M$. Then we add $E_2$ to the first feasible bin, i.e., the first bin, such that its parent set size in $\cG$ remains less than $M$ after adding the elements of $E_2$ to it. We subsequently add the elements of $E_j$ to the first feasible bin. It is left to show that there always exists a feasible bin during this process. Suppose $E_1,E_2,\dots,E_{x-1}$ are already placed in the bins, where $1\leq x< |\bC_k|(|\bC_k|-1)/2$, and we want to find a feasible bin for $E_{x}$. Assume by contradiction that there is no feasible bin for $E_{x}$. This shows that adding $E_{x}$ to any bin results in a parent set of bins in $\cG$ that contains at least $M + 1$ elements. Hence,
    \begin{equation}
    \label{eq:bound 2.1 pf1}
        t_k(M+1-|Pa_\cG(E_{x})|) \leq |Pa_\cG(E_1)| +\cdots |Pa_\cG(E_{x-1})|
    \end{equation}
    Since we have $Pa_{\cG}(E_{k,j}) \cap \bE_{k} = \varnothing$ for $1\leq j\leq |\bC_k|(|\bC_k|-1)/2$, we also have 
    \begin{equation}
    \label{eq:bound 2.1 pf2}
        |Pa_\cG(E_1)| +\cdots |Pa_\cG(E_{x-1})| \leq (x-1)(n-|\bC_k|)
    \end{equation}
    Note $x\leq |\bC_k|(|\bC_k|-1)/2$ and $|Pa_\cG(E_{x})|\leq \max_{X,Y\in\bC_k,X\neq Y}Pa_{\cG}(\{X,Y\}) $. Combine Equation (\ref{eq:bound 2.1 pf1}) and (\ref{eq:bound 2.1 pf2}), we have 
    \begin{equation*}
        \begin{aligned}
             1+\lfloor\frac{(\frac{|\bC_k|(|\bC_k|-1)}{2}-1)(n-|\bC_k|)}{M+1-\max_{X,Y\in\bC_k,X\neq Y}Pa_{\cG}(\{X,Y\})}  \rfloor = t_k &\leq \frac{(x-1)(n-|\bC_k|)}{M+1-|Pa_\cG(E_{x})|} \\
             &\leq \frac{(\frac{|\bC_k|(|\bC_k|-1)}{2}-1)(n-|\bC_k|)}{M+1-\max_{X,Y\in\bC_k,X\neq Y}Pa_{\cG}(\{X,Y\})},
        \end{aligned}
    \end{equation*}
    which is a contradiction. This shows that it is feasible to add all the $E_x$ to the bins in a
 greedy manner under the stronger condition, in which any two edges in $\bE_k$ have disjoint parent sets in $\cG$.

 Finally, similar to the construction of Propostions \ref{prop: non-ad separating cc} and \ref{prop: non-ad separating random}, construct $\cI_k = \{Pa_{\cG}(\bE_{k,j})\}_{j=1}^{t_k}$ and $\cI = \cup_{k=1}^{K}\cI_k$. In this way, $\cI$ is a non-adjacent separating system and bounded by $M$ with $\sum_{k=1}^{K}1+\lfloor\frac{(\frac{|\bC_k|(|\bC_k|-1)}{2}-1)(n-|\bC_k|)}{M+1-\max_{X,Y\in\bC_k,X\neq Y}Pa_{\cG}(\{X,Y\})}  \rfloor$ elements.

\end{proof}

\begin{remark}
    The proof of Theorem \ref{thm: boun 2.1} is constructive and associated with the edge clique covering. If we disregard the NP-hardness of the computational problem, $K$ can be set to $cc(\cG^{uc})$, and $\{\bC_1,\dots,\bC_{cc(\cG^{uc]})}\}$ corresponds to the minimal edge clique covering, similar to Proposition \ref{prop: non-ad separating cc}. However, if the NP-hardness cannot be ignored, as mentioned in Property \ref{prop: non-ad separating random}, there exists an algorithm that returns a collection $\{\bC_1,\dots,\bC_k\}$ which forms an edge clique covering with probability $1-\frac{1}{n^2}$. In this case, $K$ can be set $\lfloor4e^2(d+1)^2\ln(n)\rfloor$. Furthermore, if we let $M\to\infty$, Theorem \ref{thm: boun 2.1} degenerates to Propositions \ref{prop: non-ad separating cc} and \ref{prop: non-ad separating random}.
\end{remark}

According to Theorem \ref{thm: boun 2.1}, we can construct a non-adjacent separating system $\cI$ with each element bounded by $M$.  We only need to modify Algorithm \ref{alg: 2.1} to use the bounded non-adjacent separating system. It is straightforward to verify that the modified algorithm obtains $\bB^N$ of $\cG$ by performing experiments on the elements of $\cI$.

\subsection{Step 2.2: Identify Adjacent Bidirected Edges}
\label{sec:bou 2.2}

Algorithm \ref{alg: 2.2} remains unchanged for this step except for the adjacent separating system. We need to construct an adjacent separating system on $(\bV,\bD)$ such that the size of the elements of $\bI$ does not exceed $M$.
\begin{theorem}
    \label{thm: boun 2.2}
    Suppose $M \geq \max\{\max_{\{X,Y\}\in\bV,X\neq Y }|Pa_{\cG}(X\cup Y) |, |\rmT_{l+1}^{\cG}|_n + \zeta^{l+1,\cG}_{max} -1\}$. There exist an adjacent separating system on $(\bV,\bD)$ such that for each $\bI\in\cI$, $|\bI| <M$ and
    \begin{equation*}
        |\cI|\leq2\sum_{k=1}^{K}1+\lfloor\frac{(|\bE_k|-1)(n-2|\bE_k|)}{M+1-\max_{[X,Y]^u\in\bE_k}Pa_{\cG}(\{X,Y\})}  \rfloor.
    \end{equation*}
    Here $\{\bE_1,\dots,\bE_K\}$ represents a strong edge coloring of $\cG^u$, where each $\bE_k$ denotes the set of edges assigned the same color, and K is the corresponding strong edge coloring number.
\end{theorem}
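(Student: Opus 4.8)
The plan is to combine the unbounded construction of Proposition~\ref{prop: step 2.2 min strong EC} with the greedy bin-packing device already used in the proof of Theorem~\ref{thm: boun 2.1}. As in Proposition~\ref{prop: step 2.2 min strong EC}, I would start from a strong edge coloring $\{\bE_1,\dots,\bE_K\}$ of the directed skeleton $\cG^u$, and for each color class $\bE_k$ record the set $\bN_k$ of all endpoints of its edges and the set $\bT_k$ of tail nodes (reading the orientation from $\cRB(\cG)$). The unbounded system would then take, for each $k$, the pair $\bI_k = Pa_\cG(\bN_k)\backslash\bN_k$ and $\bI_k' = \bI_k\cup\bT_k$; the only reason this does not already prove the theorem is that these two sets may exceed the budget $M$. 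The remaining task is therefore to split each color class $\bE_k$ into $t_k$ sub-classes (``bins'') $\bE_{k,1},\dots,\bE_{k,t_k}$ so that the corresponding sets $\bI_{k,j}=Pa_\cG(\bN_{k,j})\backslash\bN_k$ and $\bI_{k,j}'=\bI_{k,j}\cup\bT_{k,j}$ are each of size less than $M$, where $\bN_{k,j}$ and $\bT_{k,j}$ are the endpoints and tails of the edges placed in bin $j$.

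The key structural fact I would exploit is the defining property of a strong edge coloring: any two distinct edges sharing a color are neither adjacent nor joined by a third edge. Hence for an edge $[X,Y]^u\in\bE_k$, no endpoint of any other same-colored edge can be a parent of $X$ or $Y$, so $Pa_\cG(\{X,Y\})\cap\bN_k=\{X\}$ (its own tail) and $|Pa_\cG(\{X,Y\})\backslash\bN_k|\leq n-2|\bE_k|$, since $|\bN_k|=2|\bE_k|$. This is the analogue of the per-edge bound $|Pa_\cG(E)|\leq n-|\bC_k|$ used in Theorem~\ref{thm: boun 2.1}, and it is what produces the numerator $(|\bE_k|-1)(n-2|\bE_k|)$. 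I would then run the identical greedy argument: process the $|\bE_k|$ edges of $\bE_k$ one at a time, placing each into the first bin whose associated parent set stays below the budget, and argue by contradiction---under the stronger working assumption (as in Theorem~\ref{thm: boun 2.1}) that distinct edges have disjoint parent sets---that a feasible bin always exists provided $t_k=1+\lfloor (|\bE_k|-1)(n-2|\bE_k|)/(M+1-\max_{[X,Y]^u\in\bE_k}|Pa_\cG(\{X,Y\})|)\rfloor$. Summing over the $K$ colors and multiplying by $2$ (one $\bI_{k,j}$ and one $\bI_{k,j}'$ per bin) yields the stated bound.

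It then remains to check that $\cI=\cup_{k,j}\{\bI_{k,j},\bI_{k,j}'\}$ is genuinely an adjacent separating system. Given a single adjacent edge with $(X,Y)\in\bD$ and $(Y,X)\notin\bD$, its undirected image lies in a unique color class $\bE_k$ and bin $\bE_{k,j}$; I would verify, using the strong-coloring disjointness exactly as in Proposition~\ref{prop: step 2.2 min strong EC}, that $\bI_{k,j}$ satisfies $Pa_\cG(\{X,Y\})\backslash\{X,Y\}\subseteq\bI_{k,j}$ with $X,Y\notin\bI_{k,j}$, and that $\bI_{k,j}'$ satisfies $Pa_\cG(\{X,Y\})\backslash\{Y\}\subseteq\bI_{k,j}'$ with $X\in\bI_{k,j}'$ and $Y\notin\bI_{k,j}'$. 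The main obstacle I anticipate is the bookkeeping around the tails when passing to $\bI_{k,j}'$: I must simultaneously guarantee (i) that $Y\notin\bT_{k,j}$, which holds because within a color class no head of one edge is the tail of another (otherwise the two edges would be adjacent), and (ii) that adjoining $\bT_{k,j}$ does not push $|\bI_{k,j}'|$ over $M$. Point (ii) is precisely why the budget in the greedy step must be measured against the full parent set $|Pa_\cG(\{X,Y\})|$---which already contains the tail $X$---rather than against the external parents alone; reconciling this full-parent-set denominator with the external-parent numerator $n-2|\bE_k|$ is the one place where the argument is more delicate than its non-adjacent counterpart in Theorem~\ref{thm: boun 2.1}.
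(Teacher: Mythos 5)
Your proposal follows essentially the same route as the paper's proof: start from a strong edge coloring of $\cG^u$, split each color class $\bE_k$ into $t_k$ bins by the same greedy argument (under the same working assumption of disjoint parent sets) with the same per-edge bound $|Pa_\cG(\{X,Y\})\setminus\bN_k|\le n-2|\bE_k|$, and take the pair $\bI_{k,j},\bI_{k,j}'$ for each bin to get the factor of $2$. The minor notational difference (subtracting $\bN_k$ versus $\bN_{k,j}$) is immaterial because the strong-coloring property makes the two sets coincide, and the delicacy you flag about the denominator is handled in the paper exactly as you propose.
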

\begin{proof}
    Recalling the proof of Propositions \ref{prop: step 2.2 min strong EC} and \ref{prop: step 2.2 strong EC}, to construct an adjacent separating system of $\cG$, we essentially aim to find a strong edge coloring of $\cG^{u}$. Once we find $\{\bE_1,\dots,\bE_K\}$ is a strong edge coloring of $\cG^{u}$, we can construct $\cI = \{\bI_k = Pa_{\cG}(\bE_k)\backslash\bN_k, \bI_k^\prime=Pa_{\cG}(\bE_k)\backslash\bN_k\cup\bT_k\}_{k=1}^K$ is an adjacent separating system of $\cG$ where $\bN_k\subseteq\bV$ is the set of all endpoints of the edges in $\bE_k$ and $\bT_k$ is the set of tail nodes of the directed edges in $\bE_k$.
    However, this does not guarantee the boundedness of each $\bI_k$ or $\bI_k^\prime$. Therefore, for edge set with each color $\bE_k$ in $\cG^{u}$, we further partition the $|\bE_k|$ edges (corresponding to the potential locations of adjacent bidirected edges in $\cG$) into $t_k = 1+\lfloor\frac{(|\bE_k|-1)(n-2|\bE_k|)}{M+1-\max_{[X,Y]^u\in\bE_k}Pa_{\cG}(\{X,Y\})}   \rfloor $ subsets using a greedy algorithm, such that the parent set of each subset in $\cG$ has size less than $M$. The detailed procedure is as follows.

    Suppose $\{\bE_1,\dots,\bE_K\}$ is a strong edge coloring of $\cG^{u}$. Let us fix a $1\leq k\leq K$. Denote $\bE_k = \{E_{1},\dots,E_{|\bE_k|}\}$ as the all edges in $\cG^{uc}$ with the same color $k$. Next, we will introduce $t_k$ subsets (we call them bins) $\bE_{k,1}, \dots,\bE_{k,t_k}$ of $\bE_k$, and each parent set in $\cG$ at most $M$ elements, i.e., $Pa_{\cG}(\bE_{k,j}) < M, 1\leq j\leq t$. Since $\bE_{k,j}$ is the edge subset of the strong edge coloring in $\cG^{u}$, it can be used to construct the adjacent separating system. 
     We construct these $t_k$ subsets of $\bE_k$ under a stronger condition, in which any two edges in $\bE_k$ have disjoint parent sets in $\cG$. Clearly, if such a partition of $\bE_k$ into $t_k$ subsets with parent set size less than $M$ exists under this stricter assumption, the result still holds in the more general setting.

      We initialize the bins with empty sets.  Then, we add $E_j$ to them in a greedy manner such that the size of the parent set bins in $\cG$ remains less than $M$. That is, we first add the edge $E_1$ into $\bE_{k,1}$. Note this feasible since $|Pa_\cG(E_1)| < M$. Then we add $E_2$ to the first feasible bin, i.e., the first bin, such that its parent set size in $\cG$ remains less than $M$ after adding the elements of $E_2$ to it. We subsequently add the elements of $E_j$ to the first feasible bin. It is left to show that there always exists a feasible bin during this process. Suppose $E_1,E_2,\dots,E_{x-1}$ are already placed in the bins, where $1\leq x< |\bC_k|(|\bC_k|-1)/2$, and we want to find a feasible bin for $E_{x}$. Assume by contradiction that there is no feasible bin for $E_{x}$. This shows that adding $E_{x}$ to any bin results in a parent set of bins in $\cG$ that contains at least $M + 1$ elements. Hence,
      \begin{equation}
    \label{eq:bound 2.2 pf1}
        t_k(M+1-|Pa_\cG(E_{x})|) \leq |Pa_\cG(E_1)| +\cdots |Pa_\cG(E_{x-1})|
    \end{equation}
    Since we have $Pa_{\cG}(E_{k,j}) \cap \bE_{k} = \varnothing$ for $1\leq j\leq |\bE_k|$, we also have 
    \begin{equation}
    \label{eq:bound 2.2 pf2}
        |Pa_\cG(E_1)| +\cdots |Pa_\cG(E_{x-1})| \leq (x-1)(n-2|\bE_k|)
    \end{equation}
     Note $x\leq |\bE_k|$ and $|Pa_\cG(E_{x})|\leq \max_{\{X,Y\}\in\bE_k}Pa_{\cG}(\{X,Y\}) $. Combine Equation (\ref{eq:bound 2.2 pf1}) and (\ref{eq:bound 2.2 pf2}), we have 
    \begin{equation*}
        \begin{aligned}
             1+\lfloor\frac{(|\bE_k|-1)(n-2|\bE_k|)}{M+1-\max_{[X,Y]^u\in\bE_k}Pa_{\cG}(\{X,Y\})}   \rfloor = t_k &\leq \frac{(x-1)(n-2|\bE_k|)}{M+1-|Pa_\cG(E_{x+1})|} \\
             &\leq \frac{(|\bE_k|-1)(n-2|\bE_k|)}{M+1-\max_{[X,Y]^u\in\bE_k}Pa_{\cG}(\{X,Y\})},
        \end{aligned}
    \end{equation*}
     which is a contradiction. This shows that it is feasible to add all the $E_x$ to the bins in a
 greedy manner under the stronger condition, in which any two edges in $\bE_k$ have disjoint parent sets in $\cG$.

Finally, similar to the construction of Propostions \ref{prop: step 2.2 min strong EC} and \ref{prop: step 2.2 strong EC}, construct $\cI_k = \{\bI_{k,k} = Pa_{\cG}(\bE_{k,j})\backslash\bN_{k,j}, \bI_{k,j}^\prime=Pa_{\cG}(\bE_{k,j})\backslash\bN_{k,j}\cup\bT_{k,j}\}_{j=1}^{t_k}$ where $\bN_{k,j}\subseteq\bV$ is the set of all endpoints of the edges in $\bE_{k,j}$ and $\bT_{k,j}$ is the set of tail nodes of the directed edges in $\bE_{k,j}$. Then set $\cI = \cup_{k=1}^{K}\cI_k$. In this way, $\cI$ is a non-adjacent separating system and bounded by $M$ with $2\sum_{k=1}^{K}1+\lfloor\frac{(|\bE_k|-1)(n-2|\bE_k|)}{M+1-\max_{[X,Y]^u\in\bE_k}Pa_{\cG}(\{X,Y\})}  \rfloor$ elements.
\end{proof}

\begin{remark}
    The proof of Theorem \ref{thm: boun 2.2} is constructive and associated with the strong edge coloring. If we disregard the NP-hardness of the computational problem, $K$ can be set to $\chi_s(\cG^u)$, and $\{\bE_1,\dots,\bE_{\chi_s(\cG^u)}\}$ corresponds to the minimal strong edge coloring, similar to Proposition \ref{prop: step 2.2 min strong EC}. However, if the NP-hardness cannot be ignored, as mentioned in Property \ref{prop: step 2.2 strong EC}, there exists a greedy algorithm that returns a collection $\{\bE_1,\dots,\bE_k\}$ which forms a strong edge coloring and $K = 2d^2$ where $d$ is the max degree of $\cG^u$.  Furthermore, if we let $M\to\infty$, Theorem \ref{thm: boun 2.2} degenerates to Propositions \ref{prop: step 2.2 min strong EC} and \ref{prop: step 2.2 strong EC}.
\end{remark}

According to Theorem \ref{thm: boun 2.2}, we can construct an adjacent separating system $\cI$ with each element bounded by $M$.  We only need to modify Algorithm \ref{alg: 2.2} to use the bounded non-adjacent separating system. It is straightforward to verify that the modified algorithm obtains $\bB^{AS}$ of $\cG$ by performing experiments on the elements of $\cI$. Similar to the unbounded algorithm, we do not discuss the double adjacent bidirected edges $\bB^{AD}$ here.

So far, through the modified Algorithms \ref{alg: 1.1} to \ref{alg: 2.2}, we have successfully identified all directed edges and the majority of bidirected edges in the DMG $\cG$, except for double adjacent bidirected edges using the bounded experiments. 

\begin{corollary}
\label{coro: Upper bounded DB cI}
    Suppose $M \geq \max\{\max_{\{X,Y\}\in\bV,X\neq Y }|Pa_{\cG}(X\cup Y) |, |\rmT_{l+1}^{\cG}|_n + \zeta^{l+1,\cG}_{max} -1\}$. For a DMG $\cG=(\bV,\bD,\bB^{N}\cup\bB^{AS}\cup\bB^{AD})$, if the NP-hardness can be ignored, modified Algorithms \ref{alg: 1.1} to \ref{alg: 2.2} together learn the $\cRD(\cG) = (\bV,\bD,\bB^{N}\cup\bB^{AS})$ with as most 
    \begin{equation}
    \begin{aligned}
        \lceil\frac{n}{M}\rceil\lceil \log_{\lceil\frac{n}{M}\rceil}n\rceil &+ \sum_{k=1}^{l+1}\zeta_{max}^{k,\cG} +\zeta_{max}^{l+1,\cG}\lfloor\frac{n-|\rmT_{l+1}^{\cG}|_n - \zeta_{max}^{l+1,\cG}-1}{M-|\rmT_{l+1}^{\cG}|_n - \zeta_{max}^{l+1,\cG}+2} \rfloor \\
        &+\sum_{k=1}^{cc(\cG^{uc})}1+\lfloor\frac{(\frac{|\bC_k|(|\bC_k|-1)}{2}-1)(n-|\bC_k|)}{M+1-\max_{X,Y\in\bC_k,X\neq Y}Pa_{\cG}(\{X,Y\})}  \rfloor \\
        &+2\sum_{k=1}^{\chi_s(\cG)}1+\lfloor\frac{(|\bE_k|-1)(n-2|\bE_k|)}{M+1-\max_{[X,Y]^u\in\bE_k}Pa_{\cG}(\{X,Y\})}  \rfloor
        \end{aligned}
    \end{equation}
    elements and each experiment is bounded by $M$. Here $\{\bC_1,\dots,\bC_{cc(\cG^{uc})}\}$ is the minimum edge clique covering of $\cG^{uc}$ and $\{\bE_1,\dots,\bE_{\chi_s(\cG^u)}\}$ is the minimum strong edge coloring of $\cG^u$. If the NP-hardness can not be ignored, the modified Algorithms \ref{alg: 1.1} to \ref{alg: 2.2} together learn the $\cRD(\cG) = (\bV,\bD,\bB^{N}\cup\bB^{AS})$ with as most 
    \begin{equation}
        \begin{aligned}
        \lceil\frac{n}{M}\rceil\lceil \log_{\lceil\frac{n}{M}\rceil}n\rceil &+ \sum_{k=1}^{l+1}\zeta_{max}^{k,\cG} +\zeta_{max}^{l+1,\cG}\lfloor\frac{n-|\rmT_{l+1}^{\cG}|_n - \zeta_{max}^{l+1,\cG}-1}{M-|\rmT_{l+1}^{\cG}|_n - \zeta_{max}^{l+1,\cG}+2} \rfloor \\
        &+\sum_{k=1}^{\lfloor 4e^2(d+1)^2\ln(n) \rfloor}1+\lfloor\frac{(\frac{|\bC_k|(|\bC_k|-1)}{2}-1)(n-|\bC_k|)}{M+1-\max_{X,Y\in\bC_k,X\neq Y}Pa_{\cG}(\{X,Y\})}  \rfloor \\
        &+2\sum_{k=1}^{2d^2}1+\lfloor\frac{(|\bE_k|-1)(n-2|\bE_k|)}{M+1-\max_{[X,Y]^u\in\bE_k}Pa_{\cG}(\{X,Y\})}  \rfloor
        \end{aligned}
    \end{equation}
    elements with probability $1-\frac{1}{n^2}$, where $n$ is the number of nodes and $d$ is the max degree of $\cG^{u}$. Here  $\{\bC_1,\dots,\bC_{\lfloor 4e^2(d+1)^2\ln(n) \rfloor}\}$ is the an edge clique covering of $\cG^{uc}$ with probability $1-\frac{1}{n^2}$ and $\{\bE_1,\dots,\bE_{2d^2}\}$ is a strong edge coloring of $\cG^u$.
\end{corollary}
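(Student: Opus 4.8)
The plan is to establish this corollary as a direct aggregation of the four bounded-size results proved in Sections~\ref{sec:bou 1.1}--\ref{sec:bou 2.2}, since the four modified algorithms run sequentially and each recovers a disjoint portion of $\cRD(\cG)$ using its own separating system. First I would note that Step~0 consumes no experiments, so the entire experimental cost is carried by the four subsequent steps. I would then fix the global size budget $M$ and observe that the hypothesis $M \geq \max\{\max_{\{X,Y\}\in\bV, X\neq Y}|Pa_\cG(X\cup Y)|, |\rmT_{l+1}^\cG|_n + \zeta_{max}^{l+1,\cG}-1\}$ is exactly the lower bound required by each of Theorems~\ref{thm: boun 1.2}, \ref{thm: boun 2.1}, and~\ref{thm: boun 2.2} (and is trivially compatible with the $(n,M)$-separating system of Step~1.1), so all four constructions are simultaneously feasible under the same $M$.

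Next I would assemble the correctness argument. Writing $\cI = \cI_{1.1}\cup\cI_{1.2}\cup\cI_{2.1}\cup\cI_{2.2}$ for the union of the four separating systems, I would invoke: Corollary~\ref{coro: step 1.1} to certify that modified Algorithm~\ref{alg: 1.1} on the $(n,M)$-separating system recovers the descendant sets and SCCs, hence the SCC-Anc partition $\bbT^\cG$ (here the only thing to check is that an $(n,M)$-separating system separates every ordered pair, which is strictly stronger than the colored requirement, so Lemmas~\ref{lem:step 1.1 1} and~\ref{lem:step 1.1 2} apply a fortiori); Lemma~\ref{lem: step 1.2} to certify that modified Algorithm~\ref{alg: 1.2} on the bounded SCC-Anc separating system of Theorem~\ref{thm: boun 1.2} recovers all directed edges $\bD$; Lemma~\ref{lem: step 2.1} to certify that modified Algorithm~\ref{alg: 2.1} on the bounded non-adjacent separating system of Theorem~\ref{thm: boun 2.1} recovers $\bB^N$; and Lemma~\ref{lem:step2.2} to certify that modified Algorithm~\ref{alg: 2.2} on the bounded adjacent separating system of Theorem~\ref{thm: boun 2.2} recovers $\bB^{AS}$. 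Together these yield exactly $\cRD(\cG)=(\bV,\bD,\bB^N\cup\bB^{AS})$; the double-adjacent edges $\bB^{AD}$ are intentionally omitted, as explained in the remark following Lemma~\ref{lem:step2.2}.

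For the experiment count I would simply sum the four cardinalities, using Proposition~\ref{prop: boun 1.1} for $|\cI_{1.1}|$ and Theorems~\ref{thm: boun 1.2}, \ref{thm: boun 2.1}, \ref{thm: boun 2.2} for $|\cI_{1.2}|$, $|\cI_{2.1}|$, $|\cI_{2.2}|$, giving $|\cI|\leq|\cI_{1.1}|+|\cI_{1.2}|+|\cI_{2.1}|+|\cI_{2.2}|$. Since each of the four systems has every element of size at most $M$, so does their union; the fact that the steps are run separately rather than intersected or merged is what makes both the additive bound and the shared size constraint valid. Plugging in the minimum edge clique covering $\{\bC_1,\dots,\bC_{cc(\cG^{uc})}\}$ and the minimum strong edge coloring $\{\bE_1,\dots,\bE_{\chi_s(\cG^u)}\}$ yields the first displayed bound.

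Finally, for the NP-hard case I would swap the two intractable ingredients for their tractable surrogates: the randomized edge clique covering of Proposition~\ref{prop: non-ad separating random}, which returns a valid cover of $\lfloor 4e^2(d+1)^2\ln(n)\rfloor$ cliques with probability at least $1-\tfrac{1}{n^2}$, and the greedy strong edge coloring of Proposition~\ref{prop: step 2.2 strong EC}, which deterministically uses at most $2d^2$ colors. The only subtlety, and the step I expect to be the main (if mild) obstacle, is the probability bookkeeping: I must argue that the sole source of randomness is the covering step, since Steps~1.1 and~1.2 are deterministic and the strong-edge-coloring surrogate is deterministic as well. Hence no union bound over several random events is needed, and the overall success probability remains $1-\tfrac{1}{n^2}$, which gives the second displayed bound and completes the proof.
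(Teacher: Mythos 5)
Your proposal is correct and matches the paper's intended argument: the corollary is a direct aggregation of Proposition \ref{prop: boun 1.1} and Theorems \ref{thm: boun 1.2}, \ref{thm: boun 2.1}, and \ref{thm: boun 2.2} for the experiment counts, combined with the correctness guarantees carried over from the unbounded setting, with the NP-hard case obtained by substituting the randomized clique covering and greedy strong edge coloring. Your observation that the clique-covering step is the sole source of randomness (so the $1-\frac{1}{n^2}$ probability needs no union bound) is consistent with how the paper states the result.
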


\section{Conslusion}
\label{sec:conclusion}

In this paper, we propose an experimental design framework for learning the structure of a DMG that may contain both cycles and bidirected edges. The DMG corresponds to a simple SCM, where cycles represent feedback loops and bidirected edges capture unobserved confounding.
We theoretically established worst-case lower bounds on both the maximum experiment size per round and the total number of experiments required to identify all directed edges and non-adjacent bidirected edges in the DMG. To approach these limits, we developed both unbounded and bounded versions of our experimental design framework. Both versions integrate CI tests and do-see tests to accurately recover all directed edges and the majority of bidirected edges. Moreover, the number and size of experiments required by our algorithms are provably close to the derived lower bounds.

In addition, several directions remain for future work:
\begin{itemize}
    \item \textbf{Soft interventions:} In our current framework, all interventions are hard interventions, which completely sever the relationship between the intervened node and its parents. An important extension is to consider soft interventions, which instead modify the conditional distribution between a node and its parents without entirely removing the dependency. Designing effective experiments under soft interventions introduces new methodological challenges.
    \item \textbf{Adaptive experimental design:} Although our proposed framework proceeds in three major stages, it is fundamentally non-adaptive, meaning that all interventions are predetermined in advance. A natural extension is to develop adaptive intervention strategies, where each experiment is designed based on the outcomes of previous ones. How to design such adaptive procedures for DMGs remains an open and promising research direction.
    \item \textbf{Instance-specific lower bounds:} The current lower bounds for identifying directed edges and non-adjacent bidirected edges are established in the worst-case setting. A valuable direction for future work is to derive instance-specific or MEC-dependent lower bounds that more precisely characterize the identification complexity of a given DMG instance.
    \item \textbf{Double-adjacent bidirected edges:} Our current methods are unable to recover double-adjacent bidirected edges, i.e., bidirected edges between nodes that are also connected by directed edges in both directions. Although we argue that these edges may have limited impact on downstream causal estimation tasks, developing principled methods to identify them remains an important open problem.
    
\end{itemize}

\bibliography{main}
\end{document}